\documentclass[12pt]{amsart}
\usepackage{amsfonts}
\usepackage{amsmath}
\usepackage{amsthm}
\usepackage{bm}
\usepackage{bbm}
\usepackage{graphicx}
\usepackage{float}
\usepackage{enumerate}
\usepackage{natbib}
\usepackage{pdfsync}

\usepackage{subfigure}

\usepackage{mathtools}

\addtolength{\topmargin}{0in}
\setlength{\textheight}{8.5in}
\setlength{\textwidth}{6.5in}
\addtolength{\oddsidemargin}{-.8in}
\addtolength{\evensidemargin}{-.8in}
\numberwithin{equation}{section}

\pagestyle{myheadings}
\allowdisplaybreaks

\theoremstyle{plain}
\newtheorem{theorem}{Theorem}[section]
\newtheorem{lemma}[theorem]{Lemma}
\newtheorem{proposition}[theorem]{Proposition}
\newtheorem{corollary}[theorem]{Corollary}
\theoremstyle{definition}

\newtheorem{remark}[theorem]{Remark}

\newcommand{\bM}{{\bf M}}

\newcommand{\bx}{{\bf x}}
\newcommand{\bX}{{\bf X}}
\newcommand{\bN}{{\bf N}}
\newcommand{\by}{{\bf y}}

\newcommand{\bZ}{{\bf Z}}
\newcommand{\ba}{{\bf a}}
\newcommand{\boldb}{{\bf b}}

\newcommand{\bu}{{\bf u}}

\newcommand{\m}{{\bf m}}

\newcommand{\bc}{{\bf c}}

\newcommand{\BX}{{\bf X}}

\newcommand{\BW}{{\bf W}}
\newcommand{\BD}{{\bf D}}

\newcommand{\BY}{{\bf Y}}

\newcommand{\BM}{{\bf M}}

\newcommand{\0}{\mathbf{0}}

\newcommand{\one}{\mathbf{1}}

\newcommand{\bbP}{\mathbb{P}}
\newcommand{\bbE}{\mathbb{E}}

\newcommand{\bbS}{\mathbb{S}}
\newcommand{\bbr}{\mathbb{R}}

\newcommand{\bbs}{\mathbb{S}}

\newcommand{\bma}{\begin{matrix*}[r]}
\newcommand{\ema}{\end{matrix*}}

\newcommand{\cip}{\stackrel{P}{\to}}








 \title{Spectral Learning of Multivariate Extremes}
 
 \author{Marco Avella Medina}
 \address{Department of Statistics \\
 	Columbia University}
 \email{marco.avella@columbia.edu}
 
 \author{Richard A. Davis}
 \address{Department of Statistics \\
 	Columbia University}
 \email{rdavis@stat.columbia.edu}
 
 \author{Gennady Samorodnitsky}
 \address{School of Operations Research and Information Engineering\\
 	Cornell University}
 \email{gs18@cornell.edu}
 
 \numberwithin{equation}{section}
 \thanks{This research was partially
 	supported by  NSF grants DMS-2015379 (Avella Medina and Davis) at Columbia and DMS-2015242 (Samorodnitsky) at Cornell.}

 \keywords{Angular measure, heavy tails, Laplacian, nearest neighbor graphs, regular variation, spectral clustering}%

\begin{document}
	
\begin{abstract}
We propose a spectral clustering algorithm for analyzing the dependence structure of multivariate extremes. More specifically, we focus on the asymptotic  dependence of multivariate extremes characterized by the angular or spectral measure  in extreme value theory. Our work studies the theoretical performance of spectral clustering based on a random $k$-nearest neighbor graph constructed from an extremal sample, i.e., the angular part of random vectors for which the radius exceeds a large threshold.  In particular, we derive the asymptotic distribution of extremes arising from a linear factor model and prove that,  under certain conditions, spectral clustering can consistently identify the clusters of extremes arising in this model. Leveraging this result we propose a simple consistent estimation strategy for learning the angular measure. Our theoretical findings are complemented with numerical experiments illustrating the finite sample performance of our methods.

\end{abstract}\maketitle
	
\section{Introduction} \label {sec:intro}

Multivariate extremes arise when one or more of rare extreme events occur simultaneously. They are of paramount importance for understanding environmental risks such as fires or droughts since they are driven by joint extremes of a number of meteorological variables. Similarly, catastrophic financial events are also of a multivariate nature in financial systems driven by core institutions that are connected. In the above examples one is precisely interested in modeling the dependence between rare individual extremes. Multivariate extreme value theory is an active research area that provides tools for  modeling such events. 

The dependence structure between extreme observations can be complex and typically characterized by different notions of dependence from the ones arising in the non-extreme world. For this reason recent work has sought to rethink various notions of sparsity for extremes \citep{goixetal2017,meyerandwintenberger2019,simpsonetal2020}, concentration inequalities \citep{goixetal2015,clemencconetal2021}, conditional independence \citep{belkinandniyogi2003} and unsupervised learning \citep{chautru2015,cooleyandthibaud2019,janssenandwan2020,dreesandsabourin2021}. See also \cite{engelkeandivanovs2021} for a review of recent developments in the literature of multivariate extremes. Much of this line of research tries to connect important ideas from modern statistics and machine learning to the context of multivariate extremes.  Our work falls in this category as we propose spectral clustering as a tool for learning the dependence structure of multivariate extremes.

Spectral clustering \citep{vonluxburg2007} and related techniques are very popular and have found success in various applications such as parallel computing \citep{hendrickson1995improved,vandriesscheandroose1995}, image segmentation \citep{shi2000normalized} and community detection \citep{roheetal2011,leiandrinaldo2015,zhouandamini2019}. The central idea of spectral clustering is to use the eigenvectors of the graph Laplacian matrix constructed from an affinity graph between sample points in order to find clusters in the data. Typically these are obtained by a $K$-means algorithm that take these  graph Laplacian eigenvectors as input. We follow this same principle but use as input to our algorithm  the angular parts of the observations whose norms exceed a certain large threshold i.e.,   a standard spectral clustering algorithm is applied to the graph built over the angular parts of these extreme observations. 

Because of the nature of the extreme events that we study, we leverage tools from multivariate extreme value theory for analyzing the theoretical properties of our spectral clustering algorithm.  In particular, we use multivariate regular variation as a modeling tool since it is closely connected to asymptotic characterizations of multivariate extreme value distributions \citep{resnick2007, resnick2008}. While a precise definition of regular variation is provided in Section \ref{sec:rv}, the basic idea is that a $d$-dimensional random vector $\bX$ is regularly varying if  the distribution of the angular part  $\bX/\|\bX\|$  stabilizes (i.e., converges in distribution) as the radial part $\|\bX\|$ becomes large and that the radial part has Pareto-like tails.  The dependence structure is then governed by the asymptotic distribution of the  limiting angular part.  In this paper, we consider clustering of the angular parts, which live on a $d$-dimensional unit sphere, of   observations with large radii.  Learning this measure is challenging because of its multivariate nature and because only a small fraction of the data is considered to be extremes, i.e., those observations whose radii are sufficiently large, are retained for estimation. In contrast, standard modeling approaches built on  parametric models are hard to extend to larger dimensions because of  their lack of flexibility and computational complexity \cite{davisonandhuser2015}.

We will explore the use of spectral clustering for learning the angular measure. The performance of the algorithm  critically depends on the properties of the random graph that it takes as input. We will focus on $k$-nearest neighbor graphs and hence a decision has to be made about the size of $k$ for  constructing the random graph. In this work we study this question by focusing on a linear factor model. We characterize the asymptotic distribution of the multivariate extremes generated from this model and show that their dependence structure is captured by a discrete angular measure in the limit. We establish a rate of convergence for the angular components of the extremes to their discrete limits.  This is a key step in deriving a theoretically valid range of   numbers of $k$-nearest neighbors for constructing a nearest  neighbor graph that one should consider in order to guarantee that spectral clustering can be successfully used to learn the asymptotic angular measure.

From a methodological perspective, the work of \cite{janssenandwan2020} is perhaps the closest to  our approach since they also provide a clustering algorithm for extremes. Their method is however very different as it  is based on spherical $k$-means \citep{dhillon&modha2001}, a variant of $k$-means that replaces the usual square loss minimization by an angular dissimilarity measure minimization.  The data-generating model we consider is a natural factor model  that can be viewed as a generalization of the max-linear model considered in \cite{janssenandwan2020}. We 
characterize the limiting distribution of the extremes in this model. We rigorously study the extremal nearest neighbor graphs and show that their connected components can identify the clusters of extremes of our factor model.  By construction our algorithm is computationally tractable and model agnostic, so it has a  potential of working well beyond the setting covered by our theory.

The rest of the paper is organized as follows. Section 2 provides some background notions from  multivariate regular variation necessary for our analysis. Section 3 introduces the proposed spectral clustering algorithm for extremes. In Section 4 we introduce our linear factor model (LFM) and derive the asymptotic distribution of the angular components $\bX/\|\bX\|$ of observations with high threshold exceedances i.e., observations $\bX$ with very large $\|\bX\|$. In Section 5 we study the behavior of $k$-nearest neighbor graphs constructed using a sample of extremes. Section 6 contains a number of numerical examples that illustrate our proposed method.  We show  in Section \ref{sec:lfm} that for a large range of values of $k$ the connected components of the nearest neighbor graph consistently identify the clusters of extremes arising from the linear factor model. This includes an examination of LFM with added noise.  The spectral clustering method is still able to estimate the signal reasonably well.  The good numerical performance of the method in the LFM plus noise context suggests that it might work well in more general settings.  The spectral clustering method is also applied to an environmental data set consisting of daily measurements of five air pollutants over both winter and summer seasons.  The analysis suggests that in modeling the extremes, a LFM model with 5 clusters seems appropriate. 
Moreover, viewed as a time series, the extremal dependence for O3 and NO2 does not extend beyond a second-day time lag.  Proofs of the technical results in the body of the paper  and their complements are contained in the appendix.

\section{Background on multivariate regular variation} \label{sec:rv}

Regular variation  is often the starting point in modeling heavy-tailed data. We will make regular use of this assumption throughout this work.  A random vector $\bX=(X_1,\ldots,X_d)^\top$ is said to be regularly varying with exponent $\alpha>0$ if for some norm $\|\cdot\|$ on $\bbr^d$ and some probability measure $\Gamma$ on the unit
sphere $\bbs^{d-1}$ in $\bbr^d$, the following  limits hold:

\begin{eqnarray}\label{eq:mrv}
\lim_{r\to\infty} \bbP\bigl( \BX/\|\BX\|\in \cdot\,\mid \, \|\BX\|>r\bigr)\Rightarrow \Gamma(\cdot)
\end{eqnarray}
and
\begin{eqnarray}\label{eq:mrv2}
\lim_{r\to\infty}\frac{\bbP\bigl( \|\BX\|>rx\bigr)}{\bbP\bigl( \|\BX\|>r\bigr)}
=x^{-\alpha}
\end{eqnarray}
for all $x>0$, where $\Rightarrow$ denotes weak convergence on $\bbs^{d-1}$.  
 In other words,  the law of the angular component $\BX/\|\bX\|$  stabilizes  as the radial component becomes large, and the radial component is regularly varying (equation \eqref{eq:mrv2}) with index $\alpha$. The limit probability measure $\Gamma$ is called the {\it angular measure} (or spectral measure) and describes how likely the
extremal observations are to point in different
directions. In other words, the angular measure describes the limiting extremal angle for high threshold exceedances that correspond to large $\|\bX\|$.  The support of this measure is particularly
important since it shows which directions of the extremes are feasible
and which are not feasible. Throughout the rest of paper we will take $\|\cdot\|$ to be the Euclidean norm.

For example, if $\bX$ has a spherically symmetric distribution and the radius $\|\bX\|$ has a Pareto distribution with index $\alpha$, then $\bX$ is regularly varying with angular measure that is uniform on $\bbs^{d-1}$.  In this case, the random vector is equally likely to have extremes in  any direction  so we do not expect extremes to be clustered.  On the other hand, consider observations generated from a univariate MA(3) process given by $Y_t=Z_t+.5Z_{t-1}-.6 Z_{t-2}+1.5Z_{t-3}$, where $\{Z_t\}$ is an iid sequence of  symmetric stable random variables with index $\alpha=1.8$. The bivariate vector $\bX_t=(Y_t,Y_{t-1})^\top$ is regularly  varying and the scatter plot of $Y_{t}$ vs $Y_{t-1}$ is displayed in the left panel of Figure \ref{fig:MA3_data}.  Notice that for large values of $\|\bX_t\|$, the points align themselves on rays.  In the right panel is a plot of $\bX_t/\|\bX_t\|$ for those values of $\|\bX_t\|$ that exceed the 99.8\% empirical quantile of the radii and are grouped in 10 clusters. In this particular case, the spectral distribution consists of 10 point masses (5 pairs of symmetric point masses, indicated by arrows emanating from the origin).  
\begin{figure} [H]
\centering	
		\includegraphics[scale=0.5]{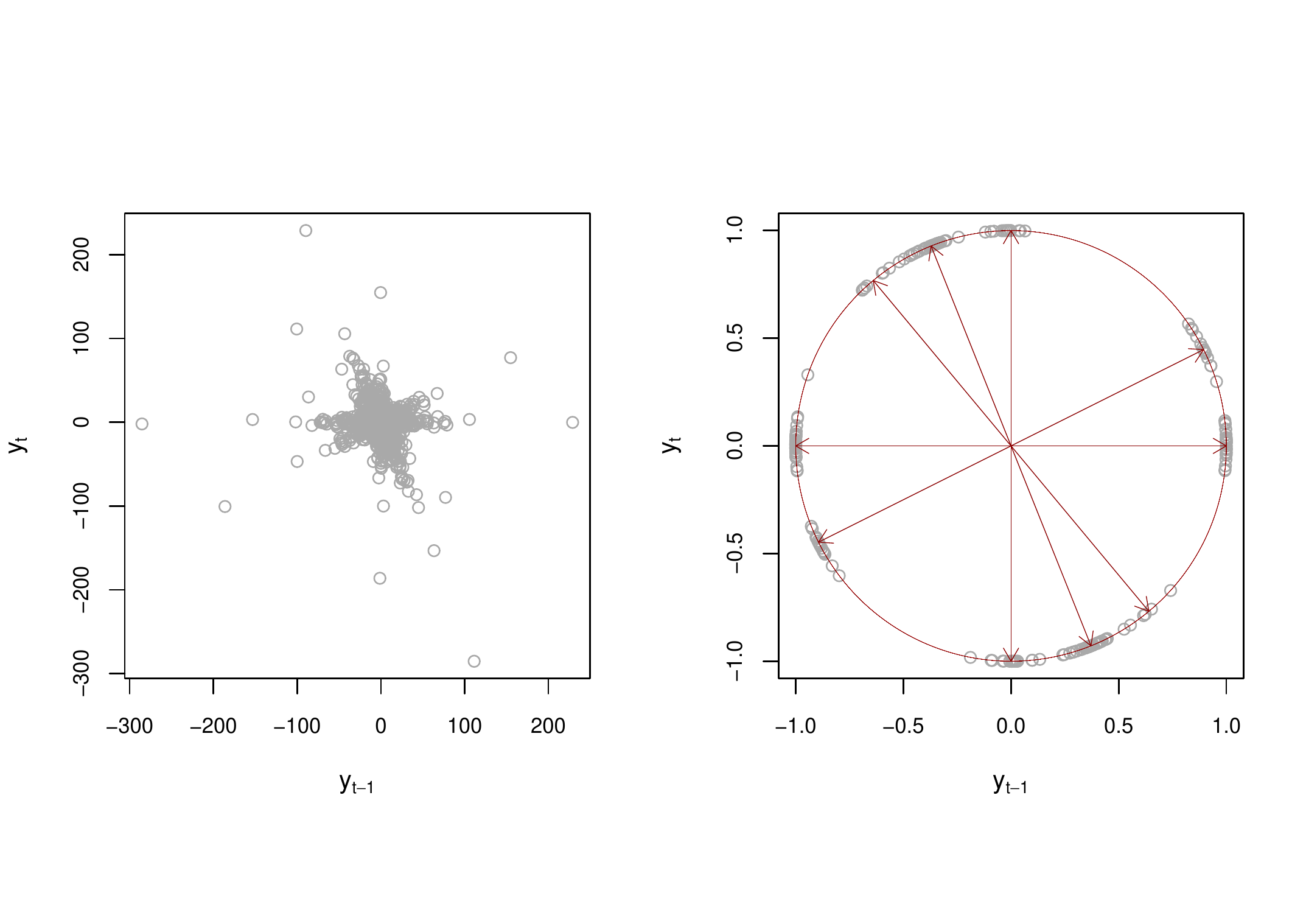}
	\caption{Scatter plot of $(Y_t,Y_{t-1})$ for an MA(3) process (left); spectral measure on $\bbs^1$ (right)}
	\label{fig:MA3_data}
\end{figure}
This simple example illustrates the challenge in finding meaningful low dimensional regions supporting the extremes.  
In a series of papers (see \cite{meyerandwintenberger2019}, \cite{dreesandsabourin2021}, \cite{cooleyandthibaud2019}), PCA-like analyses have been applied to finding low-dimensional subspaces that  contain the bulk of the support of the spectral measure.  As seen in this MA(3) example, such strategies might not be well suited  for extracting the key features in the extremes which do not necessarily live neatly
on  a small number of subspaces.  The approach taken here will be to use spectral clustering to learn the angular measure.  While machine learning ideas provide guidance for thinking about and  addressing multivariate extremes, the very nature of rare events will require us to borrow ideas from the theory of multivariate regular variation to analyze the extremal nearest neighbor graphs used by our algorithm.

\section{Spectral clustering}

In this section we describe how to construct random graphs based on a sample of extremes and how to use such graphs to find clusters of  extremes via a simple spectral clustering algorithm.

\subsection{Constructing random graphs} 

Starting with a sample of $d$-dimensional observations
$\BX_i, \, i=1,\ldots, n$, one first needs to 
identify the extremal part of the sample, on which the extremal
estimation will be performed. This is often done by selecting a
high threshold $u_n$ and assigning to the extremal part of the sample the 
observations $\BX_i$ satisfying $\|\BX_i\|>u_n$.

Assume that
$N_n$ observations $\BX_{i}$ (with $i$ in some set $\mathcal{V}_n$ of
cardinality $N_n$) are in the extremal part of the sample.  Associated with each $i\in \mathcal{V}_n$, is  the angular component  of the observation $\BX_i/\|\BX_i\|$ 
that lives on the unit sphere.  This allows us to think of the points in $\mathcal{V}_n$ as points on the unit sphere, forming nodes in a simple graph.  We connect
nodes $i_1$ and $i_2$ by an edge according to  a certain rule. One possible rule chooses  $\epsilon>0$ and connects  $i_1,i_2\in \mathcal{V}_n$ by 
an edge if 
\begin{align}\label{e:rule.1}
\rho\bigl( \BX_{i_1}/\|\BX_{i_1}\|,
\BX_{i_2}/\|\BX_{i_2}\|\bigr)\leq\epsilon\,.
\end{align}
One often uses the usual Euclidean distance $\rho$ on the unit sphere $\bbs^{d-1}$
in \eqref{e:rule.1}; but a distance function on the unit sphere could also be used.
The random set of edges  $\mathcal{E}_n$ created in this fashion define an $\epsilon$-neighborhood graph. 

In what follows, we will focus on  a different rule, leading to the \textit{$k$-Nearest Neighbor graphs}  ($k$-NN graphs).  This rule asserts that a  node $i_1\in \mathcal{V}_n$ is connected to a node $i_2\in \mathcal{V}_n$ if the point on the unit sphere corresponding to $i_2$ is among the $k$-nearest neighbors of  the point corresponding to $i_1$, according to some distance function. This definition leads to a directed graph because the neighborhood relationship is not symmetric. There are two natural ways of making this graph undirected. The first one is to  connect $i_1$ and $i_2$ with an undirected edge if either $i_1$ is among the $k$-nearest neighbors of $i_2$ or $i_2$ is among the $k$-nearest neighbors of $i_1$. The second one connects $i_1$ and $i_2$ only if both conditions are met, i.e., when $i_1$ and $i_2$ are mutual nearest neighbors  (hence the resulting graph is usually called \textit{mutual} $k$-nearest neighbor graph).  Our main results apply to both constructions. We work with weighted graphs,  where we assign to the edges a weight equal to the distance between the points on the unit sphere defining the nodes. More specifically, we will take as input to our algorithm the \textit{weighted adjacency matrix} $\BW=[w_{i_1i_2}]_{i_1,i_2\in \mathcal{V}_n}$ and 
\begin{equation}\label{eq:W_matrix}
 w_{i_1i_2}=\begin{cases} d\bigl( \BX_{i_1}/\|\BX_{i_1},\|,
\BX_{i_2}/\|\BX_{i_2}\|\bigr) & \mbox{if $i_1$ and $i_2$ are connected,} \\0, & \mbox{if $i_1$ and $i_2$ are not connected.}    
 \end{cases}
\end{equation}
When defining the weights in \eqref{eq:W_matrix} $d$ is a certain kernel; a typical example of such a kernel e.g., $d(\bx,\by)=\exp(-\|\bx-\by\|)$, is used in the examples of Section \ref{sec:numerics}. 
In the following subsections, we  describe our algorithm  and highlight the theoretical challenges.

\subsection{The algorithm}

The degree of a node $i\in \mathcal{V}_n$ is defined as  
\begin{equation*}
d_i=\sum_{j\in \mathcal{V}_n}w_{ij}.    
\end{equation*}
The degree matrix $D$ is defined as the diagonal matrix with diagonal elements $[d_i]_{i\in \mathcal{V}_n}$ and the normalized symmetric graph Laplacian matrix is defined as 
\begin{equation}\label{eq:Lap}
    L=I-D^{-1/2}WD^{-1/2},
\end{equation}
where $I$ is the identity matrix. The spectral clustering algorithm of \cite{ngetal2002} proceeds as follows:
\begin{enumerate}
\item
Compute the first $m$ eigenvectors $\bu_1,\dots,\bu_m$ of $L$ (i.e., the eigenvectors corresponding to the $m$ smallest eigenvalues of $L$)
and define an $N_n\times m$ matrix $U$ using these eigenvectors.
\item
Form an $N_n\times m$ matrix $V$ by normalizing the rows of $U$ to have unit norm.
\item
Treating each of the $N_n$ rows of $V$ as a vector in $\mathbb{R}^m$, cluster them into $m$ clusters $C_1,\dots,C_m$ using the $K$-means algorithm.
\item
 Assign the original points $\bX_{i}$  to cluster $C_j$ if and only if row $i$ of the matrix $V$ was assigned to cluster $C_j$.
\end{enumerate}
The motivation for this algorithm is described below.  %

\subsection{Connected components, Laplacian and $k$-nearest neighbor graph.}

 We say that a subset $\mathcal{A}\subset\mathcal{V}_n$ of the vertices of a graph is connected if any two vertices in $\mathcal{A}$  can be joined by a path of edges such that all intermediate vertices also lie in $\mathcal{A}$. If $\mathcal{A}$ is connected and there are no connections between $\mathcal{A}$ and $\mathcal{V}_n\setminus\mathcal{A}$, then $\mathcal{A}$ is called a connected component. It is well known that the number of connected components of a graph $G$ is related to the spectrum of its symmetric graph Laplacian. This is formalized in the following proposition \citep[Proposition 2]{vonluxburg2007}.

\begin{proposition}
\label{prop:Lap_spectrum}
  Let $\mathcal{G}=(\mathcal{V},\mathcal{E})$ be an undirected graph with non-negative weights. Then the multiplicity $m$ of the eigenvalue $0$ of $L$ equals the number of connected  components   $\mathcal{A}_1,\dots,\mathcal{A}_m$ in the graph. The eigenspace of eigenvalue $0$ is spanned by the indicator vectors $\delta_{\mathcal{A}_1},\dots,\delta_{\mathcal{A}_m}$ of those components. 
 \end{proposition}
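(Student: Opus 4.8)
This is a standard fact about graph Laplacians, and the plan is to follow the linear-algebra argument of \cite{vonluxburg2007}. The matrix $L$ of \eqref{eq:Lap} is symmetric (the graph being undirected, $W$ is symmetric) and positive semidefinite. Its eigenspace for the eigenvalue $0$ is, by definition, its kernel $\ker L$, and positive semidefiniteness is what lets us read off $\ker L$ from the associated quadratic form. The one computation that does the work is the identity, valid for every real vector $f=(f_i)_{i\in\mathcal{V}}$,
\begin{equation*}
f^\top L f \;=\; \frac12\sum_{i,j\in\mathcal{V}} w_{ij}\left(\frac{f_i}{\sqrt{d_i}}-\frac{f_j}{\sqrt{d_j}}\right)^{2}\;\geq\;0,
\end{equation*}
which I would obtain by setting $h=D^{-1/2}f$, writing $f^\top L f=\sum_i d_i h_i^2-\sum_{i,j} w_{ij}h_i h_j$, using $\sum_j w_{ij}=d_i$, and symmetrizing via $w_{ij}=w_{ji}$. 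This presupposes $d_i>0$ for every vertex, so that $D^{-1/2}$ is defined; this is automatic for the $k$-nearest neighbor graphs considered here, where every vertex has degree at least one, and in general one restricts to graphs without isolated vertices.

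Given the identity, positive semidefiniteness gives $Lf=0$ if and only if $f^\top L f=0$, and since the weights are nonnegative this holds if and only if $f_i/\sqrt{d_i}=f_j/\sqrt{d_j}$ for every edge $\{i,j\}$. Setting $g=D^{-1/2}f$, the condition says precisely that $g$ is constant along every edge, hence—chaining equalities along paths, using the definition of connectedness recalled just above—constant on each connected component $\mathcal{A}_1,\dots,\mathcal{A}_m$; conversely, any $g$ constant on each component yields $f=D^{1/2}g\in\ker L$. Thus $f\mapsto D^{-1/2}f$ is a linear isomorphism from $\ker L$ onto the $m$-dimensional space of vectors constant on each $\mathcal{A}_\ell$, which has basis $\delta_{\mathcal{A}_1},\dots,\delta_{\mathcal{A}_m}$. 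Consequently the multiplicity of the eigenvalue $0$ of $L$ is exactly the number $m$ of connected components; running the same argument for the unnormalized Laplacian $D-W=D^{1/2}LD^{1/2}$, which has the same kernel dimension, exhibits the indicator vectors $\delta_{\mathcal{A}_1},\dots,\delta_{\mathcal{A}_m}$ themselves as a basis of its kernel, which is the form in which the statement is usually quoted (for $L$ as in \eqref{eq:Lap} the corresponding basis is $D^{1/2}\delta_{\mathcal{A}_1},\dots,D^{1/2}\delta_{\mathcal{A}_m}$).

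I do not expect a genuine obstacle: the whole content is the single quadratic-form identity together with the elementary correspondence between ``constant on connected components'' and ``lies in the kernel of $L$.'' The only points that call for care are checking that $D^{-1/2}$ is well defined (absence of isolated vertices) and keeping track of the $D^{1/2}$ conjugation relating the eigenvectors of the normalized Laplacian in \eqref{eq:Lap} to the indicator vectors $\delta_{\mathcal{A}_\ell}$, which form the kernel of the unnormalized Laplacian $D-W$.
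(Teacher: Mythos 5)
Your proof is correct and is essentially the standard argument from \cite{vonluxburg2007}, which the paper cites without reproving: the quadratic-form identity, the reduction of $\ker L$ to functions constant on connected components, and the dimension count are exactly the intended route. Your parenthetical observation is also a genuine catch: the statement as quoted is von Luxburg's Proposition 2 for the \emph{unnormalized} Laplacian $D-W$, whereas $L$ in \eqref{eq:Lap} is the normalized symmetric Laplacian, for which the zero-eigenspace is spanned by $D^{1/2}\delta_{\mathcal{A}_1},\dots,D^{1/2}\delta_{\mathcal{A}_m}$ rather than by the indicators themselves (von Luxburg's Proposition 4); this does not affect the multiplicity count or its use in the clustering argument, but your bookkeeping of the $D^{1/2}$ conjugation is the correct reading.
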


It follows from this result that if   the spectral clustering algorithm is applied to a graph with the  number of connected components equal to the parameter $m$ in the algorithm, then 
the algorithm will identify the connected components. In Sections \ref{sec:spectrum} and \ref{sec:d2} we derive the asymptotic angular distribution of multivariate extremes arising from a linear factor model and provide the relevant asymptotic theory for the connected components of the $k_n$-nearest neighbor graph constructed using the angular components of the extremes.  Specifically, it will be rigorously  established that under certain conditions the spectral clustering of the resulting graph consistently estimates  the support of the spectral measure of multivariate extremes arising from this model.

\section{ Linear factor model and convergence of the angular components}\label{sec:spectrum}

  \bigskip

We now introduce the generative model that we will be studying in this paper. Let $\bX$ be a $d$-dimensional random vector defined by  the following linear factor model (LFM)
\begin{equation} \label{e:X}
 \bX=A\bZ\,, 
\end{equation}
where $A=[a_{ij}]_{i=1,\ldots d;j=1,\ldots p}$ is a $d\times p$ matrix of nonnegative elements and $\bZ$ is a $p$-dimensional random vector  of factors consisting of independent and identically distributed random variables, that are either nonnegative or symmetric, and 
have asymptotically Pareto tails, i.e.,
\begin{equation} \label{e:regvar.Z}
 \bbP(Z_1>z)\sim cz^{-\alpha},\ \mbox{as $z\to\infty$}
\end{equation}
for some $\alpha>0$ and $c>0$. Note that we write $f(x)\sim g(x)$ as $x\to\infty$ to mean that $\lim_{x\to\infty}f(x)/g(x)=1$. In the examples section, we will add noise to the model in \eqref{e:X} in which case, the model corresponds to a {\it standard} heavy-tailed linear factor model. 
 One can think of \eqref{e:X} as a {linear} version of the max-linear model studied in \cite{janssenandwan2020}, which has the same spectral distribution. We will also relax the assumption that the matrix $A$ is non-negative and will allow the noise to be symmetric.  In this case, the spectral distribution of the model is no longer constrained to the positive quadrant of $\bbs^{d-1}$. Related max-linear models have also been considered in the context of time series models for extremes \citep{davisandresnick1989,halletal2002} and more recently in the context of structural equation models \citep{gissiblandkluppelberg2018,kluppelbergandlauritzen2019}.   The asymptotic Pareto assumption in \eqref{e:regvar.Z} can be weakened to   regular variation, at least for the main results in this section.  However, this comes at the expense of assuming a more intrinsically complex set of conditions on the choice of  thresholding sequences.  Additional conditions, such as existence and properties of the density function of the noise, are required for the proofs of the results in Section \ref{sec:d2}.    

It follows immediately from \eqref{e:X} and \eqref{e:regvar.Z} (see, for example, \cite{basrak2002characterization}, Proposition A.1) that
$\BX$ is a multivariate regularly varying
random vector satisfying  \eqref{eq:mrv}; namely, 
\begin{equation} \label{e:multv.regvar}
\lim_{x\to\infty} \bbP\left( \ \frac{\BX}{\|\BX\|}\in\cdot~|~\|\BX\|>x,\right) 
\Rightarrow  \Gamma(\cdot)\,,
\end{equation} 
where $\Rightarrow$ denotes weak convergence on the unit sphere $\bbS^{d-1}$, 
 $\Gamma$ is a discrete probability measure on $\bbS^{d-1}$ that, in the nonnegative case, puts mass $\|\ba^{(k)}\|^\alpha/w$ at   $\ba^{(k)}/\|\ba^{(k)}\|$ for $k=1,\ldots,p$, where $\ba^{(k)}=(a_{1k},a_{2k},\ldots,a_{dk})^\top\,,
$
%
%
is the $k^{th}$ column of the matrix $A$ and 
\begin{equation} \label{e:total.w}
w=\sum_{k=1}^p \|\ba^{(k)}\|^\alpha\,.
\end{equation}
In other words, $\Gamma$ has the representation
\begin{equation} \label{e:spectral.m}
  \Gamma(\cdot) =w^{-1}\sum_{k=1}^p \|\ba^{(k)}\|^\alpha
       \delta_{\frac{\ba^{(k)}}{\|\ba^{(k)}\|}}(\cdot)\,,
\end{equation}
where $\delta_x(\cdot)$ is the Dirac measure that puts unit mass at $x$. On the other hand, in the symmetric case, $\Gamma$ puts mass $\|\ba^{(k)}\|^\alpha/2w$ at   $\pm\ba^{(k)}/\|\ba^{(k)}\|$ for $k=1,\ldots,p$. That is, the number of point masses in the symmetric case is double of that number in the nonnegative case. \footnote{Without much additional effort, one could consider the case that the tails of $Z_1$ are balanced in the sense that $\lim_{x\to\infty}\bbP(Z_1>x)/\bbP(|Z_1|>x)\to p^+\in [0,1].$  The location of the point masses for $\Gamma$ would be exactly the same as in the symmetric case, but with mass $p^\pm\|\ba^{(k)}\|^\alpha/w$ at $\bc_{k,\pm}$ defined in \eqref{eq:ck},  where $p^-=1-p^+$.}

Based on a random sample of iid copies of $\BX_1,\ldots,
\BX_n$ of $\BX$ as above, we construct an estimate of the  location of the point masses that comprise $\Gamma$, i.e., 
\begin{equation}\label{eq:ck1}
\bc_k=\frac{\ba^{(k)}}{\|\ba^{(k)}\|}\,,~k=1,\ldots,p
\end{equation}
in the nonnegative case, and
\begin{equation}\label{eq:ck}
\bc_{k,\pm}=\frac{\pm\ba^{(k)}}{\|\ba^{(k)}\|}\,,~k=1,\ldots,p
\end{equation}
in the symmetric case. 
   Note that these $\bc_k$ are not necessarily distinct.  Intuitively,  for large $n$, the {\it angular parts} $\BX_i/\|\BX_i\|$ of the sample  for which $\|\BX_i\|$ is large, will cluster around these $\bc_k$.  In fact, we formalize this intuition and provide a {\it rate of convergence} for the limiting extremal angles with high threshold exceedances in the next theorem. This will be a key ingredient in our convergence analysis of extremal $k$-NN graphs. For the ease of notation we will prove the following result in the nonnegative case; the symmetric case follows by simply doubling the number of points on the sphere.

\begin{theorem}\label{l:weak.limit}
If $(u_n)$ is a sequence converging to infinity
as $n\to\infty$, then, in the nonnegative case, for any $j=1,\ldots, p$, the conditional law of
	$$
	u_n\bigl( \bX/\|\bX\|-\bc_j\bigr) 
	$$
	given $\|\bX\|>u_n,\, Z_j>u_n/w^{1/\alpha}$, ($w$ defined in \eqref{e:total.w}) converges weakly to the law of 
	$$
	\frac{1}{\|\ba^{(j)}\|^{2}W_\alpha}
	\left(S^*_{1,-j},  \ldots, S^*_{d,-j}\right)^\top,\,
	$$
	where $W_\alpha$ has a Pareto distribution (i.e. $\bbP(W_\alpha>x)=x^{-\alpha},\,x\ge1$) that is independent of $Z_1,\ldots,Z_p$,
	\begin{equation}\label{eq:Lstar}
	S^*_{l,-j}=\sum_{i=1}^d\left(a_{ij}^2X_{l,-j}-a_{lj}a_{ij}X_{i,-j}\right)\,,
	\end{equation}
	and
	\begin{equation}\label{eq:L}
	X_{i,-j}=X_i-a_{ij}Z_j=\sum_{\substack{m=1,\\m\ne j}}^pa_{im}Z_m\,.
	\end{equation}

\end{theorem}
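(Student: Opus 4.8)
\medskip
\noindent\emph{Proof strategy.}\ The plan is to isolate the contribution of the large factor $Z_j$ and treat the remaining factors as a tight perturbation. Write $\bX=\ba^{(j)}Z_j+\bX_{-j}$ with $\bX_{-j}=(X_{1,-j},\dots,X_{d,-j})^\top$; note $\bX_{-j}$ depends only on $(Z_m)_{m\ne j}$ and is therefore independent of $Z_j$, and in the nonnegative case $\bX_{-j}\ge\0$ coordinatewise, so $z\mapsto\|\ba^{(j)}z+\bX_{-j}\|$ is nondecreasing on $z\ge0$. Regarding $X_l/\|\bX\|$ as a smooth function of $1/Z_j$ and expanding $\|\bX\|=\bigl(Z_j^2\|\ba^{(j)}\|^2+2Z_j\langle\ba^{(j)},\bX_{-j}\rangle+\|\bX_{-j}\|^2\bigr)^{1/2}$ in powers of $1/Z_j$, one gets for each coordinate $l$
\begin{equation*}
 u_n\Bigl(\frac{X_l}{\|\bX\|}-\frac{a_{lj}}{\|\ba^{(j)}\|}\Bigr)
 =\frac{u_n}{Z_j}\cdot\frac{S^*_{l,-j}}{\|\ba^{(j)}\|^{3}}+R_{n,l},\qquad |R_{n,l}|\le C(\bX_{-j})\,\frac{u_n}{Z_j^{2}},
\end{equation*}
with $S^*_{l,-j}$ as in \eqref{eq:Lstar} and $C(\cdot)$ bounded on bounded sets. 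On the conditioning event $Z_j>u_n/w^{1/\alpha}$ we have $u_n/Z_j^{2}=(u_n/Z_j)(1/Z_j)<w^{2/\alpha}/u_n\to0$ deterministically, so once $\bX_{-j}$ is shown to be conditionally tight, $R_{n,l}=o_P(1)$.

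It then remains to identify the joint conditional limit of $(u_n/Z_j,\bX_{-j})$: I would show that, given $\{\|\bX\|>u_n\}\cap\{Z_j>u_n/w^{1/\alpha}\}$,
\begin{equation*}
 \bigl(u_n/Z_j,\ \bX_{-j}\bigr)\ \Rightarrow\ \bigl(\|\ba^{(j)}\|/W_\alpha,\ \bX_{-j}\bigr),
\end{equation*}
where on the right $W_\alpha$ is standard Pareto, independent of $\bX_{-j}$, and $\bX_{-j}$ carries its unconditional law. To prove this I would condition on $\bX_{-j}=\bx$: by the monotonicity noted above, the conditioning event becomes $\{Z_j>\max(u_n/w^{1/\alpha},\phi_\bx(u_n))\}$, where $\phi_\bx(u_n)$ solves $\|\ba^{(j)}\phi+\bx\|=u_n$ and satisfies $\phi_\bx(u_n)/u_n\to1/\|\ba^{(j)}\|$ for each fixed $\bx$; since $\|\ba^{(j)}\|\le w^{1/\alpha}$, the effective lower threshold on $Z_j$ is $\phi_\bx(u_n)(1+o(1))\sim u_n/\|\ba^{(j)}\|$. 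Using \eqref{e:regvar.Z} and dominated convergence in $\bx$ (with the uniform bound $u_n^\alpha\bbP(Z_j>t)\le2cw$ for $t\ge u_n/w^{1/\alpha}$ and $n$ large, and observing that the excluded set $\{\|\bX_{-j}\|>u_n\}$ contributes only $O(u_n^{-2\alpha})$ since it forces a second large factor) one obtains $\bbP\bigl(\|\bX\|>u_n,\,Z_j>u_n/w^{1/\alpha}\bigr)\sim c\|\ba^{(j)}\|^\alpha u_n^{-\alpha}$ and, for every bounded continuous $h$,
\begin{equation*}
 \bbE\bigl[h(u_n/Z_j,\bX_{-j})\bigm|\|\bX\|>u_n,\,Z_j>u_n/w^{1/\alpha}\bigr]
 \longrightarrow \bbE\Bigl[h\bigl(\|\ba^{(j)}\|/W_\alpha,\bX_{-j}\bigr)\Bigr],
\end{equation*}
the independence and the law of $\|\ba^{(j)}\|/W_\alpha$ dropping out of the computation $\bbP\bigl(Z_j\|\ba^{(j)}\|/u_n>t\bigm|\cdot\bigr)\to t^{-\alpha}$, $t\ge1$. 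Feeding this into the first display, the continuous mapping theorem (the map $(t,\bx)\mapsto t\,\|\ba^{(j)}\|^{-3}(S^*_{l,-j})_l$ is continuous and $t=u_n/Z_j$ ranges over the bounded set $(0,w^{1/\alpha})$) together with the converging-together lemma absorbing $R_{n,l}$ yields the asserted weak limit $\|\ba^{(j)}\|^{-2}W_\alpha^{-1}(S^*_{1,-j},\dots,S^*_{d,-j})^\top$.

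I expect this second step to be the main obstacle: rigorously showing that the extra conditioning on $\{\|\bX\|>u_n\}$ neither distorts the limiting law of $\bX_{-j}$ nor induces dependence between $\bX_{-j}$ and $u_n/Z_j$ in the limit. This is the single-big-jump phenomenon for heavy-tailed linear combinations --- on the relevant event the exceedance is produced by $Z_j$ alone, the alternative (a second factor of order $u_n$) having probability $o(u_n^{-\alpha})$ --- and the technical content is the dominated-convergence argument in $\bx$ above, including the boundary cases where $\|\bX_{-j}\|$ is itself of order $u_n$. The degenerate case $p=1$ is trivial, since then $\bX/\|\bX\|\equiv\bc_1$ and $S^*_{l,-1}\equiv0$, so one may assume $p\ge2$.
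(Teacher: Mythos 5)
Your proposal is correct and rests on the same two pillars as the paper's argument: (i) a single-big-jump identification of the conditional limit of the factors, and (ii) a continuous-mapping step converting that limit into the limit of the angular deviation. Your key lemma --- that $(u_n/Z_j,\bX_{-j})$ with $\bX_{-j}=(X_{1,-j},\dots,X_{d,-j})^\top$ converges, under the conditioning, to $(\|\ba^{(j)}\|/W_\alpha,\bX_{-j})$ with asymptotic independence and $\bX_{-j}$ retaining its unconditional law --- is exactly the content of the paper's preliminary step, which shows $(Z_1,\dots,Z_j/u_n,\dots,Z_p)$ converges conditionally to $(Z_1,\dots,W_\alpha/\|\ba^{(j)}\|,\dots,Z_p)$. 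The execution differs in two places, both to your credit in terms of transparency. For step (ii) the paper rationalizes each coordinate $V_l$ by multiplying numerator and denominator by a conjugate, so that the $Z_j^2$ terms cancel exactly in the numerator, and then applies the continuous mapping theorem to the resulting rational function; you instead Taylor-expand $\|\bX\|^{-1}$ in powers of $1/Z_j$ and carry an explicit remainder of order $u_n/Z_j^2$, which vanishes deterministically on $\{Z_j>u_n/w^{1/\alpha}\}$ once the tightness of $\bX_{-j}$ is in hand --- the leading term $S^*_{l,-j}/(\|\ba^{(j)}\|^{3}Z_j/u_n)$ agrees with the paper's. For step (i) the paper invokes closure properties of regularly varying sums and products to get $\bbP(Z_j>u_nx,\|\bX\|>u_n)\sim\bbP(Z_j>u_nx)$ for $x\ge 1/\|\ba^{(j)}\|$; you condition on $\bX_{-j}=\bx$, use monotonicity of $z\mapsto\|\ba^{(j)}z+\bx\|$ in the nonnegative case to reduce the event to a one-sided threshold $\phi_\bx(u_n)\sim u_n/\|\ba^{(j)}\|$, and conclude by dominated convergence --- a more hands-on route that makes explicit both the asymptotic independence and the fact that the effective threshold is $u_n/\|\ba^{(j)}\|$ rather than $u_n/w^{1/\alpha}$ (which is why the limit is a standard Pareto after rescaling). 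What remains in your write-up is only the routine domination estimate you already flag, together with the $O(u_n^{-2\alpha})$ bound on the event that a second factor is large; neither presents any obstacle.
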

\begin{proof}
	We start by observing that the conditional law of 
	\begin{equation}\label{eq:zp}
	\bigl( Z_1,\ldots, Z_{j-1},Z_j/u_n,Z_{j+1},\ldots,
	Z_{p}\bigr)^\top
	\end{equation}
	given $\|\bX\|^2>u_n^2$, $Z_{j}>u_n/w^{1/\alpha}$, converges
	in distribution, as $n\to\infty$, to the law of
	\begin{equation}\label{eq:zp2}
	\bigl( Z_1,\ldots, Z_{j-1},W_\alpha/w_j,Z_{j+1},\ldots,
	Z_{p}\bigr)^\top,
	\end{equation}
	where $w_j=\|\ba^{(j)}\|$ 
 The main ingredients in establishing this  result is to note that $Z_j^2$ is regularly varying with index $\alpha/2$ while for $i\ne j$, $Z_iZ_j$ is regularly varying with index $\alpha$ (see \cite{embrechts1980closure}; Theorem 3).  {Moreover, from the convolution closure property for sums of independent regularly varying random variables, it follows easily that
\begin{eqnarray*}
\bbP(Z_j>u_nx\,,\sum_{k=1}^d(\sum_{i=1}^pa_{ki}Z_i)^2>u_n^2,\, )&\sim&\bbP(Z_j>u_nx,\sum_{i=1}^pw_i^2Z_i^2>u_n^2)\\
&\sim&\sum_{i=1}^p\bbP(Z_j>u_nx,w_i^2Z_i^2>u_n^2)\\
&\sim&\bbP(Z_j>u_nx)\,.
\end{eqnarray*}} 
 {Now to finish the proof of \eqref{eq:zp2}, we use these relations and note that for  $x\ge 1/w_j$ (and hence $x\ge 1/w^{1/\alpha}$),}   
\begin{eqnarray*}
\bbP(Z_j>u_nx\,|\,\|\bX\|^2>u_n^2,\, Z_j^2>u_n^2/w^{2/\alpha})&\sim&\frac{\bbP(Z_j>u_nx,\sum_{i=1}^pw_i^2Z_i^2>u_n^2,Z_j^2>u_n^2/w^{2/\alpha})}{\bbP(\sum_{i=1}^pw_i^2Z_i^2>u_n^2,Z_j^2>u_n^2/w^{2/\alpha})}\\
&\sim&\frac{\bbP(Z_j>u_nx)}{\bbP(Z_j>u_n/w_j)}\\
&\to& w_j^{-\alpha}x^{-\alpha}\\
&=&\bbP(W_\alpha> w_jx)\,.
\end{eqnarray*}
We have
	\begin{align*}
	u_n\bigl( \BX/\|\BX\|-\bc_j\bigr) 
	&= u_n\left( \frac{\left( \sum_{m=1}^p a_{1m}Z_{m}, \ldots, \sum_{m=1}^p
		a_{dm}Z_{m}\right)^\top}{\|\bX\|}
	-\frac{\ba^{(j)}}{\|\ba^{(j)}\|}\right)\\
	&=(V_1,\ldots, V_d)^\top, 
	\end{align*}
	say. For $l=1,\ldots, d$, 
	\begin{align} \notag
	V_l &= u_n\frac{w_j\sum_{m=1}^pa_{lm}Z_{m}-a_{lj}
		\|\bX\|}{w_j\|\bX\|}\\
	&= u_n\frac{w_j^2 \left(\sum_{m=1}^pa_{lm}Z_{m}\right)^2-a_{lj}^2
		\|\bX\|^2}{w_j\|\bX\|\left(w_j\sum_{m=1}^pa_{lm}Z_{m}+a_{lj}
		\|\bX\|
		\right)}\,. \label{e:Vl.1}
	\end{align}
	We note that the numerator of \eqref{e:Vl.1} reduces to the following expression where the $Z_j^2$ terms  cancel out
	\begin{eqnarray}
	Num_l:&=&2a_{lj}Z_j \left(\|\ba^{(j)}\|^2X_{l,-j}-a_{lj}\sum_{i=1}^da_{i,j}X_{i,-j}\right) + \|\ba^{(i)}\|^2X_{l,-j}^2-a_{lj}^2 \sum_{i=1}^dX_{i,-j}^2 \nonumber \\
	&=&2a_{lj}Z_j \left(\sum_{i=1}^d\left(a_{ij}^2X_{l,-j}-a_{lj}a_{ij}X_{i,-j}\right)\right) + \|\ba^{(i)}\|^2X_{l,-j}^2-a_{lj}^2 \sum_{i=1}^dX_{i,-j}^2\,,  \label{e:num.1}
	\end{eqnarray}
where  $X_{i,-j}$ is as defined in \eqref{eq:L}.
	The denominator of \eqref{e:Vl.1} is handled in a similar way, but this time the $Z_j^2$ terms do not cancel. Indeed,
	since 
	\begin{align*}
	\|\bX\|&=\sqrt{\sum_{k=1}^d(a_{kj}^2Z_j^2+2a_{kj}Z_jX_{k,-j}+X_{k,-j}^2)} \\  &=\sqrt{\sum_{k=1}^da_{kj}^2Z_j^2}\sqrt{1+\frac{\sum_{k=1}^d(X_{k,-j}^2+2a_{kj}Z_jX_{k,-j})}{\sum_{k=1}^da_{kj}^2Z_j^2}}\\
	    &=w_j|Z_j|\sqrt{1+\frac{\sum_{k=1}^d(X_{k,-j}^2+2a_{kj}Z_jX_{k,-j})}{w_j^2Z_j^2}},
	\end{align*}
	 we can write
	$$
	Den_l = w_j^2|Z_j|(1+o_p(1))\left(w_ja_{lj}Z_j+R+a_{lj}w_j|Z_j|(1+o_p(1))\right),
	$$
	where $R$ is a linear function   in the variables $Z_1,\ldots, Z_p$ in which $Z_j$ does not appear, and $o_p(1)$ goes to zero in probability given $Z_j>u_n/w^{1/\alpha}$. We view 
	$$
	V_l=u_n\frac{Num_l}{Den_l} =
	\frac{Num_l/u_n}{Den_l/u_n^2}
	$$
	as a ratio of two continuous real-valued functions of the random vector in \eqref{eq:zp} (plus a vanishing term in $Dem_l$), 
	so that the random vector $(V_1,\ldots, V_d)^\top$ becomes a $d$-dimensional vector of such ratios. By  the continuous mapping theorem the random vector $(V_1,\ldots, V_d)^\top$ converges weakly to the $d$-dimensional vector of the ratios of the corresponding functions applied to the random vector in \eqref{eq:zp2}. These result in 
	$$
	2a_{lj}(W_\alpha/w_j)S^*_{l,-j} 
	$$
	in the case of $Num_l$ and in 
	$$
	W_\alpha^2\,2a_{lj}w_j \label{eq:dencon}
	$$
	in the case of $Dem_l$. 
	Putting everything together produces the claim.
\end{proof}

As explained above, the following corollary is an immediate consequence of Theorem \ref{l:weak.limit}.
\begin{corollary} \label{c:weaklim.sym}
Let a sequence of levels $(u_n)$ converging to infinity.  
Then, in the symmetric case, in the notation of Theorem \ref{l:weak.limit}, 
for any $j=1,\ldots, p$, the conditional law of
	$$
	u_n\bigl( \bX/\|\bX\|-\pm\bc_j\bigr) 
	$$
	given $\|\bX\|>u_n,\, \pm Z_j>u_n/w^{1/\alpha}$, converges weakly to the law of 
	$$
	\frac{\pm 1}{\|\ba^{(j)}\|^{2}W_\alpha}
	\left(S^*_{1,-j},  \ldots, S^*_{d,-j}\right)^\top.
	$$
\end{corollary}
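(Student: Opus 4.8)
The plan is to derive Corollary~\ref{c:weaklim.sym} directly from Theorem~\ref{l:weak.limit} by a symmetrization/reflection argument, exploiting the fact that in the symmetric case each factor $Z_j$ has a distribution symmetric about the origin, so conditioning on $\{Z_j > u_n/w^{1/\alpha}\}$ versus $\{-Z_j > u_n/w^{1/\alpha}\}$ are mirror images of one another.

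First, I would fix $j$ and treat the ``$+$'' case. Observe that the event $\{\|\bX\|>u_n,\, Z_j > u_n/w^{1/\alpha}\}$ is exactly the event appearing in Theorem~\ref{l:weak.limit}, and that theorem applies verbatim: even though $A$ may now have negative entries and the $Z_m$ are symmetric rather than nonnegative, nothing in the proof of Theorem~\ref{l:weak.limit} used sign constraints on $a_{im}$ or on $Z_m$ for $m\neq j$ — the algebraic cancellation of the $Z_j^2$ terms in $Num_l$, the factorization of $Den_l$ as $w_j|Z_j|(1+o_p(1))(\cdots)$, and the continuous-mapping argument all go through with $|Z_j|=Z_j$ on the conditioning event. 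Hence on $\{Z_j>u_n/w^{1/\alpha}\}$ the point $\bX/\|\bX\|$ concentrates near $\ba^{(j)}/\|\ba^{(j)}\| = \bc_j = +\bc_j$, and $u_n(\bX/\|\bX\| - \bc_j)$ converges weakly to $\|\ba^{(j)}\|^{-2}W_\alpha^{-1}(S^*_{1,-j},\ldots,S^*_{d,-j})^\top$, which is the asserted limit with the ``$+$'' sign.

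Next, for the ``$-$'' case I would introduce the reflected factor vector $\widetilde{\bZ} = (Z_1,\ldots,Z_{j-1},-Z_j,Z_{j+1},\ldots,Z_p)^\top$ and set $\widetilde{\bX}=A\widetilde{\bZ}$. Because the $Z_m$ are independent and $Z_j$ is symmetric, $\widetilde{\bZ}\eid \bZ$ and hence $\widetilde{\bX}\eid\bX$; moreover $\widetilde{\bX} = \bX - 2a_{lj}Z_j$ in the $l$-th coordinate, i.e. $\widetilde{\bX}$ flips the contribution of factor $j$. On the event $\{-Z_j>u_n/w^{1/\alpha}\}$ we have $\{Z_j' := -Z_j > u_n/w^{1/\alpha}\}$ for the reflected problem, so applying the ``$+$'' case to $\widetilde{\bX}$ (equivalently, applying Theorem~\ref{l:weak.limit} with $Z_j$ replaced by $-Z_j$) gives that, conditionally on $\|\bX\|>u_n$ and $-Z_j>u_n/w^{1/\alpha}$, the vector $\bX/\|\bX\|$ concentrates near $-\ba^{(j)}/\|\ba^{(j)}\| = -\bc_j$, and $u_n(\bX/\|\bX\| + \bc_j)$ converges weakly to the limit obtained from the Theorem with the sign of the $a_{lj}Z_j$ factor in $Num_l$ reversed. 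Tracking that sign change through \eqref{e:num.1}: the cross term $2a_{lj}Z_j(\cdots)$ picks up an overall minus, while $S^*_{l,-j}$ (which depends only on the $a_{im}$ and on $X_{i,-j}$, none of which involve $Z_j$) is unchanged; in the denominator $Den_l$ the leading factor $w_j|Z_j|$ and the $W_\alpha^2 2a_{lj}w_j$ limit are also unchanged since they involve $|Z_j|$. The net effect is precisely a global factor of $-1$, yielding the limit $-\|\ba^{(j)}\|^{-2}W_\alpha^{-1}(S^*_{1,-j},\ldots,S^*_{d,-j})^\top$, as claimed.

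I do not expect a serious obstacle here: the corollary is genuinely a corollary, and the only thing requiring care is bookkeeping of signs — making sure that ``replace $Z_j$ by $-Z_j$'' is propagated consistently through $Num_l$, $Den_l$, and the continuous-mapping limit, and confirming that $S^*_{l,-j}$ and $X_{i,-j}$ are invariant under this reflection because they do not involve $Z_j$. The one point worth stating explicitly is that Theorem~\ref{l:weak.limit}, although proved ``in the nonnegative case,'' in fact needs only the sign of $Z_j$ to be controlled on the conditioning event (so that $|Z_j|=Z_j$ there) and does not otherwise use nonnegativity; this is what licenses its application to $\widetilde{\bX}$ and hence delivers both signs simultaneously. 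Combining the two cases over $j=1,\ldots,p$ then gives the full statement of Corollary~\ref{c:weaklim.sym}.
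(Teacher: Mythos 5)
Your overall route --- apply Theorem \ref{l:weak.limit} verbatim for the ``$+$'' case and reduce the ``$-$'' case to it by the reflection $Z_j\mapsto -Z_j$ (equivalently, negating the $j$-th column of $A$) --- is exactly the reduction the paper intends; the paper gives no separate proof and simply declares the symmetric case to follow by doubling the atoms. Your key licensing observation is also right: the first stage of the theorem's proof (convergence of the conditional law of $(Z_1,\ldots,Z_j/u_n,\ldots,Z_p)$) uses only the right-tail condition \eqref{e:regvar.Z}, regular variation of $Z_i^2$ and $Z_iZ_j$, and convolution closure, none of which require nonnegativity of $A$ or of the $Z_m$, $m\neq j$.

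However, your sign bookkeeping in the ``$-$'' case has a concrete flaw. You retain the paper's denominator $Den_l=w_j\|\bX\|\bigl(w_jX_l+a_{lj}\|\bX\|\bigr)$ and assert it is unchanged ``since it involves $|Z_j|$''. It is not: $X_l=a_{lj}Z_j+X_{l,-j}$ depends on the sign of $Z_j$, and on $\{-Z_j>u_n/w^{1/\alpha}\}$ the leading part $w_ja_{lj}Z_j+a_{lj}w_j|Z_j|$ equals $0$ rather than $2a_{lj}w_j|Z_j|$, so the paper's expansion of $Den_l$ degenerates and cannot be quoted as is. The correct step is to rationalize $u_n\bigl(X_l/\|\bX\|+a_{lj}/w_j\bigr)$ with the other conjugate $w_jX_l-a_{lj}\|\bX\|\approx -2a_{lj}w_j|Z_j|$; then the numerator $2a_{lj}Z_jS^*_{l,-j}\approx -2a_{lj}|Z_j|S^*_{l,-j}$ and the denominator each acquire a minus sign, these cancel, and the pointwise limit is $+\|\ba^{(j)}\|^{-2}W_\alpha^{-1}S^*_{l,-j}$, not the global $-1$ you claim. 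Your conclusion (and the corollary as stated) is nonetheless correct as a statement about weak convergence, because $\left(S^*_{1,-j},\ldots,S^*_{d,-j}\right)$ is a linear function of the independent symmetric variables $Z_m$, $m\ne j$, hence symmetric in law and independent of $W_\alpha$, so the two signs describe the same distribution. You should invoke this distributional symmetry explicitly rather than rely on the erroneous cancellation.
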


\begin{remark} In the nonnegative case, 
 Theorem \ref{l:weak.limit} addresses the conditional convergence of $\BX/\|\BX\|$,
 given $\|\bX\|>u_n$, $Z_{j}>u_n/w^{1/\alpha}$,
 to the location $\bc_j$ of the corresponding atom of the spectral measure. It is also possible to address a conditional convergence to the mass $w^{-1}\|\ba^{(j)}\|^{ \alpha}$ of this atom. Indeed,
 \begin{equation} \label{e:mass.j}
 \bbP\bigl( Z_{j}>u_n/w^{1/\alpha}\big| \|\bX\|>u_n\bigr) \to w^{-1}\|\ba^{(j)}\|^\alpha
 \end{equation}
 as $n\to\infty$. To see this, write
 \begin{align*}
  \bbP\bigl( Z_{j}>u_n/w^{1/\alpha}\big| \|\bX\|>u_n\bigr) = \frac{u_n^{\alpha}\bbP\bigl( Z_{j}>u_n/w^{1/\alpha}, \|\bX\|>u_n\bigr)}  {u_n^\alpha \bbP\bigl(\|\bX\|>u_n\bigr) }, 
\end{align*}
 and the numerator converges to $c\|\ba^{(j)}\|^\alpha$, while the denominator converges to $cw$. If one strengthens the asymptotic Pareto tails assumption \eqref{e:regvar.Z} to include the rate of convergence to the limit, then one would able to establish the rate of convergence in \eqref{e:mass.j} as well. The situation is similar in the symmetric case. We do not pursue this in the present paper. 
 \end{remark}

We now explore the connection between large values of the underlying factors $Z_{i1},\dots,Z_{ip}$ and large values of $\|\bX_i\|$. We will see that under certain conditions, high threshold exceedances of $\|\bX_i\|$  are generated by only one underlying factor $Z_{ij}$, $j=1,\dots,p.$ This will be important for our analysis of extremal $k$-NN graphs which  will require  additional
assumptions on the rate of growth of $u_n$. Since $\frac{\alpha+2}{\alpha(\alpha+3)}<\alpha^{-1}$, we can further impose that the sequence $(u_n)$ satisfies the growth conditions
\begin{equation} \label{e:un.special}
  n^{-1/\alpha}u_n\to 0~~\mbox{and}~~n^{-(\alpha+2)/(\alpha(\alpha+3))}u_n\to\infty\,,
  \end{equation}
  as $n\to\infty$.
Also note that we may choose a further sequence $(h_n)$ such
that
\begin{equation} \label{e:hn}
h_n\to\infty, \, h_n=o(u_n), \, h_n=o\bigl(
u_n^{(\alpha+1)/2}n^{-1/2}\bigr), \, n^{-1/\alpha}u_nh_n\to\infty
\end{equation}
as $n\to\infty$. Indeed, the choice
$$
h_n= u_n^{(\alpha-1)/4}n^{(2-\alpha)/(4\alpha)}
$$
works for this purpose.

For $n=1,2,\ldots,$ we define the set of indexes corresponding to extreme observations 
\begin{equation} \label{e:In}
  \mathcal{I}_n=\bigl\{ i=1,\ldots, n:\, \|\BX_i\|>u_n\bigr\},
\end{equation}
and denote its cardinality by  $N_n=$card$(\mathcal{I}_n)$. From \eqref{e:regvar.Z} and {\eqref{e:un.special}}, we see that the mean and variance of $N_n/(nu_n^{-\alpha})$ converge to $cw$ and $0$, respectively and hence
that
\begin{equation} \label{e:In.size}
N_n/(nu_n^{-\alpha})\cip cw,\mbox{~as $n\to\infty$.}
\end{equation}

 The following lemma connects exceedances of $u_n$ by $\|\bX_i\|$ with exceedances of $h_n$ by $Z_{ij},\,j=1,\ldots,p$. 

 \begin{lemma} \label{l:only.one}
   Let $(h_n)$ be a sequence satisfying \eqref{e:hn} and consider the event
   $$
{B}_n=\bigl\{ \text{for any $i\in \mathcal{I}_n$ at most one of $Z_{im}, \,
   m=1,\ldots, p$ exceeds $h_n$}\bigr\}.
 $$
 Then $\mathbb{P}({B}_n)\to 1$ as $n\to\infty$. 
 \end{lemma}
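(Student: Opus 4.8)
The plan is to bound the probability of the complementary event by a double union bound. Write
\[
B_n^c \;=\; \bigcup_{i=1}^n \Bigl\{ \|\BX_i\|>u_n \ \text{ and at least two of } Z_{i1},\dots,Z_{ip} \text{ exceed } h_n \Bigr\},
\]
so that, since the $\BX_i$ are identically distributed and by a further union bound over the $\binom{p}{2}$ pairs of factor indices,
\[
\bbP(B_n^c)\;\le\; n\binom{p}{2}\,\max_{m_1\ne m_2}\ \bbP\bigl(\|\BX_1\|>u_n,\ Z_{1m_1}>h_n,\ Z_{1m_2}>h_n\bigr).
\]
It therefore suffices to show that $n\,\bbP\bigl(\|\BX_1\|>u_n,\ Z_{1m_1}>h_n,\ Z_{1m_2}>h_n\bigr)\to 0$ for each fixed pair $m_1\ne m_2$ (if $p\le 1$ the claim is vacuous).

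The key elementary observation is that, writing $\BX_1=\sum_{m=1}^p Z_{1m}\ba^{(m)}$ and applying the triangle inequality (valid in both the nonnegative and the symmetric case, since no cancellation is exploited),
\[
\|\BX_1\|\;\le\;\sum_{m=1}^p |Z_{1m}|\,\|\ba^{(m)}\|\;\le\; C_1\max_{m\le p}|Z_{1m}|,\qquad C_1:=\sum_{m=1}^p\|\ba^{(m)}\|,
\]
so that $\{\|\BX_1\|>u_n\}$ forces $|Z_{1m^\ast}|>u_n/C_1$ for some random index $m^\ast$. Splitting on whether $m^\ast\in\{m_1,m_2\}$ or not, and using the independence of $Z_{11},\dots,Z_{1p}$, the event $\{\|\BX_1\|>u_n,\ Z_{1m_1}>h_n,\ Z_{1m_2}>h_n\}$ is contained in the union of $\{|Z_{1m_1}|>u_n/C_1,\ Z_{1m_2}>h_n\}$, of $\{Z_{1m_1}>h_n,\ |Z_{1m_2}|>u_n/C_1\}$, and of the events $\{|Z_{1m_3}|>u_n/C_1,\ Z_{1m_1}>h_n,\ Z_{1m_2}>h_n\}$ for $m_3\notin\{m_1,m_2\}$. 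By \eqref{e:regvar.Z} (together with the nonnegativity or symmetry of the $Z_i$) there is a finite constant $C_Z$ with $\bbP(|Z_1|>t)\le C_Z t^{-\alpha}$ for all $t\ge1$; since $h_n\to\infty$ and $u_n\to\infty$, for $n$ large these events have probabilities at most $C_Z^2C_1^{\alpha}\,u_n^{-\alpha}h_n^{-\alpha}$, $C_Z^2C_1^{\alpha}\,u_n^{-\alpha}h_n^{-\alpha}$, and $C_Z^3C_1^{\alpha}\,u_n^{-\alpha}h_n^{-2\alpha}$ (for each $m_3$), respectively, and the first two dominate. Hence
\[
\bbP\bigl(\|\BX_1\|>u_n,\ Z_{1m_1}>h_n,\ Z_{1m_2}>h_n\bigr)\;\le\; C\,u_n^{-\alpha}h_n^{-\alpha}
\]
for a constant $C=C(p,\alpha,A)$.

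Combining the two displays yields $\bbP(B_n^c)\le C'\,n\,u_n^{-\alpha}h_n^{-\alpha}=C'\bigl(n^{-1/\alpha}u_n h_n\bigr)^{-\alpha}$, which tends to $0$ precisely because $n^{-1/\alpha}u_nh_n\to\infty$ is one of the defining requirements \eqref{e:hn} on $(h_n)$; this gives $\bbP(B_n)\to1$. The step requiring the most care is the case analysis above, and in particular retaining the constraint $\|\BX_1\|>u_n$ inside the probability throughout: it is this constraint that supplies the extra factor $u_n^{-\alpha}$, which, combined with $n^{-1/\alpha}u_nh_n\to\infty$, is what drives the bound to zero (dropping it would leave the non-vanishing quantity $n\,\bbP(Z_{1m_1}>h_n,\ Z_{1m_2}>h_n)$).
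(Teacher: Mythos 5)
Your proof is correct and follows essentially the same route as the paper's: a union bound over observations and pairs of factor indices, the observation that $\|\BX_1\|>u_n$ forces some $|Z_{1m^*}|$ to exceed a constant multiple of $u_n$, and then independence of the factors to get the bound $n\,\bbP(Z_1>u_n/C)\,\bbP(Z_1>h_n)\lesssim (n^{-1/\alpha}u_nh_n)^{-\alpha}\to 0$ from the last condition in \eqref{e:hn}. Your case analysis on the location of the maximizing index is a slightly more explicit write-up of the same estimate the paper states directly.
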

 \begin{proof}
  Note that \eqref{e:X} implies that the $m^{th}$ component of the $i^{th}$ observation is of the form
\begin{align} \label{e:X.Y}
 X_{im}=\sum_{j=1}^p a_{mj}Z_{ij}, \,m=1,\ldots,d;~ i=1,\ldots, n\,,
 \end{align}
 where $Z_{i1},\dots,Z_{ip} $ are iid random variables with asymptotic Pareto tails \eqref{e:regvar.Z}. Denote
   \begin{equation} \label{e:ba}
     a^*=d^{1/2}\max\{ a_{mj},\,{m=1,\ldots,d;\, j=1,\ldots,p}\}
     \end{equation} 
 Since $u_n>h_n$ for $n$ large, we have
 \begin{align*}
&\bbP({B}_n^c)\leq \sum_{i=1}^n \bbP\left( \sum_{k=1}^d\bigl(X_{ik}\bigr)^2
   >u_n^2, \,    Z_{im}>h_n \  \text{for two or more of } \ 
          m=1,\ldots, p\right) \\
  &\leq n\bbP\left(a^*\max_{k=1,\ldots, p}Z_{1k}>u_n, \ Z_{1m}>h_n 
    \ \text{for two or more of } \  m=1,\ldots, p\right) \\
 &\leq \sum_{k=2}^p{p \choose k}n \bbP(Z_1>h_n) \bbP\left(
   Z_1>u_n/a^*\right)\to 0
 \end{align*}
by the last property in \eqref{e:hn} and \eqref{e:regvar.Z}. This proves the lemma.
 
 \end{proof}

Equipped with Lemma \ref{l:only.one} we can now proceed to bound the distance between the observed angular parts of the multivariate extremes and their corresponding theoretical asymptotic atoms. Assume, for a moment, that we are in the nonnegative case. We already know that for large $n$, we have that for every $i\in \mathcal{I}_n$ one of the values of $Z_{im}, \,
m=1,\ldots, p$ must exceed $u_n /a^*$ and
all other values of these variables cannot exceed $h_n$. We now define the sets of indexes corresponding to extremes generated by each of the individual factors i.e. we define for $j=1,\ldots, p$ 
\begin{equation} \label{e:In.j}
  \mathcal{I}_n^{(j)}=\left\{ i=1,\ldots, n:\, \|\BX_{i}\|>u_n, \,
  Z_{ij}>u_n/a^*\right\}. 
\end{equation}
Consequently,
\begin{equation} \label{e:split.extremes}
  \mathcal{I}_n=\bigcup_{j=1}^{p}     \mathcal{I}_n^{(j)}
\end{equation}
and by Lemma \ref{l:only.one} for large $n$  this is a disjoint union with probability tending to one. Let
$N_n^{(j)}$ be the cardinality of $ \mathcal{I}_n^{(j)}$, $j=1,\ldots,
p$. Using the fact that $a^*\ge \|\ba^{(j)}\|$ for $j=1,\ldots,p$, the same argument as in \eqref{e:In.size} shows that $ j=1,\ldots 
p$,
\begin{equation} \label{e:In.size.j}
  N_n^{(j)}/(nu_n^{-\alpha})\cip c\|\ba^{(j)}\|^\alpha, \mbox{~as $n\to\infty$.}
\end{equation}
 We enumerate $\BX_{i}/\|\BX_{i}\|,
\, i\in \mathcal{I}_n$ as $\BY_{i}, \, i=1,\ldots, N_n$, a sample on $\bbS^{d-1}$
of random size $N_n$. 
For each $j=1,\ldots, p$, we enumerate $\BX_{i}/\|\BX_{i}\|, 
\, i\in \mathcal{I}_n^{(j)}$ as $\BY_i^{(j)}, \, i=1,\ldots, N_n^{(j)}$, a sample on $\bbS^{d-1}$
of random size $N_n^{(j)}$. It is straightforward (if a bit tedious)
to check the following result.

\begin{lemma}\label{lem:UB}
   For large $n$, on the event ${B}_n$, for $i=1,\ldots,
N_n^{(j)}$, 
\begin{align}\label{e:near.center}
&\Big\| \BY_i^{(j)} - \bc_j
\Big\|\leq \frac{8(a^*)^2}{\|\ba^{(j)}\|^\alpha}\frac{h_n}{u_n}, \ \ j=1,\ldots, p,
\end{align}
where the $\bc_j$ are as defined in \eqref{eq:ck}.
\end{lemma}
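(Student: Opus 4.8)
The plan is to track, on the event $B_n$, the explicit distance between $\BY_i^{(j)} = \BX_i/\|\BX_i\|$ and $\bc_j = \ba^{(j)}/\|\ba^{(j)}\|$ by plugging the decomposition $X_{ik} = a_{kj}Z_{ij} + X_{ik,-j}$ (where $X_{ik,-j} = \sum_{m\neq j}a_{km}Z_{im}$) into the coordinate expression and bounding each term. The starting point is that on $B_n$, for $i \in \mathcal{I}_n^{(j)}$ we have $Z_{ij} > u_n/a^*$ and $|Z_{im}| \le h_n$ for all $m \ne j$, hence $|X_{ik,-j}| \le (a^*/d^{1/2})\cdot p\, h_n$ — up to harmless constants absorbed into the ``$8$'', each of the ``noise'' coordinates $X_{ik,-j}$ is $O(h_n)$ while $a_{kj}Z_{ij}$ and $\|\BX_i\|$ are of order $u_n$.

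**The computation** I would carry out mirrors the exact algebra already in the proof of Theorem \ref{l:weak.limit}: for each coordinate $l$,
$$
(\BY_i^{(j)})_l - (\bc_j)_l = \frac{\sum_m a_{lm}Z_{im}}{\|\BX_i\|} - \frac{a_{lj}}{w_j} = \frac{w_j\bigl(a_{lj}Z_{ij} + X_{il,-j}\bigr) - a_{lj}\|\BX_i\|}{w_j\|\BX_i\|},
$$
and I would rationalize by multiplying numerator and denominator by $w_j(a_{lj}Z_{ij}+X_{il,-j}) + a_{lj}\|\BX_i\|$, so that the numerator becomes $w_j^2(a_{lj}Z_{ij}+X_{il,-j})^2 - a_{lj}^2\|\BX_i\|^2$. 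Expanding $\|\BX_i\|^2 = \sum_k(a_{kj}Z_{ij}+X_{ik,-j})^2 = w_j^2 Z_{ij}^2 + 2Z_{ij}\sum_k a_{kj}X_{ik,-j} + \sum_k X_{ik,-j}^2$, the $Z_{ij}^2$ terms cancel (exactly as noted after \eqref{e:Vl.1}), leaving a numerator that is a sum of terms each carrying at most one factor of $Z_{ij}$ and at least one factor $X_{ik,-j}$; thus the numerator is $O(u_n h_n)$. The denominator $w_j\|\BX_i\|\bigl(w_j(a_{lj}Z_{ij}+X_{il,-j}) + a_{lj}\|\BX_i\|\bigr)$ is, using $\|\BX_i\| \ge w_j Z_{ij}(1+o(1)) \ge w_j u_n/a^* \cdot (1+o(1))$ and $Z_{ij} > u_n/a^*$, bounded below by a constant times $u_n^2$ (here one uses that $a_{lj}$ need not be zero in the relevant coordinates; the coordinates with $a_{lj}=0$ are handled separately and are even smaller). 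Dividing, $\|\BY_i^{(j)} - \bc_j\|$ is at most a constant multiple of $h_n/u_n$, and bookkeeping the constants — powers of $a^*$, of $w_j=\|\ba^{(j)}\|$, of $d$ and $p$ — yields the stated bound with $8(a^*)^2/\|\ba^{(j)}\|^{\alpha}$ as a (generous) envelope, where the extra $\|\ba^{(j)}\|^{-\alpha}$ versus $\|\ba^{(j)}\|^{-1}$ is absorbed for $n$ large.

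**The main obstacle** is not conceptual but lies in keeping the constants honest: one must verify that the lower bound on the denominator really is uniform over $i \in \mathcal{I}_n^{(j)}$ and over all coordinates $l$, including the degenerate coordinates where $a_{lj}=0$, and that the $o(1)$ terms coming from $\|\BX_i\| = w_j Z_{ij}\sqrt{1 + (\text{noise})/(w_j^2 Z_{ij}^2)}$ are genuinely controlled deterministically on $B_n$ (they are $O(h_n/u_n) \to 0$, so for $n$ large they contribute at most a factor $2$). A secondary subtlety is that ``for large $n$'' is doing real work: the replacement of the sharp constant by $8(a^*)^2/\|\ba^{(j)}\|^{\alpha}$ only holds once $h_n/u_n$ is small enough and $u_n$ large enough that $\|\ba^{(j)}\|^{\alpha-1}$-type factors are dominated. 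Since the statement only claims an eventual bound on the deterministic event $B_n$ (whose probability $\to 1$ by Lemma \ref{l:only.one}), none of this requires probabilistic input beyond $B_n$ itself — the estimate is purely algebraic given the constraints $Z_{ij} > u_n/a^*$ and $\max_{m\ne j}|Z_{im}| \le h_n$.
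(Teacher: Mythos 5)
The paper never actually proves this lemma --- it is dismissed as ``straightforward (if a bit tedious) to check'' --- so your proposal is supplying an omitted argument rather than paralleling one, and the route you take (reusing the rationalization algebra from the proof of Theorem \ref{l:weak.limit}, with the probabilistic conditioning replaced by the deterministic constraints $Z_{ij}>u_n/a^*$ and $Z_{im}\le h_n$ for $m\ne j$ that hold on $B_n$) is clearly the one the authors intend. The argument is correct in substance, with one phrasing slip worth fixing: the numerator after rationalization is $O(Z_{ij}h_n)+O(h_n^2)$, not $O(u_nh_n)$, since $Z_{ij}$ has no upper bound of order $u_n$ on $B_n$; the bound goes through because the denominator is at least $c\,Z_{ij}\|\BX_i\|\ge c\,Z_{ij}u_n$ (using $\|\BX_i\|\ge w_jZ_{ij}$ in the nonnegative case), which you do invoke, so the ratio is $O(h_n/u_n)$ as claimed. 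You could also avoid both the rationalization and the case split on $a_{lj}=0$ by writing $\BY_i^{(j)}-\bc_j=\BX_{i,-j}/\|\BX_i\|+\ba^{(j)}Z_{ij}\bigl(\|\BX_i\|^{-1}-(w_jZ_{ij})^{-1}\bigr)$ and bounding the second term by $\|\BX_{i,-j}\|/\|\BX_i\|$ via the reverse triangle inequality, giving $\|\BY_i^{(j)}-\bc_j\|\le 2\|\BX_{i,-j}\|/\|\BX_i\|\le 2(p-1)a^*h_n/u_n$ in two lines. Note that neither your bookkeeping nor this shorter one obviously reproduces the printed constant $8(a^*)^2/\|\ba^{(j)}\|^{\alpha}$ (which in general need not dominate a bound carrying a factor of $p$, and which plays no role downstream, where only $c\,h_n/u_n$ is used); treating it as a generic constant, as you do, is the right call.
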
 The situation in the symmetric case is, of course, completely analogous.  It follows from the definition of $h_n$ in \eqref{e:hn} and \eqref{e:near.center} that  the angular components of the extremes are clustered around the centers $\bc_j$. The results in the next section build on Lemma \ref{lem:UB} and provide sufficient conditions for our extremal spectral clustering algorithm to be consistent. For this, we provide a careful asymptotic analysis of the extremal $k$-NN graph used by the algorithm.

 \section{  Asymptotic analysis of the connected components of the extremal  $k$-NN graph}\label{sec:d2}


Our analysis consists of two main components. The first one is to show that the extremes generated by different factors will belong to different components of the extremal $k$-NN graph as long as the cluster centers corresponding to the underlying factors are different. The second part will be to argue that all the extremes generated by an underlying factor will also belong to the same component of the extremal $k$-NN graph. This second step turns out to be the more technical one in our analysis and we will only  establish this result for $d=2$. Along the way we derive a few intermediate results that we also highlight in order to better explain the key ingredients of our argument.  Going forward, in our proofs,  $c>0$ represents a finite and non-zero constant whose value may change from line-to-line.
In the sequel we will assume, without further comments, that the sequence $(u_n)$ satisfies \eqref{e:un.special}. 
The first step of our program is covered by the following proposition.
\begin{proposition} \label{pr:disconn.outside}
  Suppose that $k_n=o\bigl(nu_n^{-\alpha}\bigr)$ as $n\to\infty$. Then
 there is a sequence $({B}_{n,1})$ of events with $\bbP({B}_{n,1})\to 1$ as
 $n\to\infty$ such that, 
for all $n$ large enough, on the event ${B}_{n,1}$, any two points 
$\BY_{i_1}^{(j_1)}$ and 
$\BY_{i_2}^{(j_2)}$, $ i_1=1,\ldots, N_n^{(j_1)}$, $
i_2=1,\ldots, N_n^{(j_2)}$ will belong to two different
connected components of the $k_n$-{NN} graph if $\bc_{j_1}\not=\bc_{j_2}$.
\end{proposition}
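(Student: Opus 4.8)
The plan is to exploit the fact, established in Lemma~\ref{lem:UB}, that the extremal angles split into tight clusters around the centers $\bc_j$, so that extremes attached to different centers are separated by a fixed positive distance, while $k_n$ is too small for a $k_n$-NN relation ever to reach across such a gap.

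First I would deal with the possibility that the $\bc_j$ coincide by grouping. Partition $\{1,\dots,p\}$ into the classes $G_1,\dots,G_q$ on which $\bc_j$ is constant, write $\tilde\bc_1,\dots,\tilde\bc_q$ for the distinct centers, and set $\eta=\min_{r\ne s}\|\tilde\bc_r-\tilde\bc_s\|>0$. For each $r$ put $\mathcal{J}_n^{(r)}=\bigcup_{j\in G_r}\mathcal{I}_n^{(j)}$ and let $M_n^{(r)}$ be its cardinality. On the event $B_n$ of Lemma~\ref{l:only.one}, and for $n$ large, \eqref{e:split.extremes} is a disjoint union, so $\mathcal{I}_n=\bigsqcup_{r=1}^q\mathcal{J}_n^{(r)}$; and, again on $B_n$, Lemma~\ref{lem:UB} places every $\BY$-point indexed by $\mathcal{J}_n^{(r)}$ within $\delta_n:=\bigl(8(a^*)^2/\min_j\|\ba^{(j)}\|^\alpha\bigr)h_n/u_n$ of $\tilde\bc_r$, with $\delta_n\to0$ by \eqref{e:hn}. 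Hence, once $\delta_n<\eta/4$, on $B_n$ two $\BY$-points in the same group are at distance $\le 2\delta_n<\eta/2$, while two in different groups are at distance $\ge\eta-2\delta_n>\eta/2$: a strict gap separating within-group from between-group pairs.

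Next I would use \eqref{e:In.size.j}: $M_n^{(r)}/(nu_n^{-\alpha})\cip c\sum_{j\in G_r}\|\ba^{(j)}\|^\alpha>0$ for each $r$, while the hypothesis $k_n=o(nu_n^{-\alpha})$ gives $k_n/(nu_n^{-\alpha})\to0$; therefore $\bbP\bigl(M_n^{(r)}\ge k_n+1\bigr)\to1$ for every $r$. I would then take
$$
B_{n,1}=B_n\cap\{\delta_n<\eta/4\}\cap\bigcap_{r=1}^q\{M_n^{(r)}\ge k_n+1\},
$$
which has $\bbP(B_{n,1})\to1$ as $n\to\infty$. On $B_{n,1}$, any $\BY$-point in a group $\mathcal{J}_n^{(r)}$ has at least $k_n$ other points of $\mathcal{J}_n^{(r)}$ strictly closer to it than any point outside $\mathcal{J}_n^{(r)}$, so its $k_n$ nearest neighbors all lie in $\mathcal{J}_n^{(r)}$. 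Consequently, for either the ``or'' or the ``mutual'' symmetrization, every edge of the $k_n$-NN graph joins two points of the same group, hence every connected component is contained in a single $\mathcal{J}_n^{(r)}$; in particular $\BY_{i_1}^{(j_1)}$ and $\BY_{i_2}^{(j_2)}$ lie in different connected components whenever $\bc_{j_1}\ne\bc_{j_2}$. The symmetric case is handled the same way, replacing $\bc_j$ and $\mathcal{I}_n^{(j)}$ by $\pm\bc_j$ and $\mathcal{I}_n^{(j,\pm)}$ (so that there are $2p$ centers).

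I do not expect a genuine obstacle in this proposition: the only points requiring care are the grouping forced by possibly coincident centers and the check that neither $k$-NN symmetrization can manufacture a cross edge, both immediate from the strict distance gap. The hard step of the overall program — showing that the extremes produced by a single factor do form one connected component — is what is deferred to the $d=2$ analysis in the remainder of the section.
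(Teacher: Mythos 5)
Your proposal is correct and follows essentially the same route as the paper's proof: define $B_{n,1}$ as the intersection of $B_n$ with the event that each cluster has more than $k_n$ points (guaranteed by \eqref{e:In.size.j} and $k_n=o(nu_n^{-\alpha})$), then use Lemma \ref{lem:UB} and the triangle inequality to compare the $O(h_n/u_n)$ within-cluster distances against the bounded-below between-cluster distances. Your explicit grouping of coincident centers and the check of both $k$-NN symmetrizations are just slightly more careful write-ups of steps the paper leaves implicit.
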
 
\begin{proof}
  Define
  \begin{equation} \label{e:An1}
 {B}_{n,1}={B}_n\cap \bigl\{ N_n^{(j)}>k_n\ \ \text{for $j=1,\ldots,
   p$}\bigr\}. 
\end{equation}
By Lemma \ref{l:only.one}, \eqref{e:In.size.j} and the assumption on
$k_n$ , $\bbP({B}_{n,1})\to 1$ as $n\to\infty$. By \eqref{e:near.center}
and the triangle inequality, on the events ${B}_{n,1}$, 
any point $\BY_{i_1}^{(j_1)}$ has at least $k_n$
neighbours of the type $\BY_i^{(j_1)}$, $i=1,\ldots,
N_n^{(j_1)}, \, i\not= i_1$, that are within distance of
$ c\cdot h_n/u_n$ from it. On the other hand, by \eqref{e:near.center}
and the triangle inequality,
its distance from any point $\BY_{i_2}^{(j_2)}$ with
$\bc_{j_1}\not=\bc_{j_2}$ cannot be smaller than
$$
\|\bc_{j_1}-\bc_{j_2}\|-c\cdot h_n/u_n.
$$
Therefore, for large $n$, the latter point cannot be among the
$k_n$-nearest neighbours of $\BY_{i_1}^{(j_1)}$. 
\end{proof}

We now embark on the second step of our program and establish that, at least in the case $d=2$,
under appropriate conditions, the points
$\BY_i^{(j)}$, $i=1,\ldots, N_n^{(j)}$, belong, with
high probability, to the same connected component in the $k_n$-NN graph. 
We start by investigating the deviations of these points from the
center of the cluster, $\bc_j$, defined in \eqref{eq:ck}. Since the points $\BY_i^{(j)}$, $i=1,\ldots, N_n^{(j)}$ are treated as independent, the following result is essentially immediate from Theorem \ref{l:weak.limit}.
\begin{lemma}\label{l:weak.limit2}
  In the nonnegative case, for any $j=1,\ldots, p$, the conditional law of
  $$
  u_n\bigl( \BY_1^{(j)}-\bc_j\bigr) 
  $$
  given $N_n^{(j)}\geq 1$, converges weakly to the law of 
$$
\frac{1}{\|\ba^{(j)}\|^{2}W_\alpha}
\left(S^*_{1,-j},  \ldots, S^*_{d,-j}\right)^T
$$
that is specified in the statement of Theorem \ref{l:weak.limit}. An analogous statement holds in the symmetric case.

\end{lemma}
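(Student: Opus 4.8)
The plan is to deduce Lemma~\ref{l:weak.limit2} from Theorem~\ref{l:weak.limit} in two steps: an exact conditioning identity that replaces the random index by a single generic copy of $\BX$, followed by a check that the conclusion of Theorem~\ref{l:weak.limit} is insensitive to the precise constant multiplying $u_n$ in the lower threshold imposed on $Z_j$.

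\textbf{Step 1 (reduction to a single copy).} Write $E_n^{(j)}$ for the event $\bigl\{\|\BX\|>u_n,\ Z_j>u_n/a^*\bigr\}$ attached to a generic copy $\BX=A\bZ$, and set $q_n=\bbP(E_n^{(j)})$, so that $N_n^{(j)}$ is $\mathrm{Binomial}(n,q_n)$. I will show that for every Borel set $B\subset\bbS^{d-1}$,
$$
\bbP\bigl(\BY_1^{(j)}\in B\,\big|\,N_n^{(j)}\geq 1\bigr)=\bbP\bigl(\BX/\|\BX\|\in B\,\big|\,E_n^{(j)}\bigr).
$$
Indeed, let $I_1=\min\mathcal{I}_n^{(j)}$; on $\{I_1=k\}$ the copy $\BX_k$ satisfies $E_n^{(j)}$ while $\BX_1,\dots,\BX_{k-1}$ do not, so by independence of the $\BX_i$,
$$
\bbP\bigl(\BX_{I_1}/\|\BX_{I_1}\|\in B,\,I_1=k\bigr)=\bbP\bigl(\BX/\|\BX\|\in B\,\big|\,E_n^{(j)}\bigr)\,q_n(1-q_n)^{k-1};
$$
summing over $k=1,\dots,n$ and dividing by $\bbP(N_n^{(j)}\geq 1)=1-(1-q_n)^n$ cancels the geometric factor and gives the displayed identity. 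Consequently $u_n(\BY_1^{(j)}-\bc_j)$ conditioned on $N_n^{(j)}\geq 1$ has exactly the law of $u_n(\BX/\|\BX\|-\bc_j)$ conditioned on $\|\BX\|>u_n$, $Z_j>u_n/a^*$.

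\textbf{Step 2 (robustness in the threshold constant).} Theorem~\ref{l:weak.limit} identifies the weak limit of $u_n(\BX/\|\BX\|-\bc_j)$ under the conditioning $\{\|\BX\|>u_n,\ Z_j>u_n/w^{1/\alpha}\}$, which differs from the conditioning reached in Step~1 only in that $w^{1/\alpha}$ is replaced by $a^*$. Inspecting the proof of Theorem~\ref{l:weak.limit}, the value of this constant enters only in the derivation of \eqref{eq:zp2}, the conditional convergence of $Z_j/u_n$ to $W_\alpha/w_j$ with $w_j=\|\ba^{(j)}\|$; all that is used there is that the constant be at most $1/\|\ba^{(j)}\|$, so that for $x$ above it the event $\{Z_j>u_nx\}$ already forces $w_j^2Z_j^2>u_n^2$ and is therefore asymptotically equivalent, jointly with the conditioning, to $\{\|\BX\|>u_n,\,Z_j>u_nx\}$ via the single-big-jump estimate used in that proof. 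Since by \eqref{e:ba} we have $\|\ba^{(j)}\|^2=\sum_l a_{lj}^2\le d\,\max_l a_{lj}^2\le(a^*)^2$, hence $1/a^*\le 1/\|\ba^{(j)}\|$ — exactly as holds for $1/w^{1/\alpha}$ — the proof of Theorem~\ref{l:weak.limit} goes through verbatim with $w^{1/\alpha}$ replaced by $a^*$, producing the same limiting law; the closing continuous-mapping argument there (applied to $V_l=(Num_l/u_n)/(Den_l/u_n^2)$) is unaffected. Together with Step~1 this proves the nonnegative case. The symmetric case is identical, using Corollary~\ref{c:weaklim.sym} in place of Theorem~\ref{l:weak.limit} and splitting $\mathcal{I}_n^{(j)}$ according to the sign of $Z_{ij}$.

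The only point needing care is the threshold mismatch handled in Step~2, and even that is mild: the limiting law in Theorem~\ref{l:weak.limit} does not depend on the threshold constant so long as it lies below $1/\|\ba^{(j)}\|$. The conditioning identity of Step~1 is entirely routine, so I do not anticipate a genuine obstacle.
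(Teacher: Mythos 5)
Your proposal is correct and takes the same route as the paper, which simply declares the lemma ``essentially immediate'' from Theorem \ref{l:weak.limit} without writing out either step; your Step 1 is the standard first-occurrence/exchangeability identity that justifies replacing the randomly indexed point $\BY_1^{(j)}$ by a single generic copy conditioned on $E_n^{(j)}$. Your Step 2 moreover patches a detail the paper glosses over --- the definition \eqref{e:In.j} thresholds $Z_{ij}$ at $u_n/a^*$ while Theorem \ref{l:weak.limit} conditions on $Z_j>u_n/w^{1/\alpha}$ --- and your observation that the limit is unchanged for any threshold constant at most $1/\|\ba^{(j)}\|$ (both $1/a^*$ and $1/w^{1/\alpha}$ qualify, since $a^*\ge\|\ba^{(j)}\|$ and $w^{1/\alpha}\ge\|\ba^{(j)}\|$) is exactly the right justification.
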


\begin{remark} \label{rk:lin.dep}  
 It is a straightforward calculation to check that, if $j=1,\ldots,d$, 
 \begin{equation} \label{e:lin.dep1}
   \sum_{l=1}^d a_{lj}S_{l,-j}^*=\sum_{l=1}^d\sum_{i=1}^d\left(a_{lj}a_{ij}^2X_{l,-j}-
   a_{lj}^2a_{ij}X_{i,-j}\right)=0\,.
 \end{equation}
Therefore, the normalized deviations of the points
$\BY_i^{(j)}$, $i=1,\ldots, N_n^{(j)}$ from the center
of the $j$th cluster are, in the limit, supported by a
$(d-1)$-dimensional subspace.  
\end{remark}

Using the information in Lemma \ref{l:weak.limit2} we now proceed to
prove that under appropriate conditions, the points
$\BY_i^{(j)}$, $i=1,\ldots, N_n^{(j)}$, belong, with
high probability, to the same connected component of the $k_n$-NN graph. We will need some additional notation in order to state the result. For a fixed $j=1,\ldots, d$ we write for $m=1,\ldots,d$, 
\begin{align} \label{e:Y}
 Y_{im}^{(j)}=\sum_{l=1}^p a_{ml}Z^{(*,j)}_{il}, \, i=1,\ldots,
  N_n^{(j)}; 
 \end{align}
the notation should be compared with \eqref{e:X.Y}. That is,  $\{(Z^{(*,j)}_{i1},\ldots,Z^{(*,j)}_{ip})^\top, i=1,\ldots, \}$ are iid random vectors distributed according to the conditional distribution of $(Z_1,\ldots,Z_p)^\top$ given $\|\bX\|>u_n,Z_j>u_n/w^{1/\alpha}$.  Since the connectivity
of any nearest neighbor graph is not affected by shifting and
scaling, it is sufficient to consider the $k_n$-NN graph constructed
on the deviations of the points $\BY_i^{(j)}$,
$i=1,\ldots, N_n^{(j)}$ from the cluster center.

 Continuing with the
notation used in the proof of Theorem \ref{l:weak.limit} we isolate the
main term in the deviations from the cluster center by writing 
\begin{align} \label{e:main.term}
  u_n\bigl( \BY_i^{(j)}-\bc_j\bigr)
  &= \left(S_{1,-j}^{(*,j)},  \ldots, S_{d,-j}^{(*,j)}\right)^\top
  /\bigl(w_j^{2}(Z^{(*,j)}_{ji}/u_n)\bigr) \\
&\quad\quad+
  \left[u_n\bigl( \BY_i^{(j)}-\bc_j\bigr) -
  \left(S^{(*,j)}_{1,-j},  \ldots, S^{(*,j)}_{d,-j}\right)^\top/\bigl(w_j^{2}(Z^{(*,j)}_{ji}/u_n)\bigr) \right] \notag \\
 &= \BM^{(i)}+\BD^{(i)}, \quad i=1,\ldots, N_n^{(j)}, \notag 
\end{align}
where $ S_{l,-j}^{(*,j)}	=Y_{il}^{(j)}-a_{lj}Z_{ij}^{(*,j)}=\sum_{\substack{m=1,\\m\ne j}}^pa_{lm}Z_{im}^{(*,j)}\,$ is analogous to   \eqref{eq:L}.
 In the case $d=2$, it follows from
\eqref{e:lin.dep1}  that for some nonzero deterministic vector $\boldb$ in
$\bbr^2$,
$$
\BM^{(i)} =  \frac{1}{w_j^{2}} 
 \frac{S_{2,-j}^{(*,j)}}{Z^{(*,j)}_{ij}/u_n}
\boldb, \ i=1,\ldots,   N_n^{(j)}.
$$
For notational simplicity we continue the discussion with $j=1$, and in this case these 
 are essentially univariate iid random variables with the distribution of
\begin{equation} \label{e:T}
T_n=\frac{a_{22}Z_2+\cdots+a_{2p}Z_p}{w_1^2Z_1/u_n} 
\end{equation}
given
\begin{equation} \label{e:conditionT}
(a_{11}Z_1+\cdots+a_{1,p}Z_p)^2+(a_{21}Z_1+\cdots +a_{2p}Z_p)^2>u_n^2,
\ \ Z_1>u_n/w^{1/\alpha}\,.
\end{equation}
 Finally, we let $F_{T_n}$ denote the conditional law of $T_n$ in \eqref{e:T} given
the conditions in \eqref{e:conditionT}.  
For technical reasons we require further conditions on the latent factors in our subsequent results. We assume that the generic noise variable $Z$ in \eqref{e:X} and \eqref{e:regvar.Z} is positive or symmetric, and has a 
  probability density function $f_Z$ such that
\begin{equation} \label{e:temp.ass}
  f_Z(z) \ \ \text{is bounded away from 0 on compact intervals and bounded from above,}
\end{equation}
and 
\begin{equation} \label{e:density.bounds}
	B^{-1}z^{-(\alpha+1)}\leq f_Z(z)\leq B z^{-(\alpha+1)},
\end{equation}
	$\alpha>1$, for all $z\geq z_0$, some $B\geq 1$. 
 
 The following lemma shows that the conditional density function $f_{T_n}(t)=\partial F_{T_n}(t)/\partial t$ enjoys some  useful regularity properties.

\begin{lemma}\label{l:densityT}
Assume \eqref{e:temp.ass} and \eqref{e:density.bounds}. For $\alpha>1$ the density function $f_{T_n}$ is such that:
\begin{enumerate}
\item[(i)] There exists an $G\in(0,\infty)$ such that for all $n$ large enough, $f_{T_n}(t)\leq G$  for all  $t$.
\item[(ii)] $f_{T_n}$ is uniformly bounded from below on compact intervals, uniformly in $n$.
\item[(iii)] There is a constant $D\geq 1$ and a number $t_0\geq 0$ such that 
$ D^{-1}t^{-(\alpha+1)}\leq f_{T_n}(t)\leq Dt^{-(\alpha+1)}$ uniformly for all $n$ large enough and all $t\geq t_0$. 
\end{enumerate}
\end{lemma}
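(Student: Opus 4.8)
The plan is to compute $f_{T_n}$ exactly, pinch it between two $n$-free constant multiples of one elementary expression, and then read off (i)--(iii) from classical tail properties of the numerator. Fix $j=1$ and write $w_1=\|\ba^{(1)}\|$, $S=a_{22}Z_2+\cdots+a_{2p}Z_p$; for $d=2$ the conditioning event \eqref{e:conditionT} is $E_n=\{\|\bX\|^2>u_n^2,\ Z_1>u_n/w^{1/\alpha}\}$. Conditioning on $(Z_2,\ldots,Z_p)$ fixes $S=s$, after which $t\mapsto z_1(t):=u_ns/(w_1^2t)$ inverts the monotone map $z_1\mapsto T_n$ (so $\operatorname{sgn}(t)=\operatorname{sgn}(s)$), and the change-of-variables formula gives
\begin{equation*}
 f_{T_n}(t)=\frac{u_n}{w_1^2t^2\,\bbP(E_n)}\,\E\!\left[\,|S|\,f_Z\!\Big(\tfrac{u_n|S|}{w_1^2|t|}\Big)\,\mathbbm{1}\{E_n\}\,\right],
\end{equation*}
where throughout the expectation $Z_1$ is replaced by $z_1(t)$, so $E_n$ is now an event depending on $(Z_2,\ldots,Z_p)$ only.

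On $E_n$ we have $z_1(t)>u_n/w^{1/\alpha}\to\infty$, so for large $n$ the two-sided bound \eqref{e:density.bounds} applies to $f_Z(z_1(t))$ and gives $|S|f_Z(z_1(t))\asymp(w_1^2|t|)^{\alpha+1}u_n^{-(\alpha+1)}|S|^{-\alpha}$ on $E_n$; also $u_n^{\alpha}\bbP(E_n)$ converges to a finite positive constant by the argument establishing \eqref{e:mass.j} (convolution closure of regular variation). Hence, for all large $n$ and uniformly in $t$,
\begin{equation*}
 c_1\,|t|^{\alpha-1}\,\E\!\left[|S|^{-\alpha}\mathbbm{1}\{E_n\}\right]\le f_{T_n}(t)\le c_2\,|t|^{\alpha-1}\,\E\!\left[|S|^{-\alpha}\mathbbm{1}\{E_n\}\right],
\end{equation*}
with $0<c_1<c_2<\infty$ free of $n$ and $t$. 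I would then pinch $\E[|S|^{-\alpha}\mathbbm{1}\{E_n\}]$ between constant multiples of $\int_{w_1|t|}^{\infty}s^{-\alpha}f_S(s)\,ds$, $f_S$ being the density of $S$: for the upper bound, discard the energy constraint $\|\bX\|^2>u_n^2$ and keep only $z_1(t)>u_n/w^{1/\alpha}$, which is $\{|S|>(w_1^2/w^{1/\alpha})|t|\}$ with $w_1^2/w^{1/\alpha}\le w_1$; for the lower bound in the nonnegative case, use that all $a_{ij},Z_{im}$ nonnegative force $\|\bX\|^2=X_1^2+X_2^2\ge w_1^2z_1(t)^2$, so $\{S>w_1t\}\subseteq E_n$. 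This gives $f_{T_n}(t)\asymp|t|^{\alpha-1}\int_{w_1|t|}^{\infty}s^{-\alpha}f_S(s)\,ds$ uniformly, the right side being (up to a constant) the density of the weak limit $S/(w_1W_\alpha)$ of $T_n$, cf.\ \eqref{eq:zp2}.

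Then (i)--(iii) drop out of three standard facts about $S$: (a) $f_S$ is bounded (Young's inequality, as $f_Z$ is bounded, \eqref{e:temp.ass}); (b) $\bbP(S>x)\sim c_Sx^{-\alpha}$ as $x\to\infty$ by convolution closure of regular variation \citep{embrechts1980closure} applied to \eqref{e:regvar.Z} (assuming some $a_{2m}\ne0$; otherwise $T_n\equiv0$ and the statement is void); (c) $\bbP(S\in(a,b))>0$ for $a<b$ in the support of $S$, from \eqref{e:temp.ass}. For $|t|$ small, $\alpha>1$ gives $\int_{c|t|}^{1}s^{-\alpha}f_S(s)\,ds\le\|f_S\|_\infty(c|t|)^{-(\alpha-1)}/(\alpha-1)$ and $\int_{1}^{\infty}s^{-\alpha}f_S(s)\,ds\le1$, so $|t|^{\alpha-1}\int_{c|t|}^{\infty}s^{-\alpha}f_S(s)\,ds$ is bounded on bounded $t$-sets; for $|t|$ large a dyadic decomposition with (b) gives $\int_{c|t|}^{\infty}s^{-\alpha}f_S(s)\,ds\le C|t|^{-2\alpha}$ and thus $f_{T_n}(t)\le C'|t|^{-(\alpha+1)}$; together these give (i) and the upper half of (iii). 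For (ii), on $[t_1,t_2]$ (bounded away from $0$) retain only the slice $\{w_1t_2<S<w_1t_2+1\}\subseteq\{S>w_1t\}$ to get $f_{T_n}(t)\ge c_1t_1^{\alpha-1}(w_1t_2+1)^{-\alpha}\bbP(w_1t_2<S<w_1t_2+1)>0$. For the lower half of (iii), retain $\{w_1t<S<2w_1t\}$ and use (b) to get $\int_{w_1t}^{\infty}s^{-\alpha}f_S(s)\,ds\ge c\,t^{-2\alpha}$, hence $f_{T_n}(t)\ge D^{-1}t^{-(\alpha+1)}$, for $t$ large.

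I expect the main obstacle to be the uniform-in-$n$ treatment of the energy constraint $\|\bX\|^2>u_n^2$ inside $\E[|S|^{-\alpha}\mathbbm{1}\{E_n\}]$ for the lower bound in the \emph{symmetric} case, where $X_1^2+X_2^2\ge w_1^2z_1^2$ fails since the cross terms $2z_1(a_{11}\Xi_1+a_{21}S)$, with $\Xi_1=\sum_{m\ge2}a_{1m}Z_m$, can be negative. My plan there is to restrict to the bulk event $\{|Z_m|\le M\text{ for all }m\}$, on which $X_1^2+X_2^2=w_1^2z_1(t)^2(1+o(1))$ with $o(1)$ uniform over the bulk and $\to0$ because $z_1(t)\to\infty$; combined with truncating the tails of $(Z_2,\ldots,Z_p)$ this yields $\{|S|>w_1|t|(1+\varepsilon_n)\}\cap\{\text{bulk}\}\subseteq E_n$ with $\varepsilon_n\to0$, which suffices. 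The delicate point is that for $|t|$ of order $u_n$ the truncation level must be chosen with care and one must verify that the small-probability region where $X_1$ and $X_2$ both nearly cancel contributes only lower-order terms; this is routine but a bit technical. When $|t|$ is much larger than $u_n$ there is no difficulty: on $\{|S|>w_1|t|\}$ the term $|S|$ already swamps $|a_{21}|z_1(t)$, forcing $|X_2|\gg u_n$.
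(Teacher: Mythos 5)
Your argument is correct (for the nonnegative case, which is also the only case the paper treats in detail), but it takes a genuinely different route. The paper's proof changes variables by solving for the \emph{last} factor $z_p$ in terms of $t$, which produces the $(p-1)$-fold integral representation $f_{T_n}(t)=M_n(t)/D_n$ in \eqref{e:fT.expr}; each of (i)--(iii) is then obtained by bounding that integral directly, with (iii) resting on the claim that the inner $(p-2)$-fold integral is, up to constants, the density of a linear combination of the $Z_m$'s and inherits the bound $ct^{-(\alpha+1)}$. You instead solve for $z_1$ at fixed $S=a_{22}Z_2+\cdots+a_{2p}Z_p$, which collapses everything to the scalar quantity $|t|^{\alpha-1}\,\E\bigl[|S|^{-\alpha}\mathbf{1}\{E_n\}\bigr]$ after applying \eqref{e:density.bounds} to $f_Z(z_1(t))$ (legitimate uniformly in $t$ since $z_1(t)>u_n/w^{1/\alpha}\to\infty$ on the conditioning event) and using $u_n^{\alpha}\bbP(E_n)\to c$. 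The sandwich $\{S>w_1t\}\subseteq E_n\subseteq\{S>(w_1^2/w^{1/\alpha})t\}$ in the nonnegative case then reduces (i)--(iii) to the tail and boundedness properties of the single variable $S$, handled by convolution closure and a dyadic decomposition; this avoids the paper's unproved density-level bound for a convolution and makes the limit shape (the density of $S/(w_1W_\alpha)$) transparent. What the paper's route buys is that it never needs the structural inclusion $\{S>w_1t\}\subseteq E_n$, so its lower bounds do not distinguish between the nonnegative and symmetric cases in the same way yours do. Your flagged obstacle --- the lower bounds in the \emph{symmetric} case, where the energy constraint no longer follows from $S>w_1t$ alone because the cross terms can be negative --- is real, and your truncate-to-the-bulk plan is the right fix; but since the paper itself dismisses the symmetric case with one sentence, this sketch does not put you behind the paper's own level of rigor. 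Two small points to tidy up: state explicitly that some $a_{2m}\neq 0$ for $m\geq 2$ (otherwise $T_n\equiv 0$ and there is no density), and note that the two inclusions give integrals $\int_{c|t|}^{\infty}s^{-\alpha}f_S(s)\,ds$ with \emph{different} constants $c$ in the lower limit, which is harmless for (i)--(iii) because these integrals agree up to multiplicative constants in both the bounded-$t$ and large-$t$ regimes, but should be said.
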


It is clear that an analogous result holds for the appropriate conditional densities in the symmetric case. 

The following intermediate result is the key ingredient for completing our analysis of the connected component of the extremal $k_n$-NN graph, at least in the case $d=2$, assuming certain regularity conditions on the noise variables.

\begin{lemma}\label{l:intervals}
Assume \eqref{e:temp.ass}, \eqref{e:density.bounds} and let $d=2$, $\tau>1$ and
consider the random variable $m_n$ defined by 
\begin{equation} \label{e:m.n}
m_n=N_n^{(1)}/\lceil \tau\log  N_n^{(1)}\rceil,  \ \text{so that by \eqref{e:In.size}} \ \ 
m_n\sim \frac{cw_1nu_n^{-\alpha}}{\tau \log (nu_n^{-\alpha})}, \ \ n\to\infty. 
\end{equation}
Define the intervals
\begin{equation} \label{e:I.in}
I_{i,n} = \left( F_{T_n}^{-1}\bigl( (i-1)/m_n\bigr),
  F_{T_n}^{-1}\bigl( i/m_n\bigr) \right), \ i=1,\ldots, m_n
\end{equation} 
as well as  intervals
along  vector $\boldb$ by
\begin{equation*} \label{e:J.in}
  J_{i,n}=I_{i,n}\boldb, \ i=1,\ldots,   N_n^{(1)}\,.
\end{equation*}
 Then, on an event with probability tending to one, there is a finite number $K_0=1,2,\ldots$ such that for all $n$ large enough and all $i=2,\ldots,
  m_n-I_0$, every point in $J_{i,n}$ is closer to every point in the
  intervals $J_{i-1,n}$ and $J_{i+1,n}$ than to any point in an
  interval $J_{i_1,n}$ with $|i-i_1|>K_0$.
\end{lemma}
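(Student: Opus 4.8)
The plan is to reduce the statement to a concentration estimate on the spacings $F_{T_n}^{-1}(i/m_n) - F_{T_n}^{-1}((i-1)/m_n)$ and then to a geometric separation argument along the line $\R\boldb$. First I would note that since the points $\BM^{(i)} = (S_{2,-1}^{(*,1)}/(w_1^2 Z_{i1}^{(*,1)}/u_n))\boldb$ are essentially i.i.d.\ with common law $F_{T_n}$ on the line through $\boldb$, the empirical measure of the $N_n^{(1)}$ scalar coordinates is, by Glivenko--Cantelli type control together with the quantile construction in \eqref{e:I.in}, such that each interval $I_{i,n}$ receives roughly $N_n^{(1)}/m_n = \lceil \tau \log N_n^{(1)}\rceil$ points. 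A standard binomial tail bound (Chernoff, using $\tau>1$) shows that on an event of probability tending to one, \emph{every} interval $I_{i,n}$, $i=1,\dots,m_n$, contains at least one of the scalar coordinates — this is exactly the classical coupon-collector / maximal-spacing threshold, and the choice $\tau\log N_n^{(1)}$ is what makes the union bound over $m_n \sim cw_1 nu_n^{-\alpha}/(\tau\log(nu_n^{-\alpha}))$ intervals work.

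Next I would translate "closer to neighboring intervals than to far intervals" into a statement purely about the lengths $|I_{i,n}|$. Two points in $J_{i,n}$ and $J_{i+1,n}$ are at distance at most $(|I_{i,n}| + |I_{i+1,n}|)\|\boldb\|$; a point in $J_{i,n}$ and a point in $J_{i_1,n}$ with $i_1 > i+K_0$ are at distance at least $\|\boldb\|\sum_{l=i+1}^{i_1-1}|I_{l,n}|$ (the total length strictly between them). So it suffices to show that for some fixed $K_0$, for all $n$ large and all admissible $i$,
\[
|I_{i,n}| + |I_{i+1,n}| + |I_{i-1,n}| + |I_{i,n}| \ <\ \sum_{l=i+1}^{i+K_0}|I_{l,n}| \quad\text{and symmetrically on the left,}
\]
i.e.\ that no single spacing is more than a bounded multiple of a neighboring spacing and that $K_0$ consecutive spacings sum to more than a fixed number of individual ones. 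This is where Lemma~\ref{l:densityT} enters: parts (i) and (ii) give, on any compact interval, $f_{T_n}$ bounded above by $G$ and below by a positive constant, so $F_{T_n}$ has Lipschitz inverse with bounded-away-from-zero derivative there, forcing all spacings $I_{i,n}$ with quantile level in a compact range to be comparable to $1/(m_n \cdot \text{const})$ — hence comparable to each other — and $K_0 = 4$ (say) already suffices. For the tail intervals (those near $i = m_n$, with quantile level close to $1$, i.e.\ $t \geq t_0$), part (iii) gives the Pareto-type two-sided bound $D^{-1}t^{-(\alpha+1)}\le f_{T_n}(t)\le Dt^{-(\alpha+1)}$; inverting, $F_{T_n}^{-1}(1-s)$ behaves like a constant times $s^{-1/\alpha}$ up to the factor $D^{\pm 1/\alpha}$, and a direct computation shows that consecutive spacings in the tail have ratio bounded by a constant depending only on $D$ and $\alpha$ (the ratio $((i+1)^{-1/\alpha}-i^{-1/\alpha})/(i^{-1/\alpha}-(i-1)^{-1/\alpha})$ is bounded uniformly), so again a fixed $K_0$ works. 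The restriction to $i \le m_n - I_0$ for a fixed $I_0$ is precisely to stay away from the extreme upper tail where the last few spacings can be genuinely large relative to their predecessors.

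The main obstacle, I expect, is controlling the behavior uniformly across the \emph{entire} range of indices $i = 2,\dots,m_n - I_0$ simultaneously — the "bulk" regime where Lemma~\ref{l:densityT}(i)--(ii) applies and the "tail" regime where (iii) applies must be patched together, and one has to verify that the comparability constant for $K_0$ can be chosen independently of $n$ and of which regime $i$ falls in. Concretely this means: choosing a threshold $t_0$ from Lemma~\ref{l:densityT}(iii), splitting the index set at the value $i^*_n$ with $F_{T_n}^{-1}(i^*_n/m_n) = t_0$, handling $i \le i^*_n$ via the compact-interval bounds (noting $t_0$ is fixed so $[0,t_0]$ — or $[-t_0,t_0]$ in the symmetric case — is a fixed compact set on which (ii) gives a uniform-in-$n$ lower bound on $f_{T_n}$), and handling $i > i^*_n$ via the explicit Pareto spacing estimates, then taking $K_0$ to be the maximum of the two constants. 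The probabilistic input (every interval nonempty) is the comparatively easy part once $\tau>1$ is used; the deterministic geometry is routine given comparability of spacings; so the real content is the uniform spacing comparability, which is exactly what the three parts of Lemma~\ref{l:densityT} were designed to deliver.
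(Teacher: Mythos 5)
Your proposal follows essentially the same route as the paper's proof: a coupon-collector/Chernoff bound (the paper's Lemma \ref{l:partition}, relying on $\tau>1$) to guarantee every quantile interval is occupied, reduction of the nearest-neighbor claim to the inequality $|I_{i,n}|+|I_{i+1,n}|<\sum_{k=1}^{K}|I_{i-k,n}|$, and a bulk/tail split in which parts (i)--(ii) of Lemma \ref{l:densityT} give spacings comparable to $1/m_n$ on the compact range while part (iii) gives the two-sided bound $|I_{i,n}|\asymp l_n\bigl((m_n-i)/m_n\bigr)^{-(\alpha+1)/\alpha}$ in the tail. The only caveat is that in the tail one cannot obtain the two-sided spacing bound by simply subtracting the two-sided approximations of $F_{T_n}^{-1}$ (differences of quantities known only up to multiplicative constants are not controlled); the paper instead writes $1/m_n=\int_{I_{i,n}}f_{T_n}$ and sandwiches this integral using (iii) together with a tail-mass estimate for the right endpoint, which uses exactly the hypotheses you invoke.
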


The proofs of these two lemmas are contained in the Appendix.

We are now ready to state the main result showing that the extremes generated from the same underlying factor will also belong to the same connected component of the extremal $k_n$-NN graph with probability tending to one, under appropriate regularity conditions. This time, for simplicity, we only consider the symmetric case. 

\begin{theorem} \label{pr:d2.same}
	Assume \eqref{e:temp.ass}, \eqref{e:density.bounds} and let $d=2$. Then, if $k_n > {G}\log n$  with large enough ${G}>0$, 
 there is a sequence $({B}_{n,2})$ of events with $\bbP({B}_{n,2})\to 1$ as
 $n\to\infty$ such that, 
for all $n$ large enough, on the event ${B}_{n,2}$, any two points 
$\BY_{i_1}^{(j)}$ and 
$\BY_{i_2}^{(j)}$, $ i_1=1,\ldots, N_n^{(j)}$, $
i_2=1,\ldots, N_n^{(j)}$ will belong to the same 
connected component of the $k_n$-NN graph for any $j=1,\ldots, p$.  
\end{theorem}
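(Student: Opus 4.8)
The plan is to upgrade Lemma~\ref{l:intervals} from a statement about a single ordered chain of intervals into a statement about connectivity of the full $k_n$-NN graph on the points $\BY_i^{(j)}$, $i=1,\ldots,N_n^{(j)}$, and to do so it suffices, by the shift- and scale-invariance of nearest-neighbor graphs noted after \eqref{e:Y}, to work with the normalized deviations $u_n(\BY_i^{(j)}-\bc_j)$. By the decomposition \eqref{e:main.term}, each such deviation equals $\BM^{(i)}+\BD^{(i)}$, where in the case $d=2$ the main term $\BM^{(i)}$ lies on the fixed line $\R\boldb$ and is, after conditioning, an i.i.d.\ sample from $F_{T_n}$ (written along $\boldb$), while $\BD^{(i)}$ is a lower-order remainder. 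I would first show that $\BD^{(i)}$ is negligible on the relevant scale: using Lemma~\ref{l:densityT}(iii) (the Pareto-type bound on $f_{T_n}$) together with $N_n^{(1)}\sim cw_1 nu_n^{-\alpha}$ from \eqref{e:In.size.j}, the typical gaps between consecutive order statistics of the $\BM^{(i)}$ along $\boldb$ are of a definite polynomial order in $n$, whereas the $\BD^{(i)}$ are $o_p$ of that order (this is where the extra conditions \eqref{e:temp.ass}, \eqref{e:density.bounds} on the noise density and the careful choice of $h_n$ in \eqref{e:hn} enter). Hence, on an event of probability tending to one, the $k_n$-NN relations among the true deviations coincide with those among the projected main terms.

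Next I would run the covering argument. Partition $[0,1]$ into $m_n$ quantile blocks and let $I_{i,n}$, $J_{i,n}=I_{i,n}\boldb$ be as in \eqref{e:I.in}. By the definition $m_n=N_n^{(1)}/\lceil\tau\log N_n^{(1)}\rceil$ with $\tau>1$, a routine balls-in-bins / Poissonization estimate (Chernoff bound on a Binomial$(N_n^{(1)},1/m_n)$, union bound over $m_n$ blocks) shows that, on an event of probability tending to one, \emph{every} block $J_{i,n}$ contains at least one point and in fact at least $\lfloor c\tau\log N_n^{(1)}\rfloor$ points for a suitable $c>0$; this is exactly the regime where $\tau>1$ makes the union bound summable. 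Combine this with Lemma~\ref{l:intervals}: on the intersection of the two events, for each $i$ every point in $J_{i,n}$ has all points of $J_{i-1,n}\cup J_{i+1,n}$ strictly closer than any point outside $\bigcup_{|i_1-i|\le K_0}J_{i_1,n}$, and the number of points within that $(2K_0+1)$-block window is at most $O(K_0\tau\log N_n^{(1)})$. Therefore, provided $k_n>G\log n$ with $G$ large enough (large enough to dominate this $O(\tau\log N_n^{(1)})$ bound, and noting $\log N_n^{(1)}\sim\log(nu_n^{-\alpha})\le\log n$), every point in $J_{i,n}$ is a $k_n$-nearest neighbor of every point in the adjacent blocks, so $J_{i,n}$ and $J_{i+1,n}$ are joined by an edge in the $k_n$-NN graph (for either the "or" or the "mutual" symmetrization, since the relation is shown to hold symmetrically in both directions).

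Chaining these adjacency edges across $i=1,\ldots,m_n$ (handling the $\le I_0$ boundary blocks by a direct argument — they contain only $O(I_0\log N_n^{(1)})$ points, which are absorbed into a single neighboring block) shows all $N_n^{(1)}$ points lie in one connected component; I would then take $\tilde B_{n,2}$ to be the intersection of the density-regularity event, the $\BD^{(i)}$-negligibility event, the occupancy event, and the event of Lemma~\ref{l:intervals}, and finally intersect with the analogous events for each $j=1,\ldots,p$ (finitely many) and invoke Corollary~\ref{c:weaklim.sym} / the symmetric analogue of Lemma~\ref{l:weak.limit2} in place of the nonnegative-case statements to obtain ${B}_{n,2}$ with $\bbP({B}_{n,2})\to1$. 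The main obstacle is the first step — controlling $\BD^{(i)}$ and the inter-point gaps simultaneously so that the ordered-chain geometry of Lemma~\ref{l:intervals} survives perturbation — together with getting the constant $G$ in $k_n>G\log n$ to be compatible with both the $o(nu_n^{-\alpha})$ upper bound needed in Proposition~\ref{pr:disconn.outside} and the logarithmic occupancy lower bound needed here; everything else is a union bound.
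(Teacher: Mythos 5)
Your proposal follows essentially the same route as the paper: reduce to the normalized deviations, split them as $\BM^{(i)}+\BD^{(i)}$ via \eqref{e:main.term}, use the quantile-block occupancy bounds (Lemma \ref{l:partition}) together with the geometric statement of Lemma \ref{l:intervals} to chain adjacent blocks into a single connected component once $k_n$ exceeds a constant times $\log n$, and treat $\BD^{(i)}$ as a small perturbation of the one-dimensional main terms along $\boldb$.

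One intermediate claim is stated too strongly and, as written, would fail. You assert that because the $\BD^{(i)}$ are $o_p$ of the typical inter-point gap, ``the $k_n$-NN relations among the true deviations coincide with those among the projected main terms.'' Exact coincidence does not follow: the bound available on the event ${B}_n$ is $\|\BD^{(i)}\|\leq c\,h_n^2/u_n=o(l_n)$, i.e., the perturbation is small relative to the block length $l_n=1/m_n$, not relative to the minimal spacing between consecutive order statistics of the $\BM^{(i)}$, which is generically much smaller than the typical gap of order $1/N_n^{(1)}$. Nearly equidistant pairs can therefore exchange their nearest-neighbor ranks after the perturbation, so the two $k_n$-NN graphs need not have the same edge set. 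The correct, and sufficient, statement is the weaker one the paper uses: the perturbed blocks $\tilde J_{i,n}=\bigl\{\BM^{(j)}+\BD^{(j)}:\BM^{(j)}\in J_{i,n}\bigr\}$ still satisfy the conclusion of Lemma \ref{l:intervals}, possibly with a larger $K_0$, because the blocks have length at least $l_n/G$ by \eqref{e:not.short} while the shifts are uniformly $o(l_n)$. Since the remainder of your argument only invokes block-level adjacency (every point of a block is closer to all points of the two neighboring blocks than to anything outside a window of $O(K_0\log N_n^{(1)})$ points), substituting this weaker claim repairs the step and brings your proof into line with the paper's.
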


The proof of the theorem has been relegated to the appendix.


It follows from Proposition \ref{pr:disconn.outside} and Theorem \ref{pr:d2.same} that, with probability tending to one as $n\to\infty$, the extremal $k_n$-NN graph obtained from a sample drawn from \eqref{e:X} will have exactly $m\le p$ connected components corresponding 
 to the $m$ distinct asymptotic point masses \eqref{eq:ck} of the model. In other words, the extremal $k_n$-NN graph consistently identifies the underlying clusters through its connected components. This in turn implies that spectral clustering will be consistent by Proposition \ref{prop:Lap_spectrum}. We have therefore shown the following main practical result.
 
\begin{corollary}\label{cor:consistency}
    Assume \eqref{e:temp.ass}, \eqref{e:density.bounds}, $d=2$, $k_n=o(nu_n^{-\alpha})$ and $k_n> G\log n$. Then, spectral clustering will consistently identify the clusters of extremes arising from the linear factor model.
\end{corollary}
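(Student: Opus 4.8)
The plan is to assemble ingredients that are already in place, the only genuinely new content being the passage from ``the $k_n$-NN graph has the right number of connected components'' to ``the output of the algorithm is correct.'' I would enumerate by $\bc^{(1)},\dots,\bc^{(m)}$ the \emph{distinct} elements among $\bc_1,\dots,\bc_p$ (in the symmetric case, among $\pm\bc_1,\dots,\pm\bc_p$); by \eqref{e:spectral.m} these are precisely the atoms of $\Gamma$, so ``consistently identifying the clusters'' means producing, with probability tending to one, the partition of the extreme indices $\mathcal{I}_n$ into the groups whose associated atom is $\bc^{(\ell)}$, $\ell=1,\dots,m$. I would first record that the two conditions on $k_n$ are jointly compatible with \eqref{e:un.special}: together they require $nu_n^{-\alpha}/\log n\to\infty$, which holds, e.g., for $u_n=n^{(\alpha+2)/(\alpha(\alpha+3))}\log n$ (this $u_n$ satisfies \eqref{e:un.special} and gives $nu_n^{-\alpha}=n^{1/(\alpha+3)}(\log n)^{-\alpha}\to\infty$); for such $u_n$, any $k_n$ with $G\log n<k_n=o(nu_n^{-\alpha})$ simultaneously meets the hypotheses of Proposition~\ref{pr:disconn.outside} and of Theorem~\ref{pr:d2.same}.

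Next I would put $B_n^\ast=B_n\cap B_{n,1}\cap B_{n,2}$, with $B_n$ from Lemma~\ref{l:only.one} and $B_{n,1},B_{n,2}$ from Proposition~\ref{pr:disconn.outside} and Theorem~\ref{pr:d2.same}; then $\bbP(B_n^\ast)\to1$. On $B_n^\ast$, by Lemma~\ref{l:only.one} the decomposition $\mathcal{I}_n=\bigcup_{j=1}^p\mathcal{I}_n^{(j)}$ of \eqref{e:split.extremes} is a genuine disjoint union, and each $\mathcal{I}_n^{(j)}$ is nonempty (in fact of size $>k_n$, by the definition of $B_{n,1}$). Theorem~\ref{pr:d2.same} says all vertices indexed by a given $\mathcal{I}_n^{(j)}$ lie in a single connected component of the $k_n$-NN graph, while Proposition~\ref{pr:disconn.outside} says vertices from $\mathcal{I}_n^{(j_1)}$ and $\mathcal{I}_n^{(j_2)}$ lie in different components whenever $\bc_{j_1}\neq\bc_{j_2}$. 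Hence on $B_n^\ast$ the connected components of the graph are exactly $\mathcal{A}_\ell:=\bigcup_{j:\bc_j=\bc^{(\ell)}}\mathcal{I}_n^{(j)}$, $\ell=1,\dots,m$; in particular there are precisely $m$ of them. (In the symmetric case one uses Corollary~\ref{c:weaklim.sym} and the symmetric analogues in the same way; in the nonnegative case the argument is identical with $\bc_j$ in place of $\pm\bc_j$.)

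I would then run the algorithm with its parameter equal to $m$. By Proposition~\ref{prop:Lap_spectrum}, on $B_n^\ast$ the eigenvalue $0$ of $L$ in \eqref{eq:Lap} has multiplicity $m$ and its eigenspace is spanned by the mutually orthogonal indicators $\delta_{\mathcal{A}_1},\dots,\delta_{\mathcal{A}_m}$. Therefore, whatever orthonormal basis $\bu_1,\dots,\bu_m$ of this eigenspace is computed in Step~1, each $\bu_t$ is constant on each $\mathcal{A}_\ell$, so the rows of $U$ indexed by a common component are identical, while rows indexed by distinct components are orthogonal; after the row-normalization of Step~2, the $N_n$ rows of $V$ therefore consist of $m$ pairwise orthogonal unit vectors $\bv^{(1)},\dots,\bv^{(m)}$, the vector $\bv^{(\ell)}$ occurring exactly $|\mathcal{A}_\ell|$ times. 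For such a configuration the $K$-means objective is minimized, with value $0$, by --- and only by --- the partition into the $m$ groups of equal rows, so Step~3 returns $C_1,\dots,C_m$ equal (after relabeling) to $\mathcal{A}_1,\dots,\mathcal{A}_m$, and Step~4 assigns each $\BX_i$ with $i\in\mathcal{A}_\ell$ to $C_\ell$, i.e., to the cluster of the atom $\bc^{(\ell)}$ around which $\BX_i/\|\BX_i\|$ concentrates by Lemma~\ref{lem:UB}. Since $\bbP(B_n^\ast)\to1$, this is precisely the asserted consistency.

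The genuinely hard analysis --- tracking the random, $u_n$-dependent angular sample through a $k_n$-NN graph --- is entirely absorbed into Proposition~\ref{pr:disconn.outside}, Theorem~\ref{pr:d2.same} and Lemma~\ref{lem:UB}; modulo those, the corollary is bookkeeping. The one place I expect to need a moment's care is the spectral step: one must observe that ``$K$-means recovers the components'' does not depend on which orthonormal basis of $\ker L$ the algorithm happens to pick, since \emph{any} such basis produces rows of $V$ that are constant on components and orthogonal across components, so the point cloud seen by $K$-means is always $m$ perfectly separated atoms.
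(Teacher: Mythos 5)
Your proof is correct and takes essentially the same route as the paper, which likewise deduces the corollary by combining Proposition \ref{pr:disconn.outside} and Theorem \ref{pr:d2.same} to get exactly $m$ connected components matching the distinct atoms, and then invokes Proposition \ref{prop:Lap_spectrum}. The only difference is that you spell out the bookkeeping the paper leaves implicit (the event $B_n^\ast$, the compatibility of the two conditions on $k_n$, and why the $K$-means step on the eigenvector rows necessarily recovers the components), which is a welcome elaboration rather than a departure.
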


\begin{remark}
    In practice consistent clustering can be achieved by taking $k_n>  G_0\log N_n$ for some $G_0>0$ and $k_n=o(N_n)$ since $N_n/(nu_n^{-\alpha})\cip cw$, as $n\to\infty$. In our experiments we chose $k_n=\lceil\frac{N_n}{\tau\log N_n}\rceil+1$ for some $\tau>1$.
\end{remark}

Corollary  \ref{cor:consistency} suggests a simple strategy for estimating the asymptotic angular measure of the extremes generated from the linear factor model \eqref{e:X}. Assume we run spectral clustering on  the extremal $k_n$-NN graph. Then we can denote by $\hat{\mathcal{I}}_n^{(j)}$ the set of indices corresponding to the $j$th cluster found by the algorithm for $j=1,\dots,m$. With these sets we can define $\hat{N}_n^{(j)}=\mbox{card}(\mathcal{I}_n^{(j)})$ and  estimate the centers of the spectral measure and their respective masses as
\begin{equation}
    \label{eq:ck_estimates}
\hat\bc_j=\frac{1}{\hat{N}_n^{(j)}}\sum_{i\in\hat{\mathcal{I}}_n^{(j)}}\frac{\bX_i}{\|\bX_i\|} \quad \mbox{ and } \quad \hat\pi_j=\frac{\hat{N}_n^{(j)}}{N_n},\quad j=1,\dots,p.
\end{equation}
The following result is an inmediate consequence of the main results of this section.
\begin{corollary}
\label{rem:consistent_estimation}
Suppose $m=p$ and that the conditions of Proposition \ref{pr:disconn.outside} and Theorem \ref{pr:d2.same} hold. Then,  $\hat\bc_j\cip \bc_j$ and $\hat \pi_j\cip w^{-1}\|\ba^{(j)}\|^\alpha$ for all $j=1,\dots,p$.
\end{corollary}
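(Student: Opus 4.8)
The plan is to leverage the fact that, under the hypotheses of Proposition \ref{pr:disconn.outside} and Theorem \ref{pr:d2.same}, there is a sequence of events $B_{n,1}\cap B_{n,2}$ of probability tending to one on which the connected components of the extremal $k_n$-NN graph coincide exactly with the index sets $\mathcal{I}_n^{(1)},\ldots,\mathcal{I}_n^{(p)}$ (after merging those $j$ with identical centers $\bc_j$; since we assume $m=p$, all centers are distinct and no merging occurs). Hence, on this event, up to a relabeling of the clusters, $\hat{\mathcal{I}}_n^{(j)}=\mathcal{I}_n^{(j)}$ and $\hat N_n^{(j)}=N_n^{(j)}$ for every $j$. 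So it suffices to prove the two convergences with $\hat\bc_j,\hat\pi_j$ replaced by the ``oracle'' quantities built from $\mathcal{I}_n^{(j)}$, and then note that agreement on an event of probability tending to one transfers convergence in probability.

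For the mass estimates, the oracle version is $N_n^{(j)}/N_n = \bigl(N_n^{(j)}/(nu_n^{-\alpha})\bigr)\big/\bigl(N_n/(nu_n^{-\alpha})\bigr)$, and combining \eqref{e:In.size.j} with \eqref{e:In.size} and the continuous mapping theorem for the ratio gives $N_n^{(j)}/N_n\cip \|\ba^{(j)}\|^\alpha/w = w^{-1}\|\ba^{(j)}\|^\alpha$, which is exactly $\hat\pi_j$'s claimed limit. For the center estimates, the oracle version is the average of the $\BY_i^{(j)}=\BX_i/\|\BX_i\|$ over $i\in\mathcal{I}_n^{(j)}$. Here I would invoke Lemma \ref{lem:UB}: on the event $B_n$ (also of probability tending to one), every such $\BY_i^{(j)}$ lies within $\tfrac{8(a^*)^2}{\|\ba^{(j)}\|^\alpha}\,h_n/u_n$ of $\bc_j$. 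Since the average of points each within distance $\varepsilon_n:=c\,h_n/u_n$ of $\bc_j$ is itself within $\varepsilon_n$ of $\bc_j$, and $h_n/u_n\to 0$ by \eqref{e:hn}, we get $\|\hat\bc_j-\bc_j\|\le \varepsilon_n\to 0$ deterministically on $B_n$, hence $\hat\bc_j\cip\bc_j$. (One should also observe $N_n^{(j)}\to\infty$ in probability, again from \eqref{e:In.size.j} together with $nu_n^{-\alpha}\to\infty$, so the averages are well-defined for large $n$ with probability tending to one.)

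The only genuine subtlety — and the main obstacle — is the labeling/identifiability issue: the algorithm's clusters $\hat{\mathcal{I}}_n^{(1)},\ldots,\hat{\mathcal{I}}_n^{(m)}$ are produced in an arbitrary order, so the statement ``$\hat\bc_j\cip\bc_j$'' is only meaningful up to a permutation of $\{1,\ldots,p\}$, and one must phrase it accordingly (or fix the permutation by, e.g., matching each discovered cluster to its nearest theoretical center, which is unambiguous for large $n$ once the components separate by at least $\min_{j_1\ne j_2}\|\bc_{j_1}-\bc_{j_2}\|-o(1)>0$ as in the proof of Proposition \ref{pr:disconn.outside}). Once this bookkeeping is made explicit, everything else is a routine combination of Lemma \ref{lem:UB}, \eqref{e:In.size}, \eqref{e:In.size.j}, and the fact that a bound holding on events of probability tending to one yields convergence in probability.
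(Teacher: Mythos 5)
Your proposal is correct and follows exactly the route the paper intends: the paper offers no written proof (it declares the corollary an ``immediate consequence'' of the preceding results), and your argument supplies precisely the intended details --- identifying the algorithm's clusters with the oracle index sets $\mathcal{I}_n^{(j)}$ on $B_{n,1}\cap B_{n,2}$, deducing $\hat\pi_j$'s limit from \eqref{e:In.size} and \eqref{e:In.size.j}, and controlling $\hat\bc_j$ via Lemma \ref{lem:UB} and $h_n/u_n\to 0$. Your remark about the labeling of clusters (the convergence holds only up to a permutation, which can be fixed by matching to the nearest theoretical center) is a legitimate point the paper glosses over.
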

Note that in practice one can also normalize the estimates $\hat\bc_j$ to ensure that they lie in the unit sphere for all $n$. Clearly the resulting estimators remain consistent under the conditions of Corollary \ref{rem:consistent_estimation}.

 Even though the theoretical results of this section use the assumption $\alpha >1$ in \eqref{e:density.bounds}, we believe they should also hold when $\alpha\in (0,1]$. The numerical experiments shown in the next section supports this assertion.

\section{Numerical illustrations} \label{sec:numerics}

 In all the examples considered below we compute weighted adjacency matrices using the exponential kernel $d(\bx,\by)=\exp(-s\|\bx-\by\|)$ with $s=1$ and select the number of clusters as suggested by the screeplots of the fully connected weighted adjacency matrices ${W}$. It matched nicely the correct number of clusters, when known. We consider sample sizes $n=\{1000,5000,25000,125000\}$ and take a sample of extremes corresponding to observations whose Euclidean norm is larger or equal to the following vector of corresponding sample quantiles:  {$\beta=\{0.9,0.96,0.984,0.9968\}$}, respectively. These quantiles were chosen to lead to samples of extremes of sizes $N_n=\{100,200,400,800\}$, correspondingly.  For these extremes we define $k_n$-nearest neighbors graphs with ${k_n=\lceil\frac{N_n}{C\log N_n}\rceil+1}$, the corresponding values of the constants are in the vector $ C=\{3,5,7,9\}$  .

\subsection{Linear factor model with and without noise} \label{sec:lfm}
 
As a first example, consider $d-$dimensional vectors that follow the $p-$dimensional linear factor model 
 \begin{equation}\label{eq:LFMnoise}
     \bX= A \bZ+\sigma\bm{\varepsilon}\,,
 \end{equation}
 where $A\in \mathbb{R}^{d\times p}$ is a matrix of factor loadings, $\bZ=(Z_1,\ldots,Z_p)^\top$ is a $p$-dimensional vector consisting of iid standard Fr{\'e}chet distributed components ($\alpha=1$), $\sigma\geq0$ regulates the signal to noise ratio and  $\bm{\varepsilon}$ is a noise vector obtained by multiplying a univariate independent  standard Fr{\'e}chet with an independent $p$-dimensional random vector of   iid standard normals, i.e.,
 \begin{equation}\label{eq:noiseterm}
     \bm{\varepsilon}= \bN \eta \,,
 \end{equation}
 where $\eta$ is standard Fr{\'e}chet, $\bN=(N_1,\ldots,N_p)^\top$ is a  $p-$random vector consisting of iid standard normals, and $\bZ,\eta$, and $\bN$ are independent. Now using computations similar to those given in Section \ref{sec:spectrum}, it can be shown that 
 \begin{eqnarray} 
     \frac{\bbP(\|\bX\|>x)}{\bbP(Z_1>x)}&\sim &\frac{\bbP(\sum_{i=1}^p\|\ba^{(i)}\|^2Z_i^2 +\sigma^2\|\bN\|^2\eta^2>x^2)}{\bbP(Z_1>x)} \nonumber\\
     &\sim &\frac{\sum_{i=1}^p\bbP(\|\ba^{(i)}\|^2Z_i^2>x^2)+\bbP(\sigma^2\|\bN\|^2\eta^2>x^2)}{\bbP(Z>x)}\nonumber\\
     &\to &\sum_{i=1}^p\|\ba^{(i)}\| +\sigma\bbE\|\bN\|\,,~~~\mbox{as $x\to\infty$,}\label{eq:normLFM}
 \end{eqnarray}
 where the last line follows from an application of Breiman's lemma, see \cite{breiman1965some}. Taking this calculation one step further, we find that the angular measure associated with the model is  \eqref{eq:LFMnoise} (see \eqref{e:spectral.m}),
 \begin{equation} \label{eq:specLFMnoise}
  \Gamma(\cdot) =w^{-1}\left(\sum_{i=1}^p \|\ba^{(i)}\|
       \delta_{\frac{\ba^{(i)}}{\|\ba^{(i)}\|}}(\cdot)+\sigma\bbE\|\bN\|
       \delta_{\frac{\bN}{\|\bN\|}}(\cdot)\right)\,,
\end{equation}
where $w=\sum_{i=1}^p\|\ba^{(i)}\| +\sigma\bbE\|\bN\|$.  In other words, $\Gamma$ has discrete mass points  at $\frac{\ba^{(i)}}{\|\ba^{(i)}\|}$ with probability $\|\ba^{(i)}\|/w$, $i=1,\ldots,p$ and a uniform distribution $\bN/\|\bN\|$ on  $\bbs^{d-1}$ with probability $\sigma\bbE\|\bN\|/w$.  This latter piece corresponds to the noise component $\sigma \epsilon$.  So the goal here is to identify the discrete components of $\Gamma$ using our method when the model does not strictly follow the LFM.  
Figure \ref{fig:scatterLFM}  shows pairwise scatter plots of the angular components of extremes generated from a pure signal and a noisy LFM with $\sigma>0$.

 \begin{figure}[h!]
    \hfill
\subfigure[Pure signal LFM]{\includegraphics[scale=0.42]{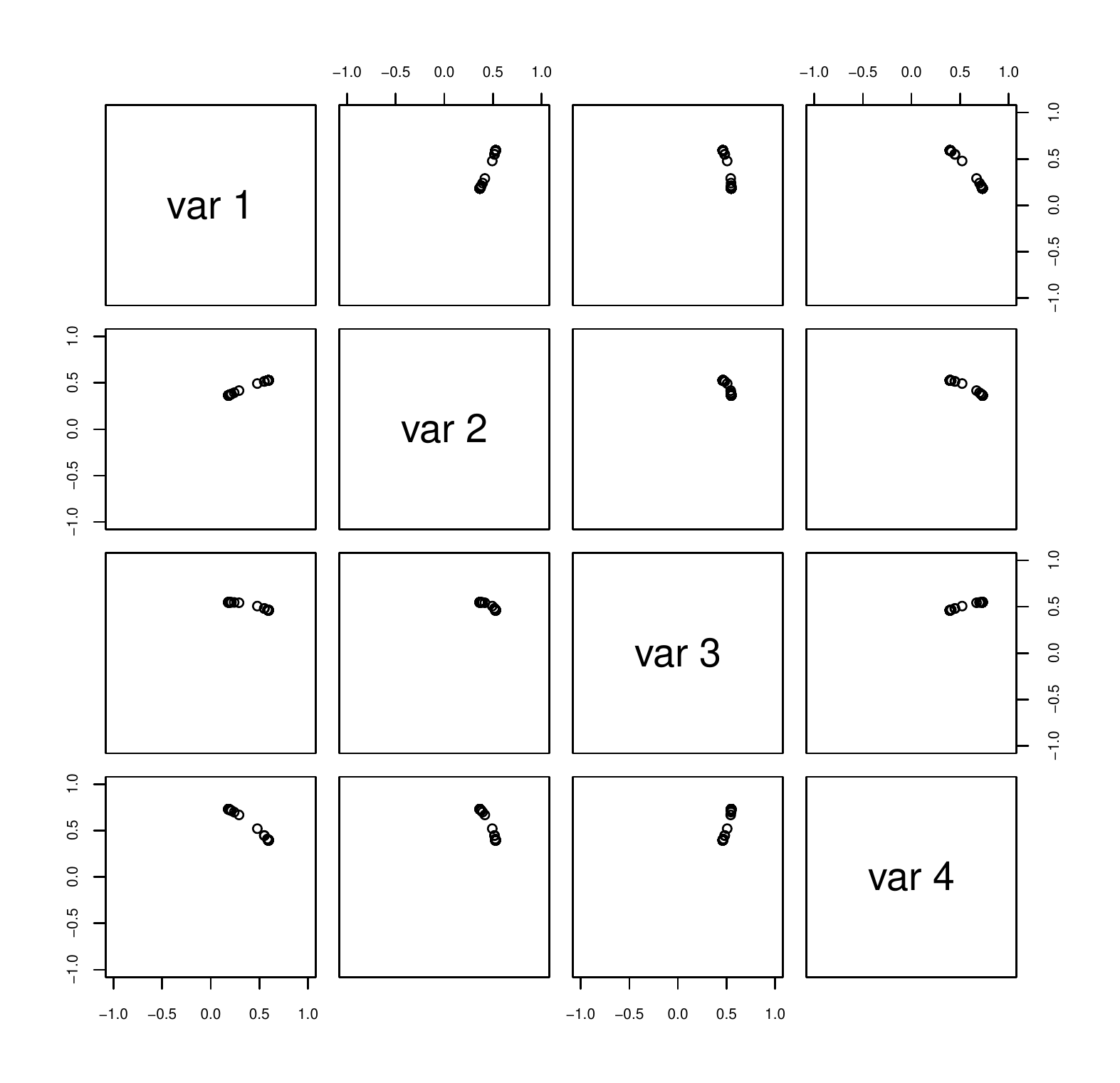}}
\hfill
\subfigure[Noisy LFM]{\includegraphics[scale=0.42]{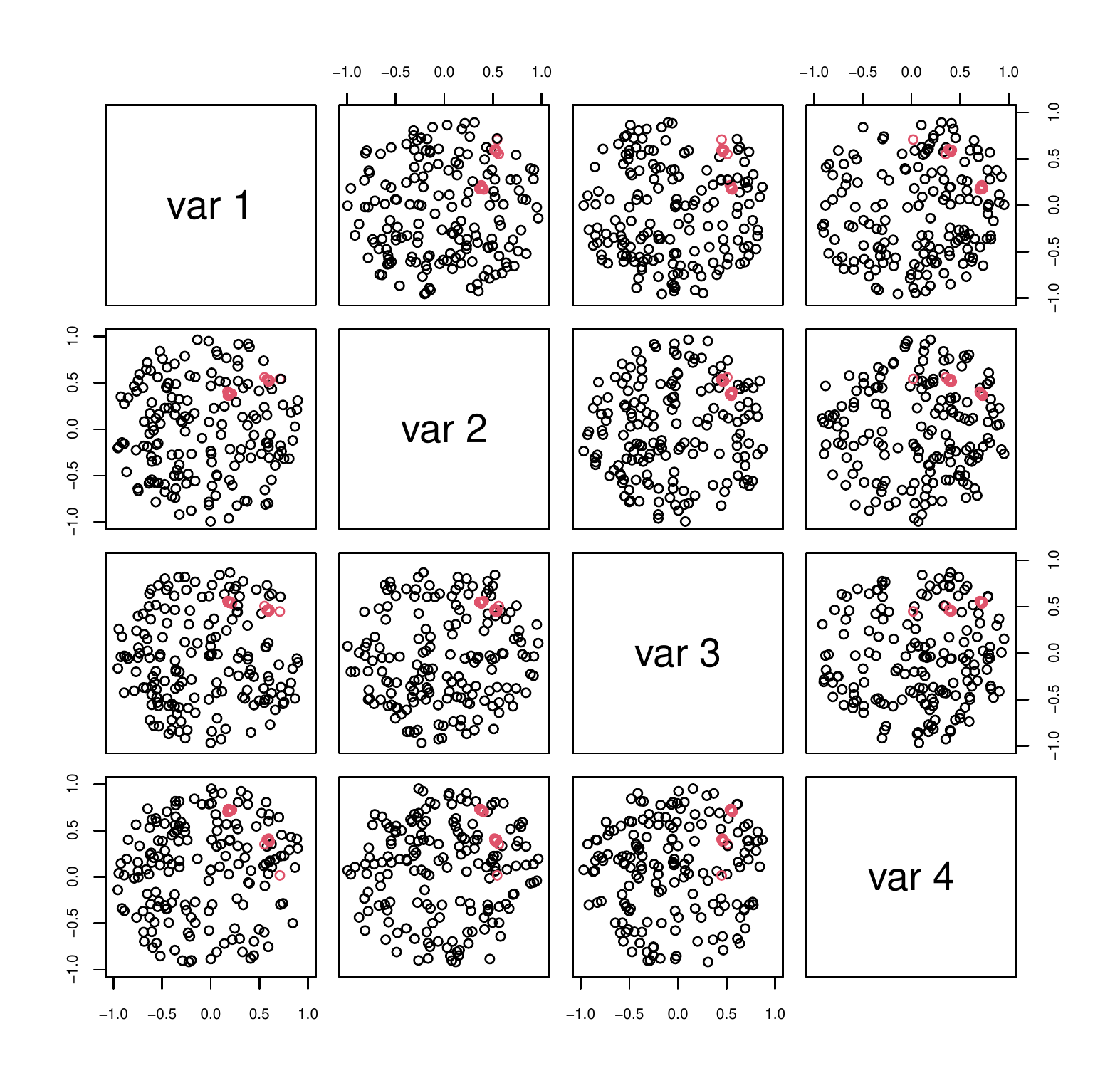}}
\hfill    \caption{{\small Pairwise scatterplots of the angular part of the extremes generated from \eqref{eq:LFMnoise} with factor loading matrix  \eqref{eq:A}, $n=125000$, $N_n=400$ and $\sigma=\{0,1\}$.  In both cases there are two clear clusters corresponding to the signal. The red points in subfigure (b) denote extremes attributed to the signal $A^\top \mathbf{Z}_i$. }}
    \label{fig:scatterLFM}
\end{figure}

 We note that if $\sigma=0$, then  model \eqref{eq:LFMnoise} is approximately equal to the max-linear model $X=(\vee_{j=1}^ka_{1j}Z_{j},\dots,\vee_{j=1}^ka_{pj}Z_{j})^\top$ and will in fact have the same asymptotic spectral measure. 
 Intuitively, this model generates $p$ clusters of extremes since the noise term is only adding uniform noise to the angular measure.  
 
 As part of a simulation study, we consider  $\sigma=\{0,1, 3, 5\}$ and choose
 \begin{equation}\label{eq:A}
    A= \begin{pmatrix} 
 0.1 & 0.9\\
 0.2 & 0.8\\
 0.3 & 0.7 \\
 0.4 & 0.6 \\
 \end{pmatrix}.
 \end{equation}
This model is similar to one of the max-linear models considered in the simulations of  \cite{janssenandwan2020} where our  factor loading matrix $A$ can be viewed as a deterministic version of their random factor loadings. In the simulations we took two clusters  for the pure signal model where $\sigma=0$ and three clusters for the noisy model when $\sigma >0$ as these values are suggested by the typical screeplots we observed; see  Figure \ref{fig:screeplotsMaxN}.

We compute the normalized columns of the $A$ matrix which correspond to the location of the point masses of the spectral distribution (these are the $\bc_k,\,  k=1,2$ in \eqref{eq:ck}).  After applying our method to a single realization of size $n=125000$ with $N_n=400$, $k_n=\lceil \frac{400}{5\log 400}\rceil+1=15$,  visualized in the pairwise scatter plots of Figure \ref{fig:scatterLFM},
 we obtained the estimates of the $\bc_k$  represented in 
Figure \ref{fig:heatmap_LFM}.   These masses on the sphere are estimated by taking the mean of all members in each of the identified clusters,  seen  in Figure \ref{fig:clustersLFM}, and then normalizing it to lie on the unit sphere.  
The two panels in Figure \ref{fig:heatmap_LFM} correspond to the  cases of 2 clusters and no noise and two clusters with uniform noise.  
In the first plot, the heat map does a good job in recreating the relatives size of the mass locations.  In the second panel, the first two columns of the matrix, also reproduce the relative sizes of the columns (increasing in the first and decreasing in the second) of the $A$ matrix.  The third column corresponds to the cluster of  points that have not been assigned to either of the first two clusters.  As such they are essentially scattered uniformly around the unit sphere but away from the locations of the point masses corresponding to the first two columns.  This is reflected in the third cluster having more negative values as indicated by the softer (red colors) in the heat map.

\begin{figure}[h]
    \centering
    \includegraphics[scale=0.65]{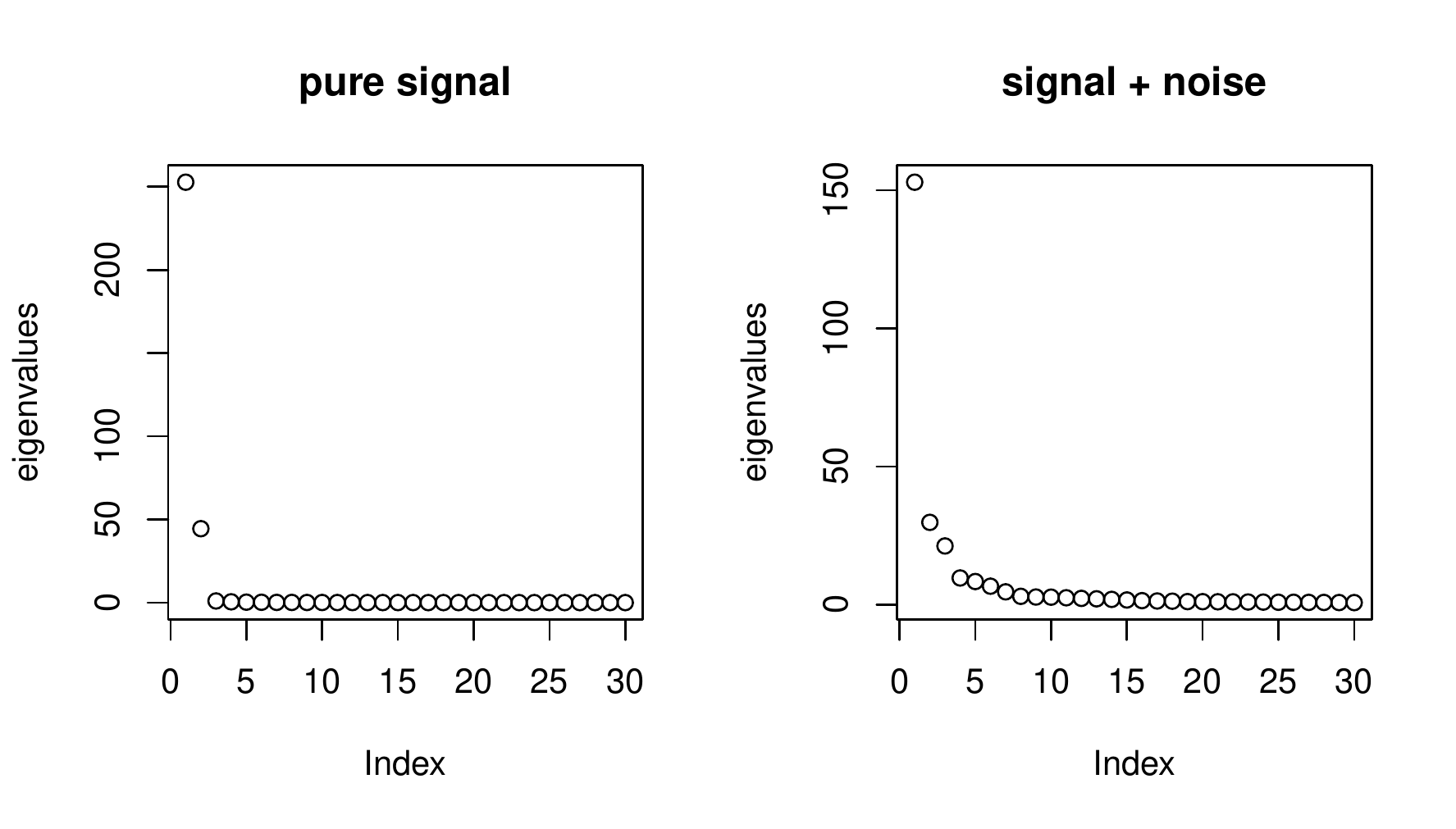}
    \caption{{\small Screeplots of fully connected kernel matrix of pure signal and noisy linear factor models noise models.}}
    \label{fig:screeplotsMaxN}
\end{figure}
 \begin{figure}[h]%
\hfill
\subfigure[pure signal]{\includegraphics[scale=0.35]{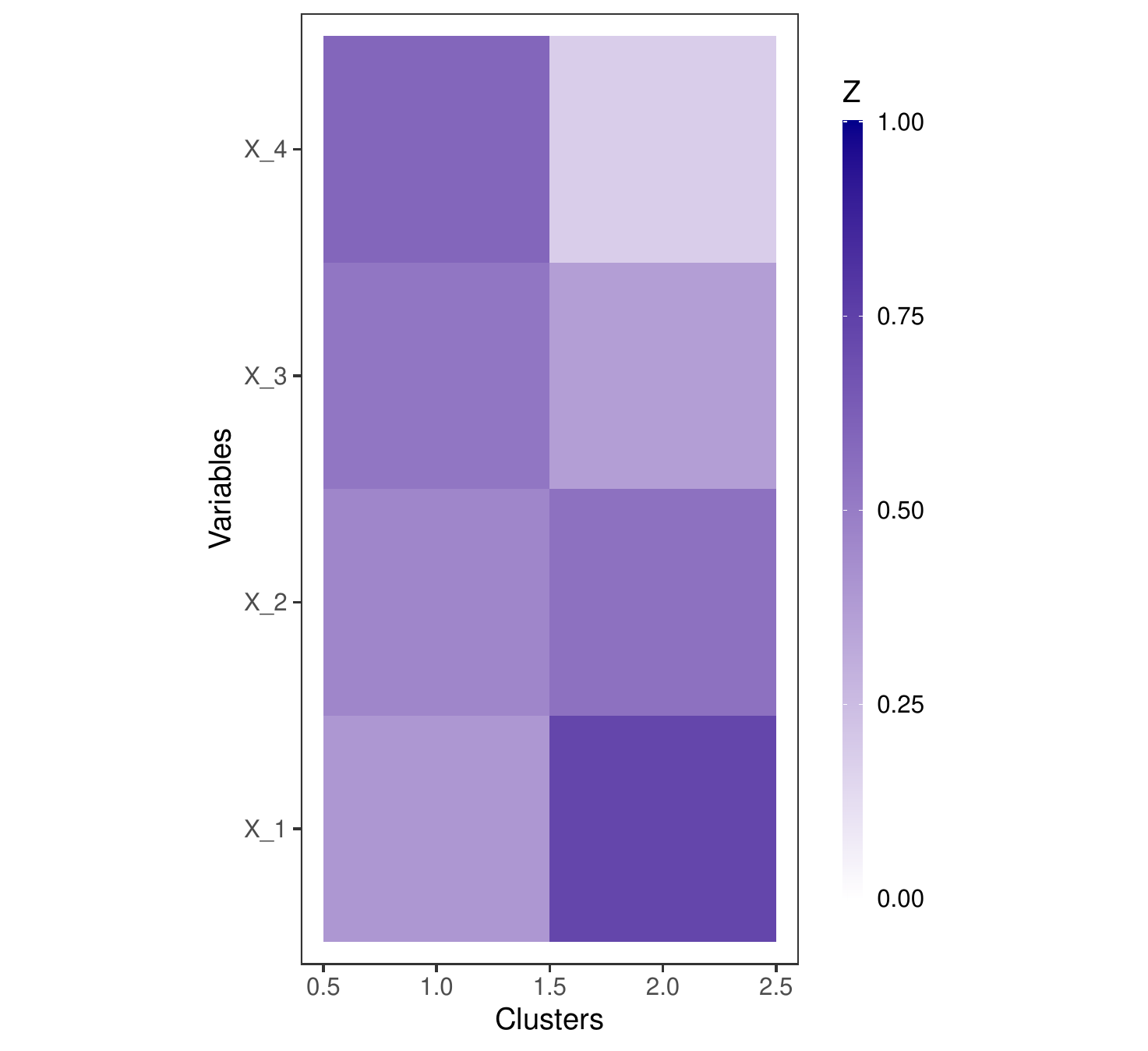}}
\hfill
\centering\subfigure[uniform noise on the sphere]{\includegraphics[scale=0.35]{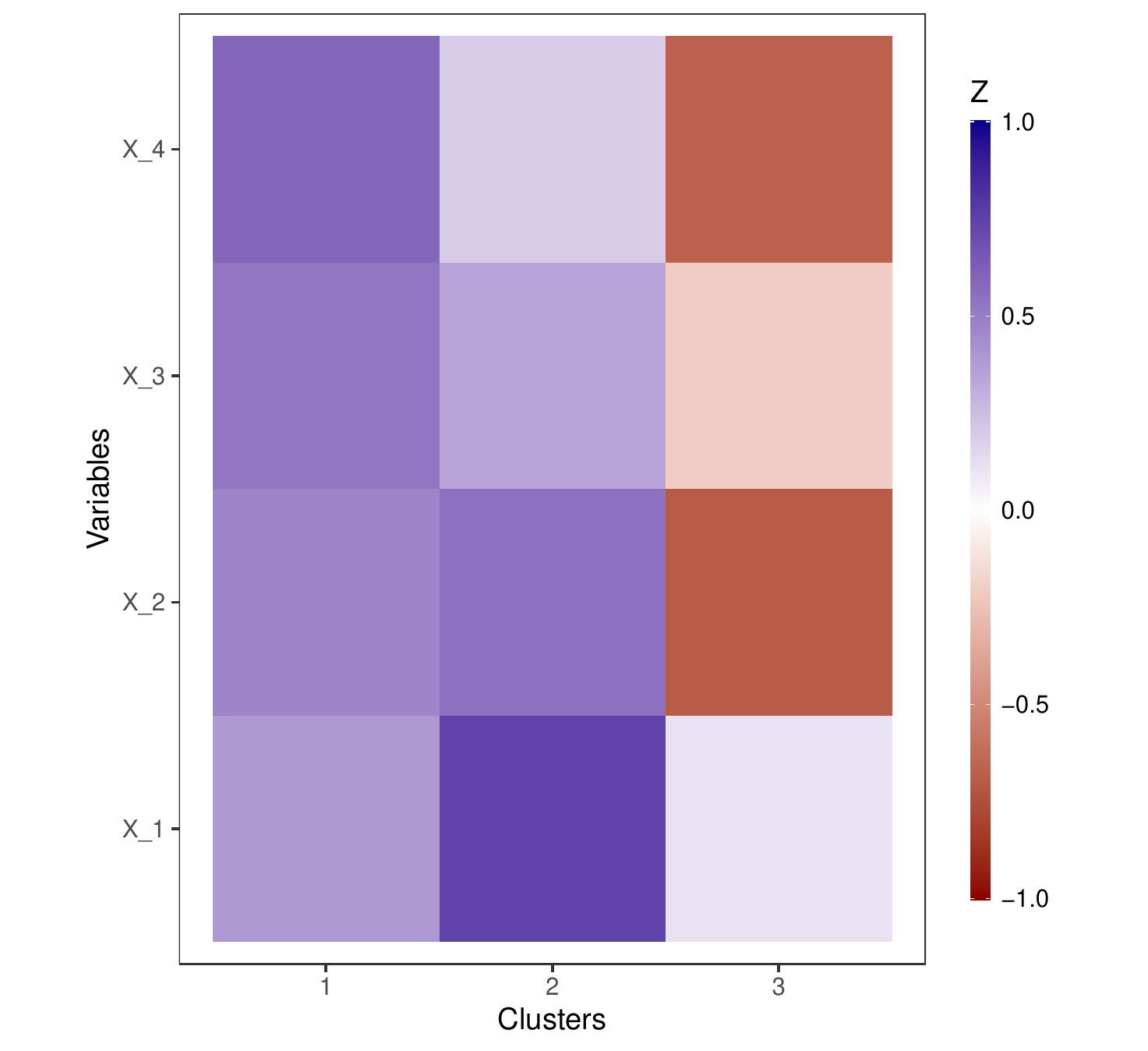}}
\caption{The heat maps show the estimated cluster centers based on the cluster assignments displayed in Figure \ref{fig:clustersLFM}. The extremal sample corresponds to four dimensional extremes generated from LFM given by \eqref{eq:LFMnoise} with loading matrix \eqref{eq:A} and $\sigma=0$ and $\sigma=1$ respectively. In both cases we took $n=125000$, $N_n=400$ and $k_n=15.$} 
    \label{fig:heatmap_LFM}
\end{figure}
\begin{figure}[H]%
\hfill
\subfigure[Pure signal LFM]{\includegraphics[scale=0.45]{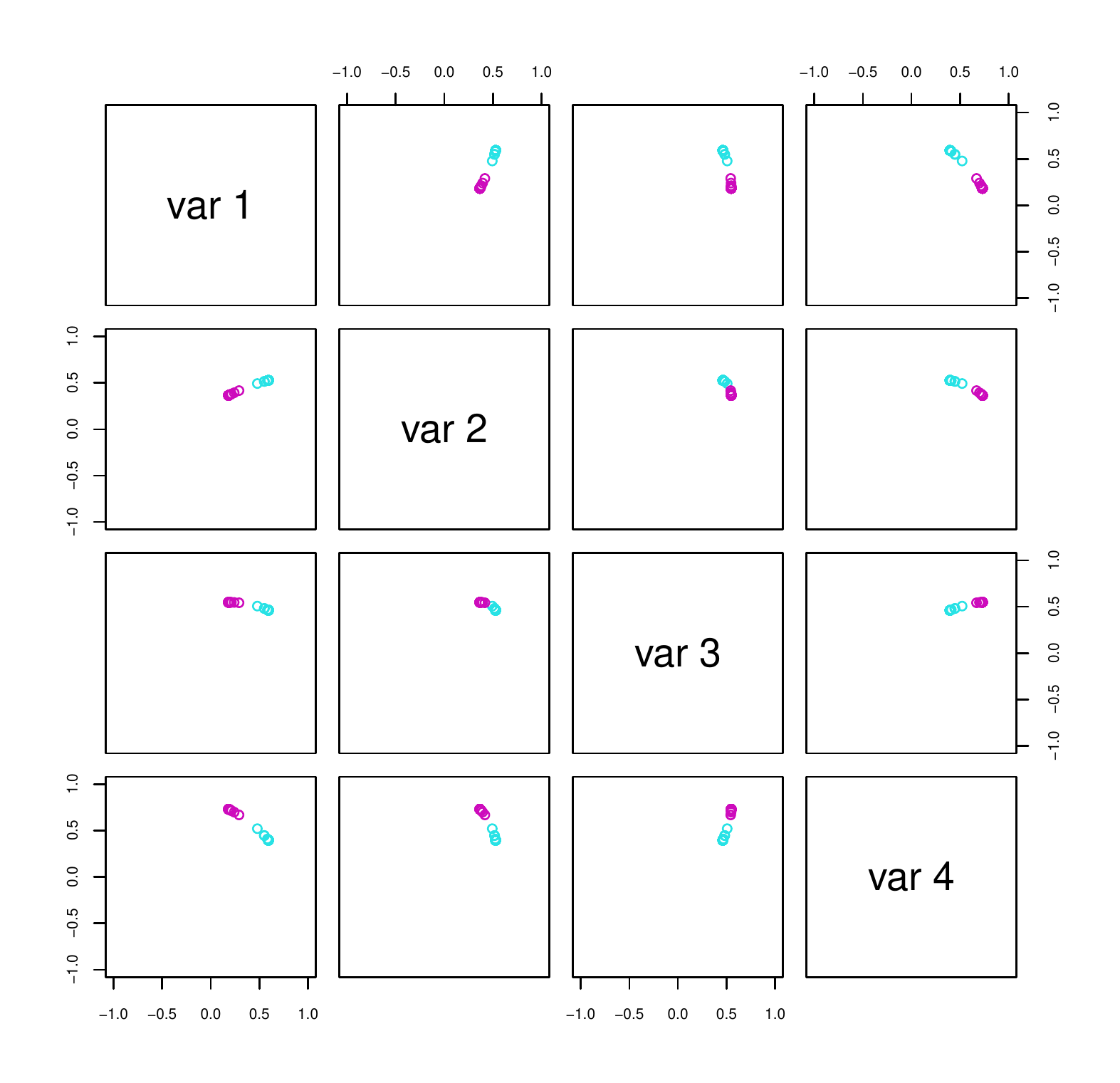}}
\hfill
\subfigure[Noisy LFM]{\includegraphics[scale=0.45]{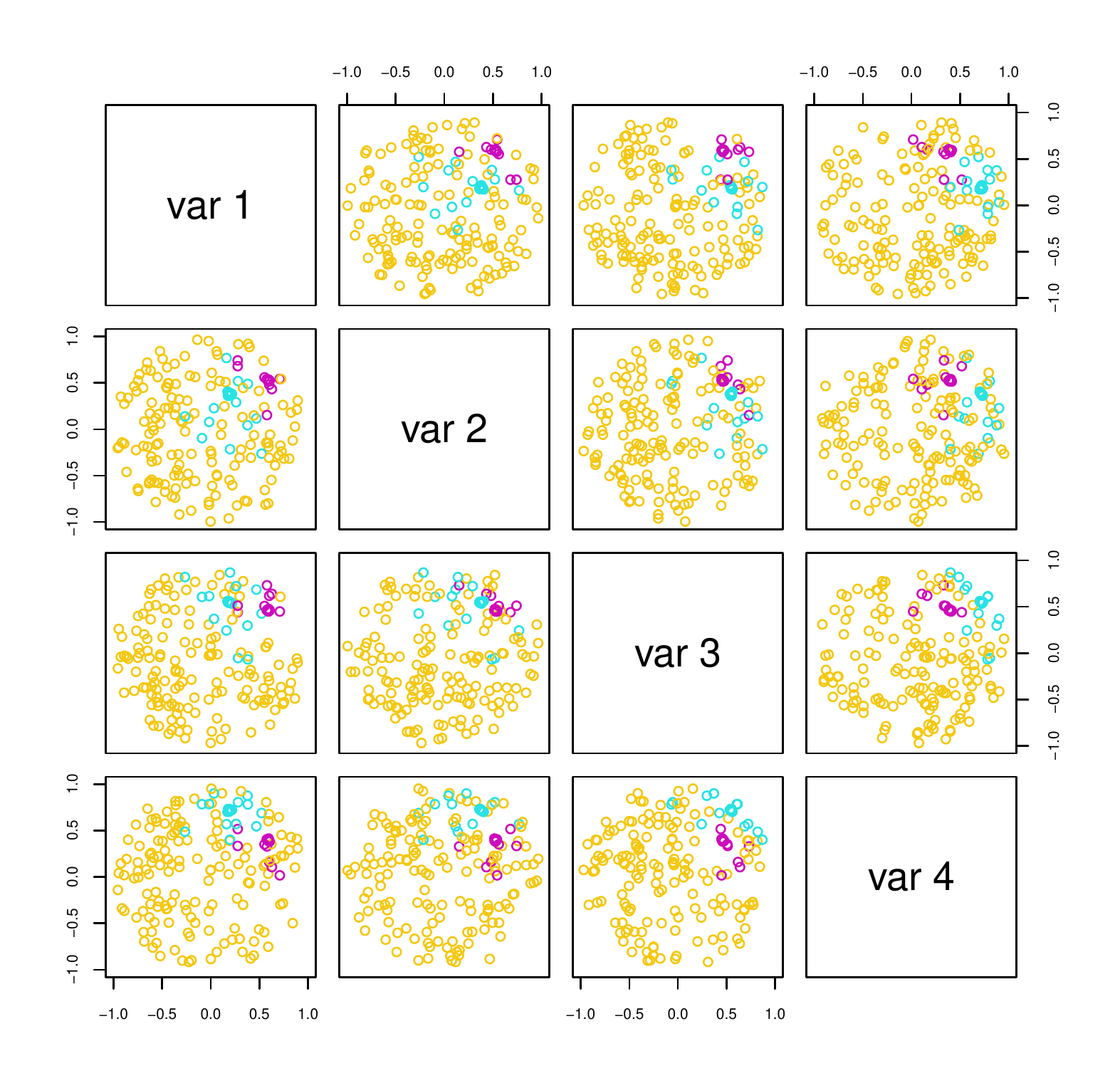}}
\hfill
\caption{Cluster assignments output of spectral clustering applied to data generated from \eqref{eq:LFMnoise} with $n=125000$, $N_n=400$ and $\sigma=\{0,1\}$. In both cases spectral clustering used an extremal $15$-NN graph.}
    \label{fig:clustersLFM}
\end{figure}

A small simulation study was conducted for this LFM model with and without noise.  Based on the screeplots, we used 2 clusters in the noiseless case and 3 clusters in the case with noise.  The two normalized columns of the $A$ were estimated and the boxplot of the estimation error measured in Frobenius norm are displayed in Figure \ref{fig:conv_AML} for $\sigma=0$ and in Figure \ref{fig:conv_nAML} for the case $\sigma>0$. The succession of boxplots in each row correspond to an increasing $N_n$, with the centers and width becoming smaller.  Note that the scales on the plots change across the row.  The boxplots in blue correspond to our method with difference choices of nearest neighbors as a function of $C$, and the yellow boxplot is based on the spherical $k$-means approach considered in \cite{janssenandwan2020}.  In the $\sigma=0$ case, our method performs about the same or slightly better than the spherical $k$-means method.  However, as one adds noise to the model, our method generally outperforms spherical $k$-means.     
In models with larger noise, it can be more difficult to estimate the LFM signal. So to compare performance across difference sample sizes and level of noise, we can calibrate by calculating a notion of  {\it signal to noise ratio}.  In this context we consider the part of the mass in the angular measure associated to the {\it signal} in \eqref{eq:LFMnoise}, which as a function of $\sigma$ is given by
$$
\mbox{SNR}(\sigma):=\frac{\sum_{i=1}^p\|\ba^{(i)}\|}{\sum_{i=1}^p\|\ba^{(i)}\| +\sigma\bbE\|\bN\|}\,.
$$
In the absence of any noise, i.e., $\sigma=0$, then SNR is 1 while as $\sigma\to\infty$, SNR converges to 0.  For the simulation example above for which $d=4, p=2$, we have $\bbE\|\bN\|=\sqrt 2\Gamma(5/2)/\Gamma(2)=1.880$.  Hence SNR$(\sigma)=2.065/(2.065+\sigma1.880)$.  In looking at the various plots in Figure \ref{fig:conv_nAML}, it is instructive to compute the {\it effective sample size} given by ESS$=\mbox{SNR}\times N_n$.  This number essentially gives the expected sample size of the number of observations, from the total $N_n$, that {\it come from the signal}.  With this index in mind, plots that have the same ESS values (reported in the caption of Figure \ref{fig:conv_nAML})  generally show similar results since the procedures are applied to the roughly the same number of extreme observations attributed to the signal component in the model.  We finally note that in this simulation $\alpha=1$ which is not currently covered by our LFM theory but is the setting proposed in the simulations of \cite{janssenandwan2020}. We carried out simulations with $\alpha=0.5$ and $\alpha=2$ and obtained qualitatively the same type of results as the ones reported here. The only noticeable difference was that spherical $k$-means  seems to work better with larger $\alpha$ in the noisy model, but is much worse for small $\alpha$. In both cases spectral clustering outperformed spherical $k$-means.

\begin{figure}[h!]
    \centering
    \includegraphics[scale=0.55]{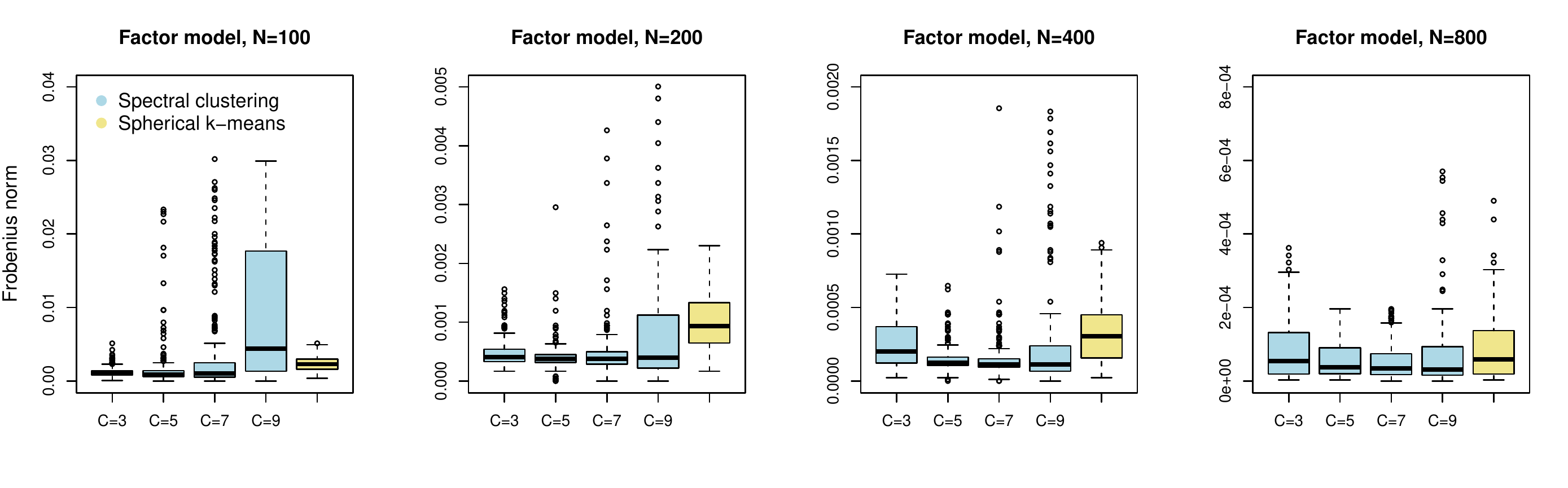}
    \caption{{\small Estimation error measured in Frobenius norm when $\sigma=0$.}}
    \label{fig:conv_AML}
\end{figure}

\begin{figure}[H]
    \centering
    \includegraphics[scale=0.55]{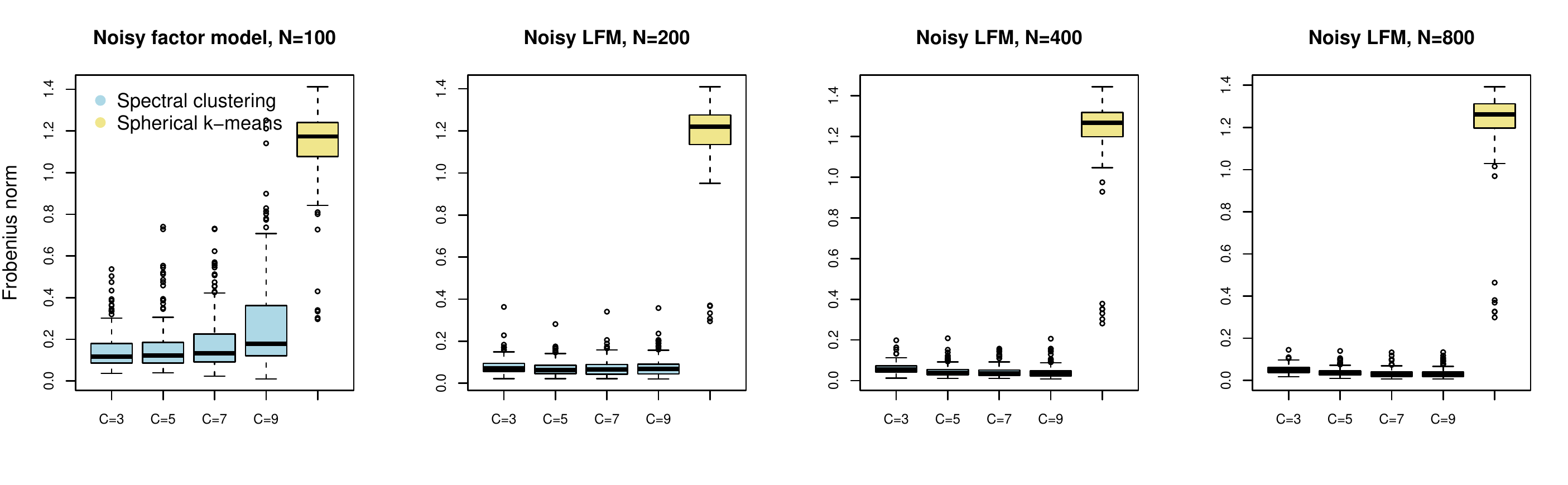}
    \includegraphics[scale=0.55]{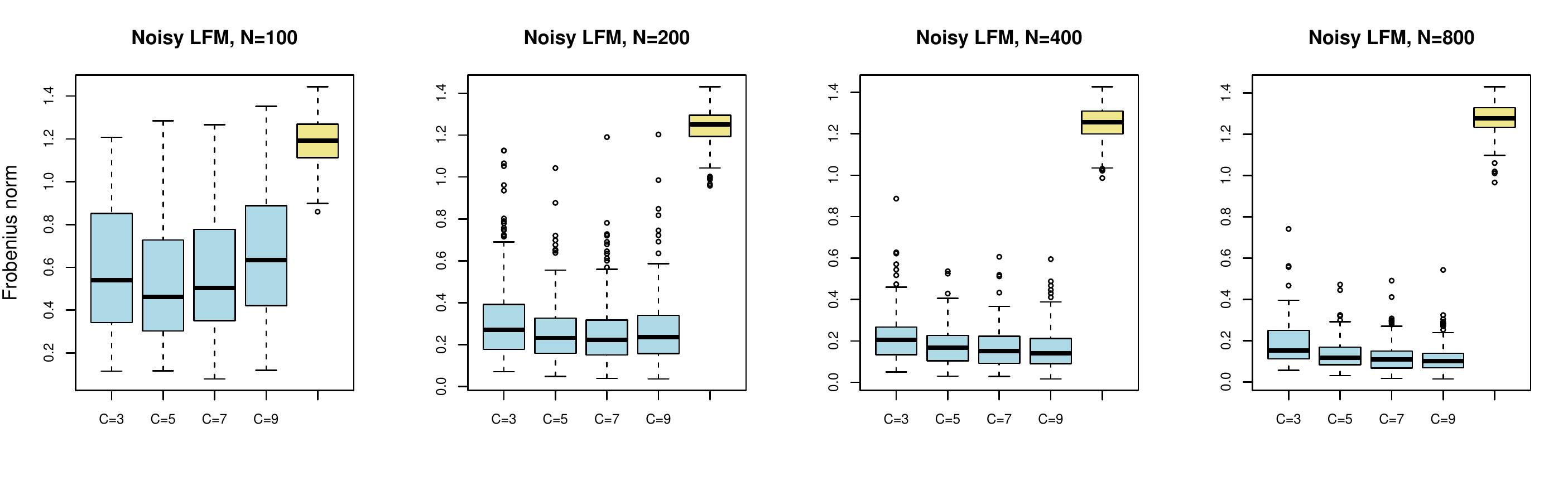}
    \includegraphics[scale=0.55]{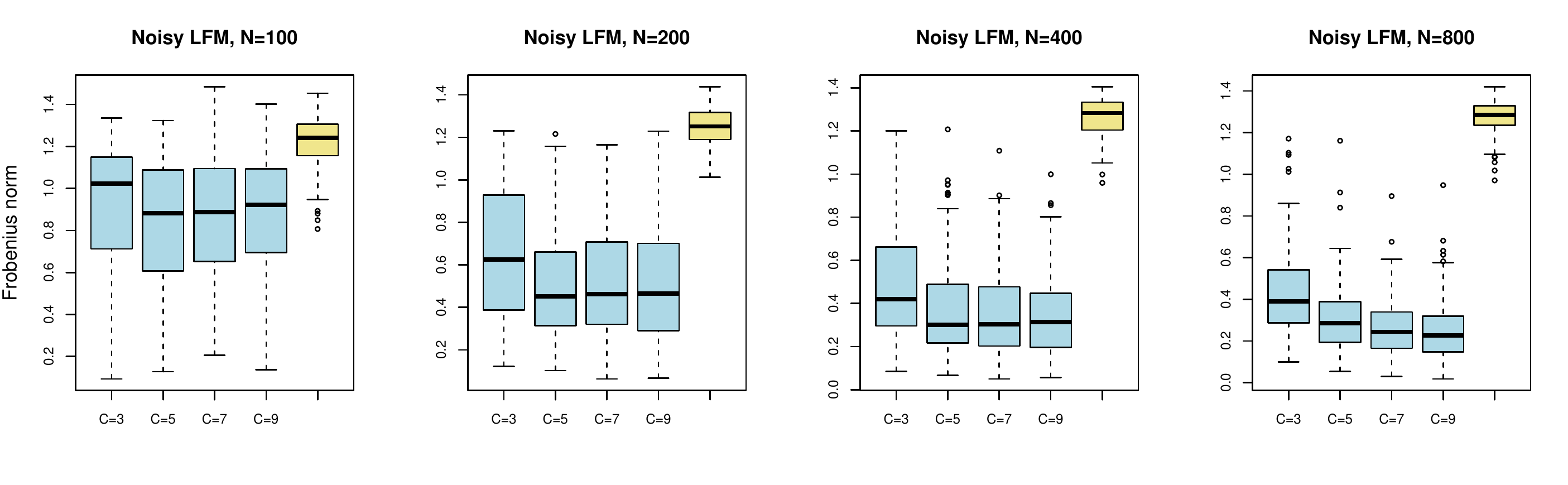}
    \caption{{\small Estimation error of cluster centers measured in Frobenius norm. data was generated from the noisy LFM \eqref{eq:LFMnoise}. The sample sizes increases from left to right as $n=\{1000,5000,25000,125000\}$ and $N_n=\{100,200,400,800\}$, and from noise level increases from the top down as  $\sigma=\{1,3,5\}$. Across rows the ESS are: $\{52,105,209,418\}$, $\{27, 54, 107, 214\}$, $\{18, 36,72,144\}$}}
    \label{fig:conv_nAML}
\end{figure}

\subsection{Bivariate extremes from MA(3)}
 We consider the model discussed in the introduction and represented in Figure \ref{fig:MA3_data}. More specifically,  the model is $Y_t=Z_t+.5Z_{t-1}-.6 Z_{t-2}+1.5Z_{t-3}$, where $\{Z_t\}$ is an iid symmetric stable random variables  with  index $\alpha=1.8$. We analyze the extremal dependence structure of the  bivariate vector $\bX_t=(Y_t,Y_{t-1})^\top$ by   looking for clusters in the extremes of $\bX_t$. This model can be written in the form \eqref{e:X} since we can define $\bZ_t=(Z_{t},Z_{t-1},Z_{t-2},Z_{t-3},Z_{t-4})$ and hence
\begin{equation*}
 \bX_t =   \begin{pmatrix}
 1 & 0.5 & -0.6 & 1.5 & 0 \\
 0& 1 & 0.5 & -0.6 & 1.5
 \end{pmatrix}
 \bZ_t.
 \end{equation*}
 Note that even though in this case the sample $\{\bX_t\}$ is not independent, the asymptotic distribution obtained in Theorem \ref{l:weak.limit} still holds. In particular, the angular distribution is supported in the points \eqref{eq:ck} i.e.
\begin{align*}
&\bc_{1,\pm} = \pm(1,0), \quad \bc_{2,\pm}= \pm(.5,1)/\sqrt{1.25},\quad \bc_{3,\pm}=\pm(-0.6,0.5)/\sqrt{0.61}, \\
&
\quad\bc_{4,\pm}= \pm(1.5,-0.6)/\sqrt{2.61}
\quad \mbox{and}\quad \bc_{5,\pm}=\pm(0,1).
\end{align*} 
 
\begin{figure}[h]
    \centering
    \includegraphics[scale=0.4]{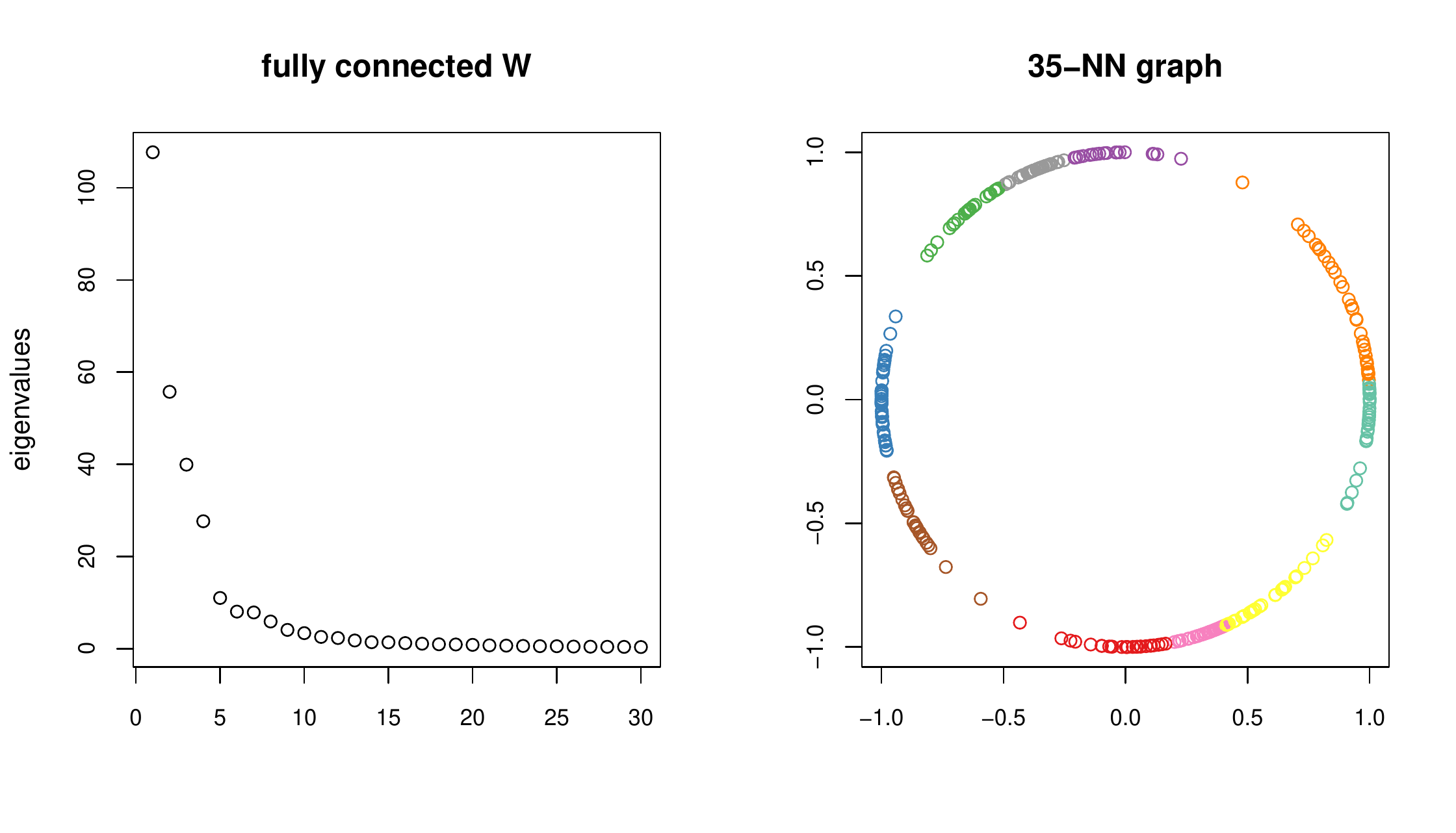}
    \caption{{\small Screeplot of kernel matrix and clustering performance of 2 dimensional MA$(3)$ extremes when $n=25000$ and $N_n=400$.}}
    \label{fig:MA3_plots}
\end{figure}
 \begin{figure}[h]
    \centering    
    \includegraphics[scale=0.5]{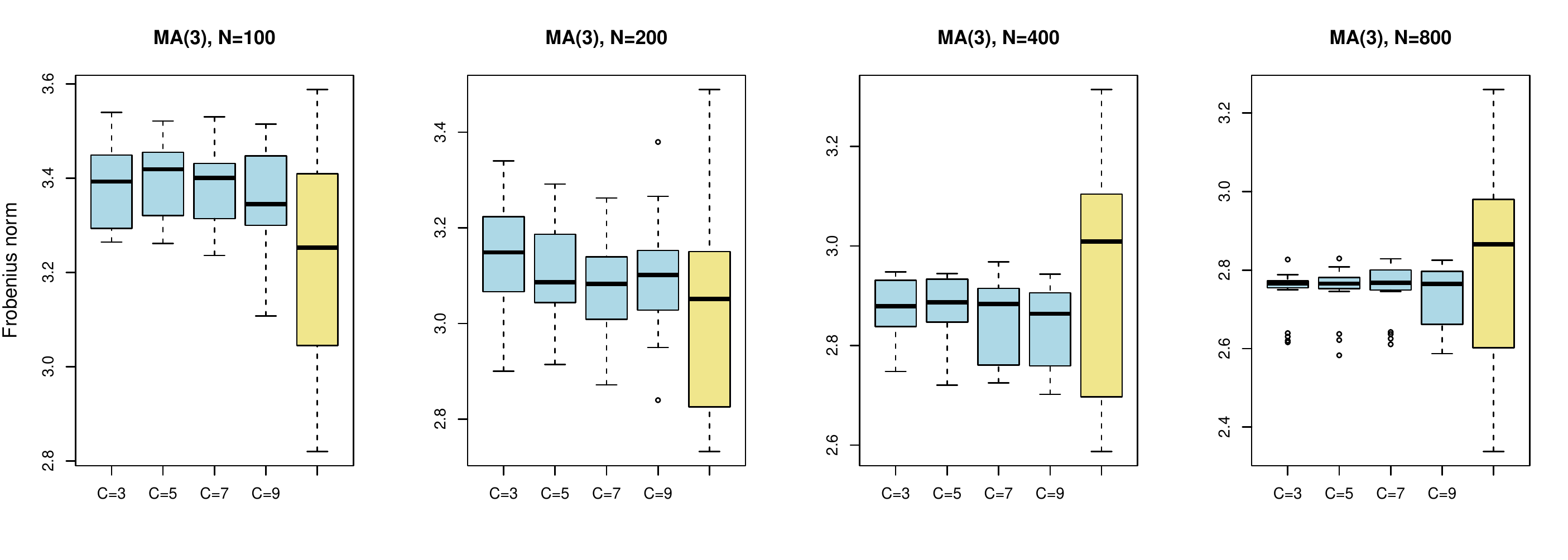}
 \caption{{\small Estimation error of the matrix of atoms of the spectral  measure of the symmetric MA(3) model. The sample sample sizes were $n=\{1000,5000,25000,125000\}$ giving   $N_n=\{100,200,400,800\}$}. We used $k_n=\lceil \frac{N_n}{C\log N_n}\rceil+1$ nearest neighbor graphs.}     \label{fig:MA3_conv2}
\end{figure}
Figure \ref{fig:MA3_plots} illustrates the behavior of spectral clustering for this model when $N_n=400$ and {$k_n=\lceil \frac{400}{2\log 400}\rceil+1=35$.} It is worth noting that in Figure \ref{fig:MA3_data} we had a larger sample size of $100,000$ and stricter quantile threshold of $0.998$ resulting in smaller number of observations considered as extremes, but with an empirical distribution visibly closer to the prescribed asymptotic discrete distribution. Therefore the simulation scenario considered here is more difficult. Figure \ref{fig:MA3_conv2} illustrates the convergence of the method.  While the spectral $k$-means method of \cite{janssenandwan2020} performs slightly better than our spectral clustering for $N_n\le 200$, our proposed method appears better with much smaller variability for a larger number of extremes.  The choice  $k_n$ of nearest neighbors did not appreciably impact the performance of spectral clustering across the different sample sizes.

\subsection{Air pollution data}

We revisit the data analyzed by \cite{heffernanandtawn2004} and \cite{janssenandwan2020}. It is available in the \texttt{R} package \texttt{texmex} and consists of daily measurements of five air pollutants in the city of Leeds, UK. It was collected between 1994 and 1998, and split into summer and winter months yielding a total of $578$ and $532$ observations respectively. Following standard practice in multivariate extremes data analysis we standardize the marginal distribution of the data to focus on the extremal dependence. More specifically, we transform the marginals of the original observations $\mathbf{X}_i$ as in \cite{janssenandwan2020} i.e., we let 
$$Y_{ij}:=1/\{1-F_{nj}(X_{ij})\}, $$
where $F_{nj}(x)=\frac{1}{n}\sum_{i=1}^n\mathbbm{1}(X_i\leq x)$ denotes the $j$th marginal empirical cumulative distribution function, $x\in\mathbb{R}$ and $j=1,\dots,d$. We then proceed to define the extremal observations as the $10$\%  of the transformed observations $\{\mathbf{Y}_i\}$ with largest Euclidean norm and analyze their angular components with our algorithm. We analyze this data using spectral clustering with the exponential kernel and $s=1$ as in the simulated data.
 The screeplots in Figure \ref{fig:screeplots_pollution} suggest that one should consider 5 clusters for this data.

\begin{figure}[h]
    \centering    
    \includegraphics[scale=0.65]{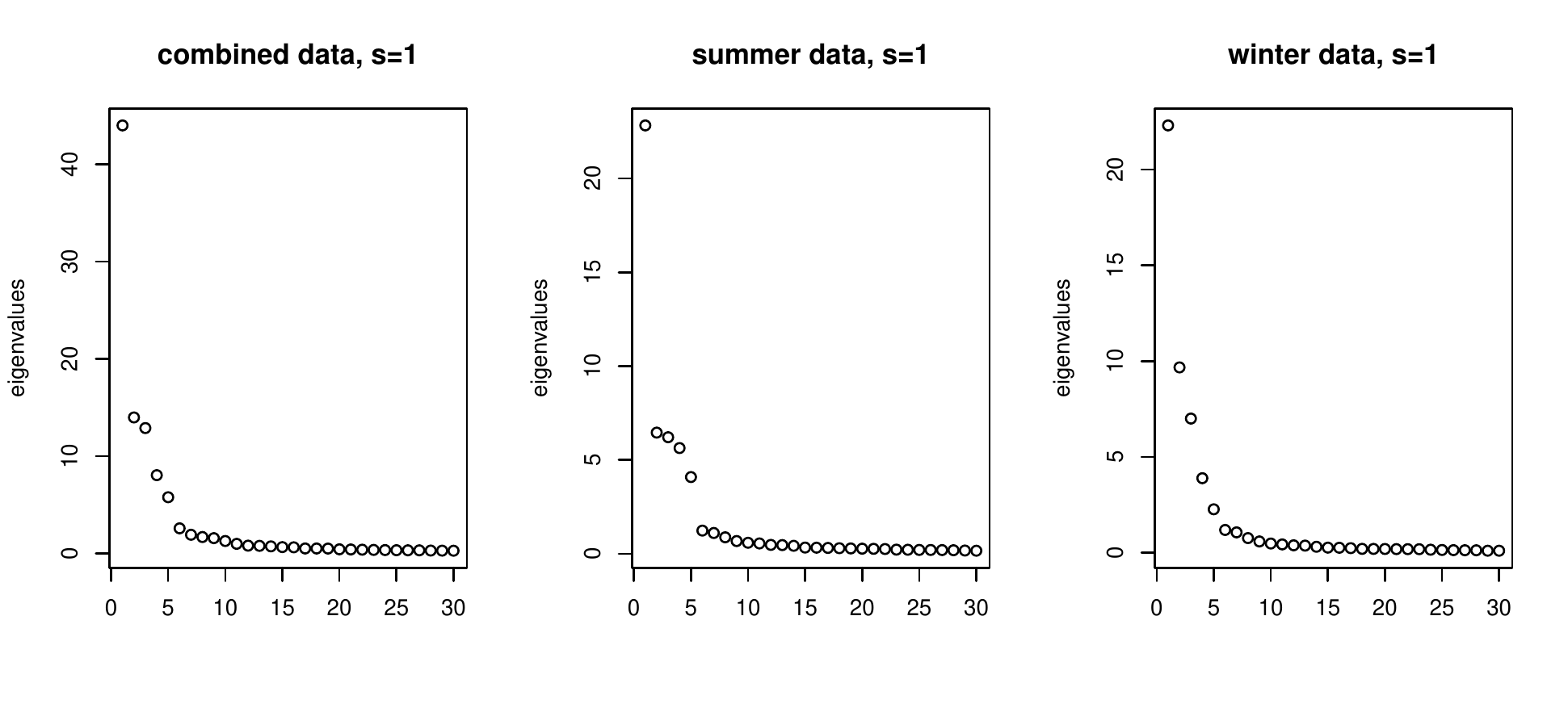}
 \caption{{\small Screeplots of the kernel matrix of the air pollution data extremes obtained with the exponential kernel and bandwidth parameter $s=1$ }. }    
 \label{fig:screeplots_pollution}
\end{figure}

Figure \ref{fig:5clusters_pollution}  shows the estimated cluster centers $\mathbf{c}_j$ for $j=1,\dots,5$. We note that the ``elbow plot'' considered by \cite{janssenandwan2020} suggested the authors to use 4 or 5 clusters in their article.  Our results for 5 clusters is consistent with their analysis. Specifically, the normalized cluster centers in the heat map of Figure \ref{fig:5clusters_pollution} show that the extremes of the five air pollutants act mostly independent.  Looking a bit more closely, both NO and NO2 share common strength in clusters 2 and 3, which is much stronger in winter than in summer.  PM10  also shares a common source (cluster 2) with NO and NO2, which is more pronounced in winter than summer.  For the O3 and NO2 pollutants, we examined time lagged dependence by applying the spectral clustering algorithm to the vector $\bX_t=(X_t,X_{t-1},X_{t-2},X_{t-3})^T$, where $X_t$ represents either the measured value of O3 or NO2 on day $t$.  The resulting heat plots for the cluster centers (4) are displayed in Figures \ref{fig:O3_pollution} (O3) and \ref{fig:NO2_pollution} (NO2). 
 The super and sub diagonals reflect some extremal dependence at time lag 1 for O3 in both  summer and winter.  This dependence mostly dissipates after one day.  The situation for NO3 is a bit more complex. One still discerns some extremal dependence at a one day lag as indicated by the high-temperature in the heat maps along the diagonal and subdiagonal.  However, some clusters have similar shading for its center of mass, e.g., clusters 1 and 4 for winter, which suggests poor delineation between the clusters.  In addition, there is a stronger day effect in the summer than winter for NO2 and the dependence does not necessarily die out after one day lag as in the O3 case.

\begin{figure}[h]%
\hfill
\subfigure[summer]{\includegraphics[scale=0.45]{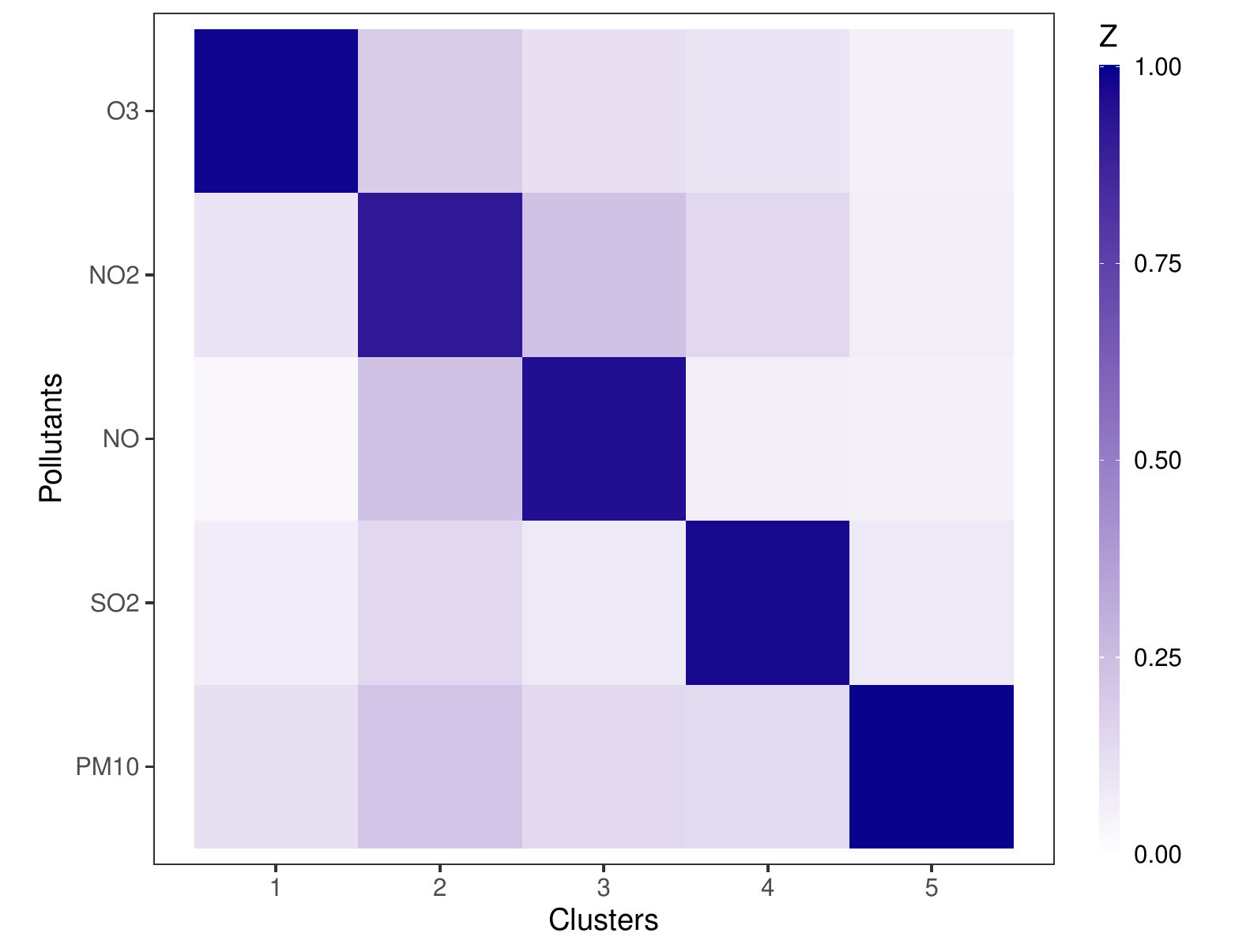}}
\hfil8
\subfigure[winter]{\includegraphics[scale=0.45]{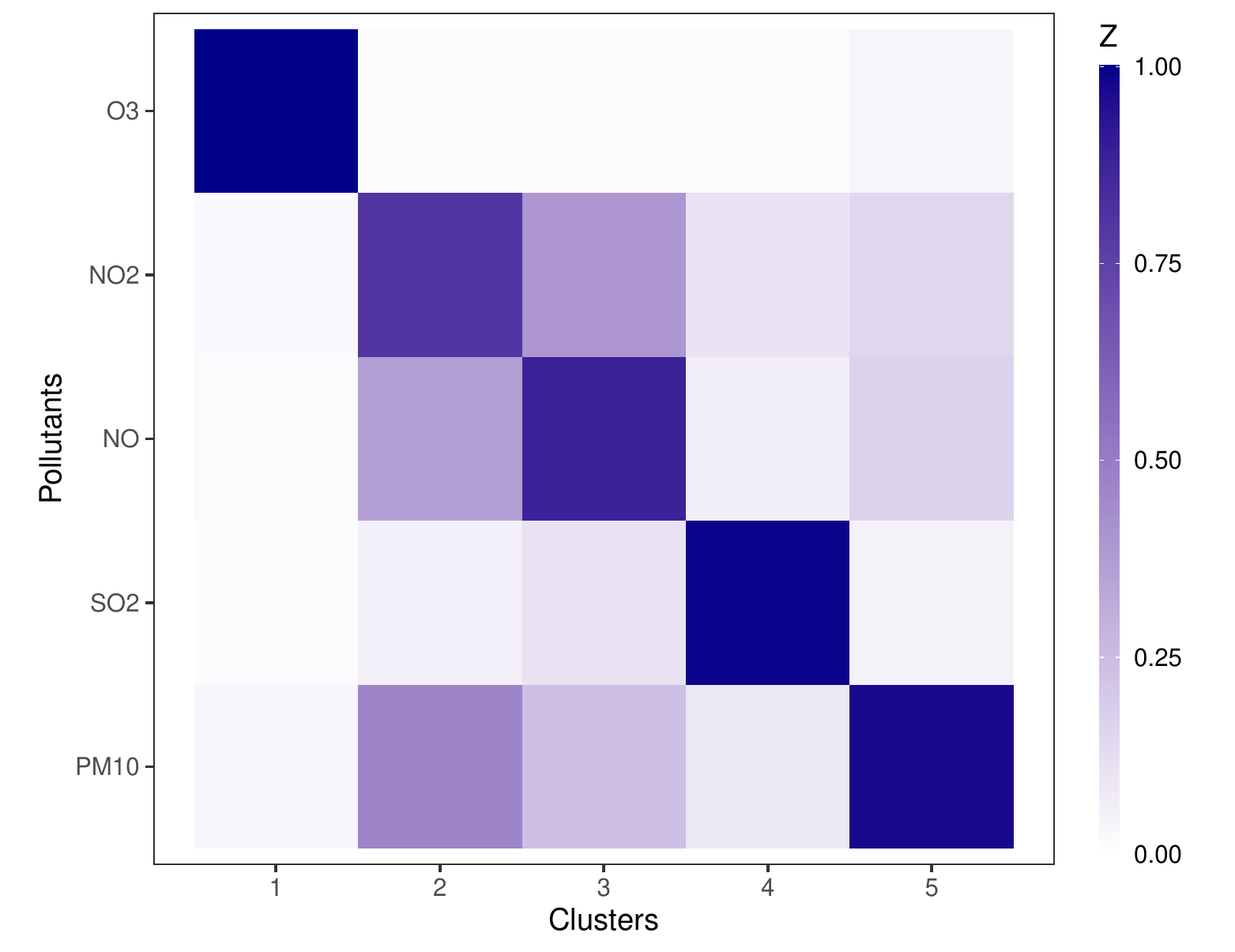}}
\caption{Five dimensional extremes from air pollution summer and winter data. The heat maps show the estimated cluster centers using spectral clustering with 5 clusters and 9-nearest neighbors.} 
    \label{fig:5clusters_pollution}
\end{figure}

\begin{figure}[h!]%
\hfill
\subfigure[summer]{\includegraphics[scale=0.42]{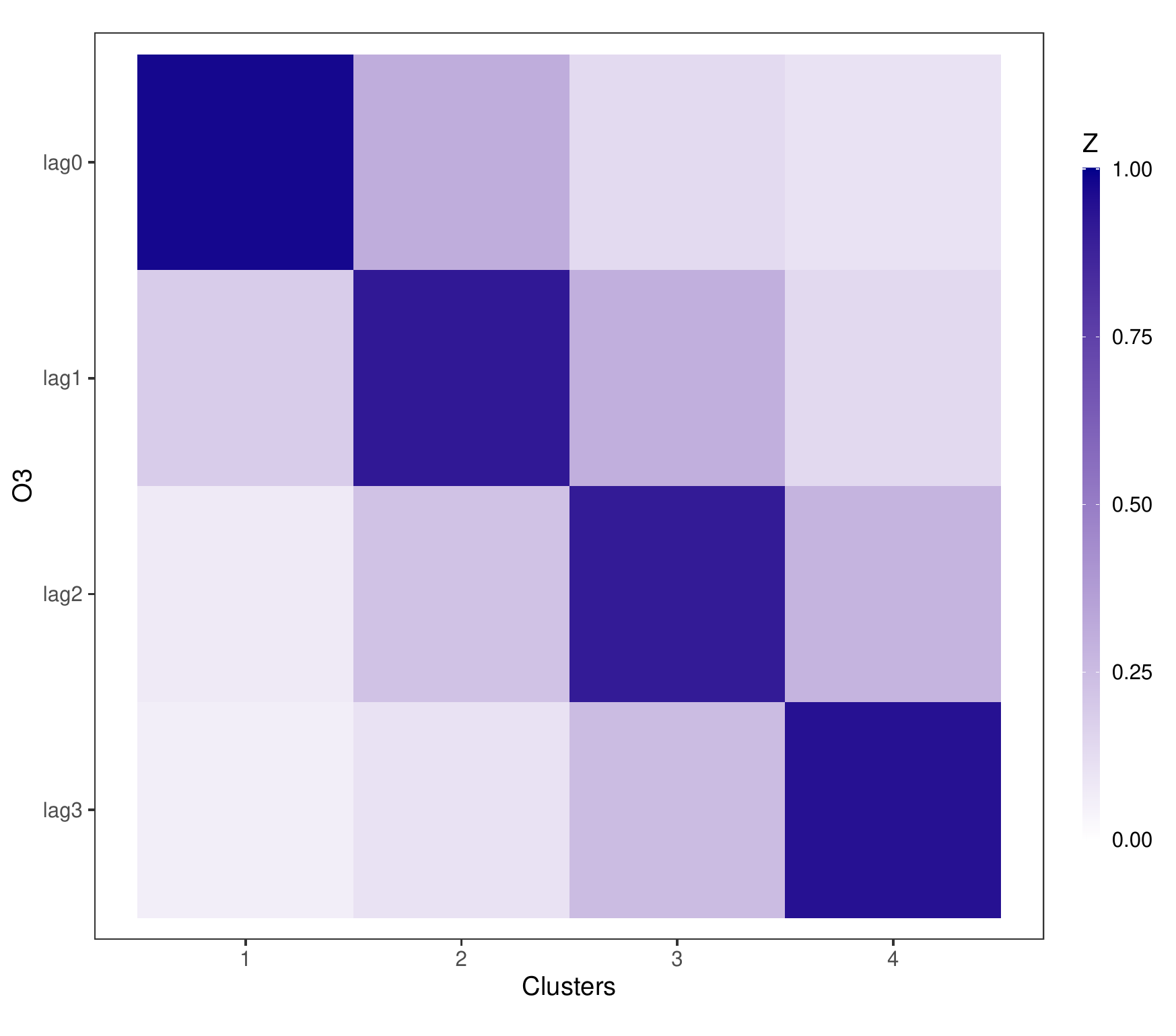}}
\subfigure[winter]{\includegraphics[scale=0.42]{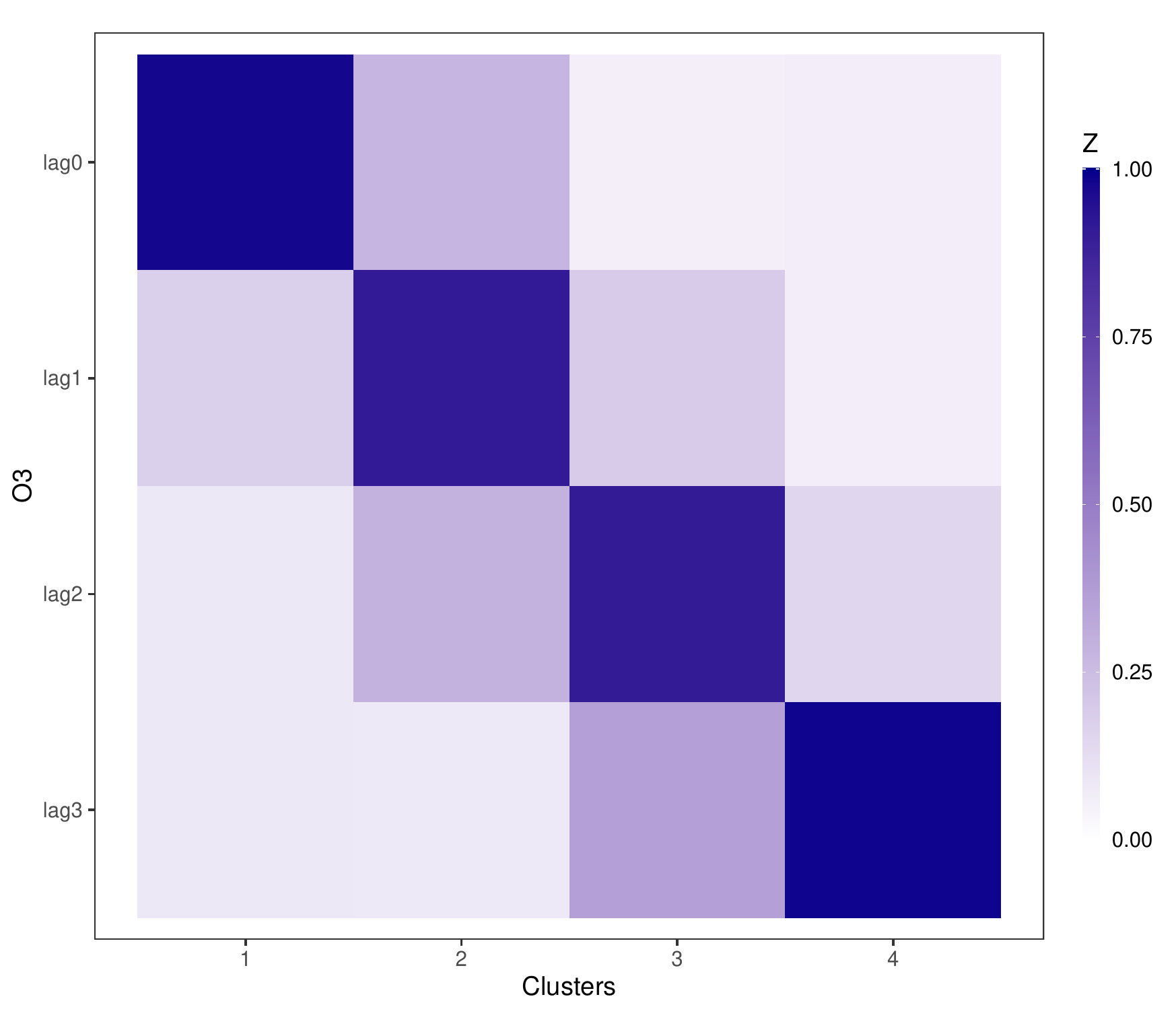}}
\caption{Four dimensional time series data constructed with  lags 0-3 of O3 for summer and winter data respectively. The heat maps show the estimated cluster centers using spectral clustering with 5 clusters and 9-nearest neighbors.} 
    \label{fig:O3_pollution}
\end{figure}

\begin{figure}[h!]%
\hfill
\subfigure[summer]{\includegraphics[scale=0.42]{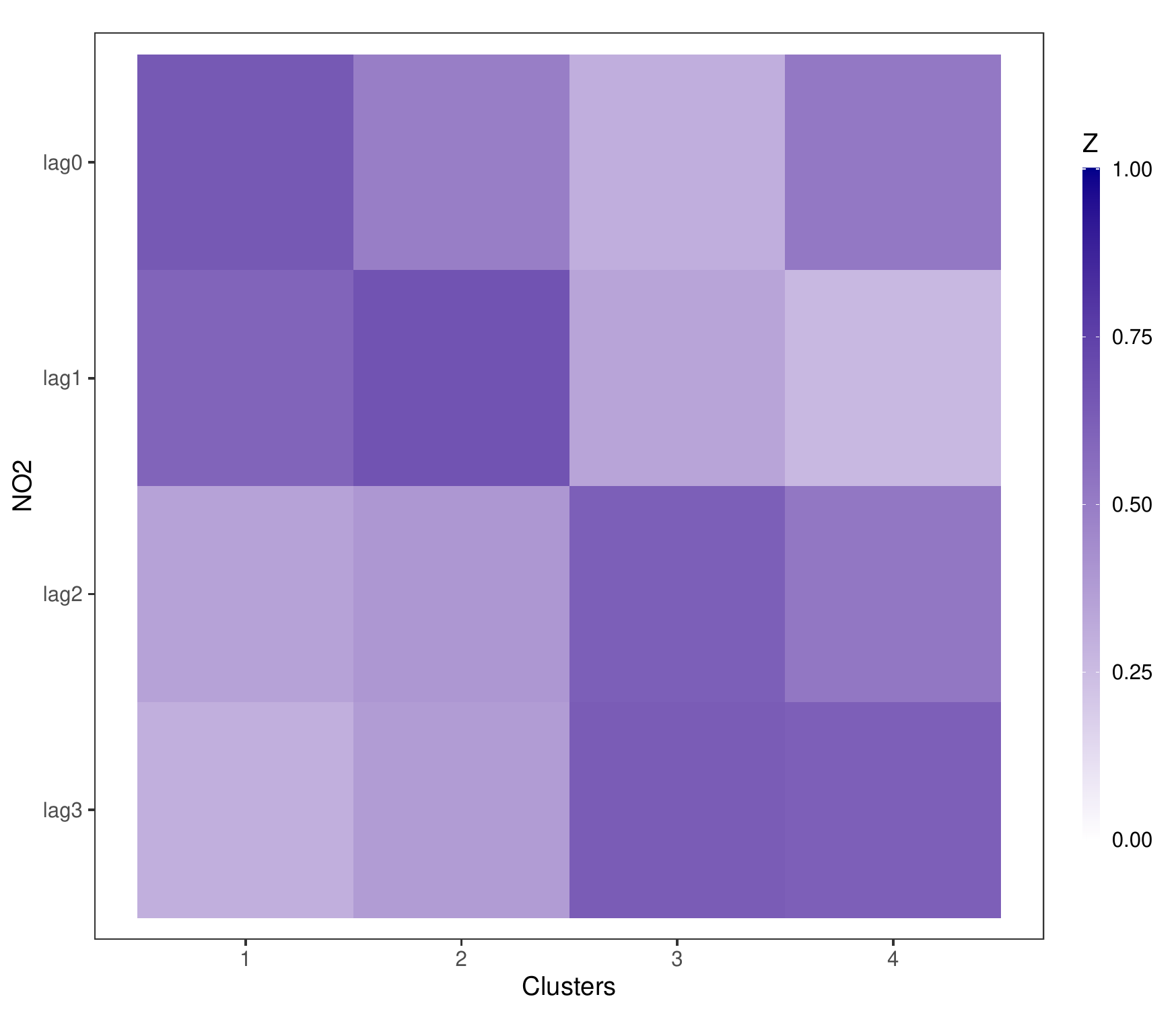}}
\subfigure[winter]{\includegraphics[scale=0.42]{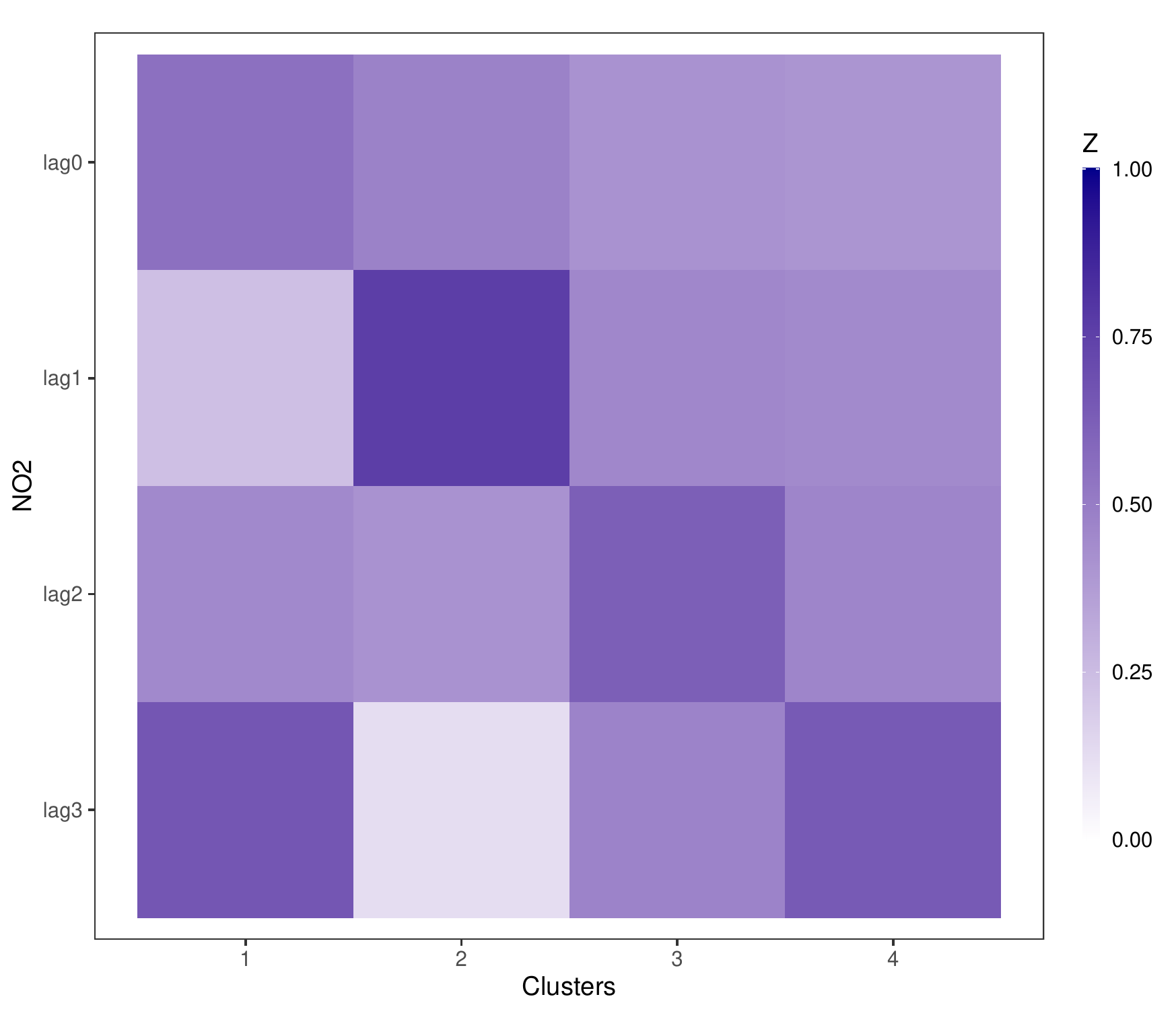}}
\caption{Four dimensional time series data constructed with  lags 0-3 of NO2 for summer and winter data respectively. The heat maps show the estimated cluster centers using spectral clustering with 5 clusters and 9-nearest neighbors. } 
    \label{fig:NO2_pollution}
\end{figure}

 \newpage

\section{Discussion}

In this work we introduced a spectral clustering approach for learning the angular measure of multivariate extremes. We proved that this approach leads to consistent clustering for a natural linear factor model and showed the good finite sample performance of our methods in numerical experiments. The encouraging results suggest the method might be applied in more general contexts. We are particularly interested in exploring two type of extensions. First, high dimensional scenarios where the dimension of the extremes $d$ might be larger than the number of observed extremes $N_n$. This would require introducing appropriate notions of sparsity and regularization. Second, it seems natural to investigate generative models that lead to continuous angular measures in the limit. This scenario implies one would need to carefully introduce more general definitions of extremal clusters and different analysis of the convergence of $k$-nearest neighbor graphs.



\section*{Appendix}

Before proving Lemmas \ref{l:densityT} and \ref{l:intervals} we will give a result regarding random partitions of uniform random variables that  we will leverage as the continuity of $F_{T_n}$ implies that  $F_{T_n}(T_n)\sim \mbox{Unif}(0,1)$. We remind the reader that in our proofs $c>0$ represents a finite and non-zero constant whose value may change from line-to-line.  


\begin{lemma}\label{l:partition}
Let $U_1,\dots,U_N\overset{iid}{\sim}\mbox{Unif}(0,1)$ and consider the random partition of the unit interval $I_{j,N}=[\frac{j-1}{m_N},\frac{j}{m_N})$, where $m_N=\frac{N}{\tau \log(N)}$, $\tau>1$ and $j=1,\dots,m_N$. Then, with probability at least $1-\frac{N^{1-\tau}}{\tau\log(N)}{ (1+N^{-0.2\tau})}$
\begin{enumerate}
\item[(i)] Every $I_{j,N}$ contains at least one of the variables $U_1,\dots,U_N$.
\item[(ii)] No $I_{j,N}$ contains more than $3\tau\log (N)$ of the variables $U_1,\dots,U_N$.
\end{enumerate}
\end{lemma}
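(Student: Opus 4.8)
The plan is to control the two events in (i) and (ii) separately using standard Poissonization/binomial tail bounds, and then combine them by a union bound. Write $m=m_N=N/(\tau\log N)$ and let $p_j=1/m=\tau\log N/N$ be the length of each interval $I_{j,N}$; the number of the $U_i$ falling in $I_{j,N}$ is $\mathrm{Bin}(N,p_j)$, with mean $Np_j=\tau\log N$.

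For part (i), the probability that a \emph{fixed} interval $I_{j,N}$ contains none of the $U_i$ is $(1-p_j)^N\leq e^{-Np_j}=e^{-\tau\log N}=N^{-\tau}$. A union bound over the $m$ intervals gives that the probability that \emph{some} interval is empty is at most $m N^{-\tau}=\frac{N}{\tau\log N}\cdot N^{-\tau}=\frac{N^{1-\tau}}{\tau\log N}$. For part (ii), I would apply a multiplicative Chernoff bound for the upper tail of $\mathrm{Bin}(N,p_j)$: for $\delta>0$,
\begin{equation*}
\bbP\bigl(\mathrm{Bin}(N,p_j)\geq (1+\delta)\tau\log N\bigr)\leq \exp\bigl(-\tfrac{\delta^2}{2+\delta}\,\tau\log N\bigr).
\end{equation*}
Taking $\delta=2$ (so that $(1+\delta)\tau\log N=3\tau\log N$), the exponent is $\frac{4}{4}\tau\log N=\tau\log N$, giving a per-interval bound of $N^{-\tau}$. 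Actually, to land exactly on the claimed probability I would take a slightly sharper $\delta$: with $\delta=2$ the bound is $N^{-\tau}$; to produce the extra factor $N^{-0.2\tau}$ in the statement one instead keeps a marginally larger deviation, e.g. uses $(1+\delta)=3$ together with the sharper Chernoff form $\bbP(\geq(1+\delta)\mu)\leq (e^\delta/(1+\delta)^{1+\delta})^\mu$, which at $\delta=2$ gives $(e^2/27)^{\tau\log N}=N^{-\tau\log(27/e^2)}$ and $\log(27/e^2)=3\log 3-2\approx 1.296>1.2$, so the per-interval probability is at most $N^{-1.2\tau}$. A union bound over the $m$ intervals then bounds the probability that some interval is overfull by $mN^{-1.2\tau}=\frac{N^{1-1.2\tau}}{\tau\log N}=\frac{N^{1-\tau}}{\tau\log N}\cdot N^{-0.2\tau}$.

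Finally, combining the two estimates by a union bound, the probability that either (i) or (ii) fails is at most
\begin{equation*}
\frac{N^{1-\tau}}{\tau\log N}+\frac{N^{1-\tau}}{\tau\log N}\,N^{-0.2\tau}=\frac{N^{1-\tau}}{\tau\log N}\bigl(1+N^{-0.2\tau}\bigr),
\end{equation*}
which is exactly the complement of the probability claimed in the statement, so both (i) and (ii) hold simultaneously with probability at least $1-\frac{N^{1-\tau}}{\tau\log N}(1+N^{-0.2\tau})$.

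I do not anticipate a genuine obstacle here — this is a routine concentration argument — but the one point requiring minor care is choosing the right form of the Chernoff bound and the right deviation parameter so that the two per-interval probabilities are $N^{-\tau}$ and $N^{-1.2\tau}$ respectively, which is what makes the constant $3$ in ``$3\tau\log N$'' and the exponent $0.2$ in the final probability come out cleanly; I would double-check the numerical inequality $3\log 3-2>1.2$ and, if a cleaner constant is preferred, note that any bound of the form $c\tau\log N$ with $c>e$ works with the multiplicity exponent $c\log c - c + 1$.
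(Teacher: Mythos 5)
Your proof is correct and follows essentially the same route as the paper's: a union bound over the $m_N$ intervals, the bound $(1-1/m_N)^N\le e^{-N/m_N}=N^{-\tau}$ for part (i), and a standard tail bound for $\mathrm{Bin}(N,1/m_N)$ exceeding $3\tau\log N$ for part (ii). The only cosmetic difference is that the paper applies Bernstein's inequality, which yields the per-interval exponent $\tfrac12\,(2\tau\log N)^2/(\tau\log N+\tfrac23\,2\tau\log N)=1.2\,\tau\log N$ exactly, whereas your sharp multiplicative Chernoff form gives $(3\log 3-2)\tau\approx 1.296\,\tau>1.2\,\tau$; both deliver the stated probability.
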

\begin{proof}
Consider the event $E_{N,1}=\{ \mbox{Every } I_{j,N} \mbox{ contains at least one of the variables } U_1,\dots,U_N\}$ and note that a union bound gives
\begin{align}
    \label{e:probE1}
\nonumber    \mathbb{P}(E_{N,1})&\geq 1-\sum_{j=1}^{m_N}\mathbb{P}(U_k\notin I_{j,N}, \, \forall k=1,\dots,N)\\
\nonumber    &=1-m_N\left(1-\frac{1}{m_N}\right)^N \\
\nonumber & \geq 1-m_Ne^{-N/m_N} \\
& = 1-\frac{N^{1-\tau}}{\tau\log(N)}\,.
\end{align}
Now consider the event $E_{N,2}= \{ \mbox{No } I_{j,N} \mbox{ contains more than } 3\tau\log (N)\mbox{  of the variables }U_1,\dots,U_N \}$. It follows again from a union bound that yields
\begin{align*}
 \mathbb{P}(E_{N,2})&\geq 1-\sum_{j=1}^{m_N}\mathbb{P}( I_{j,N} \mbox{  has  more than } 3\tau\log(N) \mbox{ of the  }U_1,\dots,U_N)\\
    &=1-m_n\mathbb{P}(S_N>3\tau\log N),
\end{align*}
where $S_N\sim \mbox{Bin}(N,\frac{1}{m_N})$. Invoking Bernstein's inequality see that 
\begin{equation}
    \label{e:probE2}
     \mathbb{P}(E_{N,2})\geq  { 1-m_Ne^{-\frac{1}{2}\frac{(3\tau\log N-\tau\log N)^2}{\tau \log N +2\tau \log N/3}}= 1-\frac{N^{-(1.2\tau-1)}}{\tau\log N}}.
\end{equation}
Combining \eqref{e:probE1} and \eqref{e:probE2} shows that $(i)$ and $(ii)$ hold with the desired probability.
\end{proof}

\subsection*{Proof of Lemma \ref{l:densityT}}

{ We prove the lemma for positive $Z$. The same type of arguments work in the symmetric case and are therefore omitted.} Note that we can write for $y>0$, 
 {\begin{align}  \notag 
&f_{T_n}(t) = \frac{w_1^2}{c_pu_n}\cdot \\
\notag &\int_{-\infty}^\infty\cdots\int_{-\infty}^\infty\Bigl[
  z_1f_Z(z_1)\, f_Z(z_2)\cdots f_Z(z_{p-1})f_Z\bigl[\bigl(
  tz_1w_1^2/u_n-(c_2z_2+\cdots+c_{p-1}z_{p-1})\bigr)/c_p\bigr]   \\
\notag &\cdot\one\Bigl( z_1>u_n/w_1^{1/\alpha}, \, (a_{11}z_1+a_{12}z_2+\cdots+a_{1p}\tilde z_p)^2+(a_{21}z_1+a_{22}z_2\cdots +a_{2p}\tilde z_p)^2>u_n^2\Bigr)\Bigr]\\
 &\hskip 5.5in  dz_1 \cdots dz_{p-1}     \label{e:fT.expr}    \\
\notag  &\div\left[ \bbP\Bigl( (a_{11}Z_1+\cdots+a_{1p}Z_p)^2+(a_{21}Z_1+\cdots +a_{2p}Z_p)^2>u_n^2,
 \ \  {Z_1>u_n/w_1^{1/\alpha}}\Bigr)\right]\\
 & :=M_n(t)/D_n\,, \notag 
\end{align} 
where $\tilde z_p=(tz_1w_1^2/u_n-(c_2z_2+\cdots+c_{p-1}z_{p-1})\bigr)/c_p$, $c_i=a_{2i},\,i=1,\ldots,p$.}
We already know that 
\begin{equation}\label{e:Dn.asymp}
  D_n\sim cu_n^{-\alpha}, \ n\to\infty.
\end{equation}
Next, from \eqref{e:temp.ass},
$\sup_zf_Z(z)=M<\infty$, we conclude  by \eqref{e:density.bounds} that
\begin{align*} 
M_n(t) \leq  {\frac{Mw_1^{2}}{c_pu_n}}
\int_{u_n/w_1^{1/\alpha}}^\infty z_1f_Z(z_1)\, dz_1 \sim
  cu_n^{-\alpha}, \quad \mbox{ as } n\to\infty\,. 
\end{align*}
Hence there exists an
$ {G}\in (0,\infty)$ such that for all $n$ large enough, 
\begin{equation} \label{e:unif.bdd}
f_{T_n}(t) \leq  {G} \ \ \text{for all $t$.}
\end{equation}
This shows $(i)$. Let us now turn to claim $(ii)$  for concreteness consider $0< t\leq 1$. Note that, for large $n$,  the indicator in \eqref{e:fT.expr} is bounded from below by the indicator of the set $E=\{C^{-1}u_n<z_1<Cu_n, \, |z_i|\leq 1, \, i=2,\ldots, p-1\}$ for  some large $C$. Then, on $E$, the argument of the last function $f_Z$ in \eqref{e:fT.expr} is within a compact interval, so we obtain 
\begin{align*}  
M_n(t)\geq& c {\frac{Mw_1^{2}}{c_pu_n}}\int_{C^{-1}u_n}^{Cu_n}
  z_1f_Z(z_1)\, dz_1 \sim cu_n^{-\alpha}\,,  
\end{align*}
where the last relation follows from a direct application of \eqref{e:density.bounds}.  
Along with \eqref{e:Dn.asymp} this establishes $(ii)$.

Finally, note that
\begin{align*}
M_n(t)\leq& {\frac{w_1^2}{c_pu_n}}\int_{u_n/ {w_1^{1/\alpha}}}^\infty z_1f_Z(z_1)\int_\bbr \cdots \int_\bbr
 f_Z(z_2)\cdots f_Z(z_{p-1})\Bigl[\\
&~~f_Z\bigl[\bigl(
tz_1 {w_1^2}/u_n-(c_2z_2+\cdots+c_{p-1}z_{p-1})\bigr)/c_p\bigr]\Bigr]dz_1\cdots dz_{p-1}   \\
 =&\frac{ {w_1^2}u_n}{c_p}\int_{u_n/ {w_1^{1/\alpha}}}^\infty z_1f_Z(u_nz_1)\int_\bbr \cdots \int_\bbr
 f_Z(z_2)\cdots f_Z(z_{p-1})\Bigl[\\
 &~~f_Z\bigl[\bigl(
 tz_1 {w_1^2}-(c_2z_2+\cdots+c_{p-1}z_{p-1})\bigr)/c_p\bigr]\Bigr]dz_1\cdots dz_{p-1} \,.  
\notag
\end{align*}
Using the upper bound in \eqref{e:density.bounds} it is easy to see  that for some $c>0$  and sufficiently large $t$, 
\begin{align}
&\int_\bbr \cdots \int_\bbr 
f_Z(z_2)\cdots f_Z(z_{p-1})\Bigl[
f_Z\bigl[\bigl(
t-(c_2z_2+\cdots+c_{p-1}z_{p-1})\bigr)/c_p\bigr]\Bigr]dz_2\cdots dz_{p-1} \label{eq:integral}\\
&\leq ct^{-(\alpha+1)}\,.\notag
\end{align}
Indeed, the integral is, up to a constant, equal to the density of a linear combination of $Z_1,\ldots, Z_{p-1}$. 
Therefore, for all $y$ large enough, uniformly in $n$,
\begin{align*}
M_n(t) \leq& cu_n
\int_{1/ {w_1^{1/\alpha}}}^\infty z_1f_Z(u_nz_1)
(tz_1)^{-(\alpha+1)}\, dz_1 
\le cu_n^{-\alpha}t^{-(\alpha+1)}\,,
\end{align*}
where  once again we have used the upper bound in
\eqref{e:density.bounds}. Together with \eqref{e:Dn.asymp} this shows
the upper bound in $(iii)$. The lower bound in
$(iii)$ can be established in an identical way using  the lower bound in
\eqref{e:density.bounds}. $\Box$

\subsection*{Proof of Lemma \ref{l:intervals}}

It follows from Lemma \ref{l:partition} that, outside of an event
$\Omega_n^{(1)}$ with $\bbP\bigl( \Omega_n^{(1)}\bigr)\to 0$, 
each one of the intervals $I_{i,n}$ contains at least
one of the points 
$$T_{ni}=\frac{a_{21}Z_{2,i}^{(*,1)}+\dots+a_{p1}Z_{p,i}^{(*,1)}}{w_1^2Z_{1,i}^{(*,1)}/u_n},\quad i=1,\ldots,   N_n^{(1)},$$
and none of the intervals contains more
than $3\tau \log  N_n^{(1)}$ of these points. Note that 
\eqref{e:unif.bdd}
implies that 
$$\frac{\partial}{\partial t}F_{T_n}^{-1}(t)=\frac{1}{f_{T_n}(t)}\geq \frac{1}{ {G}},\quad \forall t \in\mathbb{R}$$
and hence by the fundamental theorem of calculus
$$ F_{T_n}^{-1}\left(\frac{i}{m_n}\right)-F_{T_n}^{-1}\left(\frac{i-1}{m_n}\right)\geq \frac{1}{ {G}m_n} $$
This shows that the length of the intervals $I_{i,n}$ 
satisfies 
\begin{equation} \label{e:not.short}
|I_{i,n}|\geq l_n/ {G}, \ \ i=1,\ldots, m_n,
\end{equation} 
 {where $l_n=\frac{1}{m_n}$.}
Since the   conditional law of $( Z_1/u_n,Z_2,\ldots,Z_p)$ given
\eqref{e:conditionT} converges
weakly, as $n\to\infty$, to the law of 
$$
\bigl( W_\alpha/w_1,Z_2,\ldots,Z_p\bigr)
$$
as defined in Theorem \ref{l:weak.limit}, we see that
\begin{equation*} \label{e:Ft.conv}
F_{T_n}\Rightarrow G:=\text{the law of} \ \
\frac{c_2Z_2+\cdots +c_pZ_p}{W_\alpha}.
\end{equation*}
It follows that the values $F_{T_n}(t_0)$ converge,
as $n\to\infty$, to a finite limit. Therefore, there is $0<\delta<1$
such that $F_{T_n}^{-1}\bigl( (i-1)/m_n\bigr)\geq t_0$ for all $n$
large enough and all $i\geq (1-\delta)m_n$. We conclude by
Lemma \ref{l:densityT} $(iii)$ that for such $n$ and $i$,
\begin{align} \label{e:bound.ln}
  l_n =& F_{T_n}\Bigl( F_{T_n}^{-1}\bigl( i/m_n\bigr)\Bigr) -
  F_{T_n}\Bigl( F_{T_n}^{-1}\bigl( (i-1)/m_n\bigr)\Bigr) \\
  \in& \bigl( D^{-1}, D\bigr) \int_{ F_{T_n}^{-1}\bigl(
       (i-1)/m_n\bigr)}^{F_{T_n}^{-1}\bigl( i/m_n\bigr)}
t^{-(\alpha+1)}\, dt.     \notag 
\end{align}
Furthermore,
\begin{equation}
    \label{e:bound.ln_aux1}
\int_{ F_{T_n}^{-1}\bigl(
       (i-1)/m_n\bigr)}^{F_{T_n}^{-1}\bigl( i/m_n\bigr)}
t^{-(\alpha+1)}\, dt\geq \Bigl( F_{T_n}^{-1}\bigl(
i/m_n\bigr)\Bigr)^{-(\alpha+1)} \Bigl( F_{T_n}^{-1}\bigl( i/m_n\bigr)
- F_{T_n}^{-1}\bigl(
(i-1)/m_n\bigr)\Bigr),
\end{equation}
while leveraging again Lemma \ref{l:densityT} $(iii)$ we see that
\begin{equation}
        \label{e:bound.ln_aux2}
 \frac{m_n-i}{m_n}= \int_{F_{T_n}^{-1}\bigl( i/m_n\bigr)}^\infty f_{T_n}(t)\, dt
  \leq D \int_{F_{T_n}^{-1}\bigl( i/m_n\bigr)}^\infty
  t^{-(\alpha+1)}\, dt
  = \frac{D}{\alpha} \Bigl( F_{T_n}^{-1}\bigl(
  i/m_n\bigr)\Bigr)^{-\alpha}. 
\end{equation}
Combining  \eqref{e:bound.ln_aux1} and \eqref{e:bound.ln_aux2},  we conclude that
$$
\int_{ F_{T_n}^{-1}\bigl(
       (i-1)/m_n\bigr)}^{F_{T_n}^{-1}\bigl( i/m_n\bigr)}
t^{-(\alpha+1)}\, dt \geq
c\left(\frac{m_n-i}{m_n}\right)^{(\alpha+1)/\alpha}
\Bigl( F_{T_n}^{-1}\bigl( i/m_n\bigr)
- F_{T_n}^{-1}\bigl(
(i-1)/m_n\bigr)\Bigr), 
$$
and so by \eqref{e:bound.ln},
$$
l_n\geq c\left(\frac{m_n-i}{m_n}\right)^{(\alpha+1)/\alpha}|I_{i,n}|.
$$
Since an upper bound can be obtained in the same way, we conclude that
for some $D_1\geq 1$, for all $n$ large enough and all
$i\geq (1-\delta)m_n$,  
\begin{equation} \label{e:Iin.asymp}
  D_1^{-1}{  l_n} \left(\frac{m_n-i}{m_n}\right)^{-(\alpha+1)/\alpha}\leq
  |I_{i,n}|\leq D_1{  l_n}
  \left(\frac{m_n-i}{m_n}\right)^{-(\alpha+1)/\alpha}.
\end{equation}
Choose $K>2D_1^2$, and choose $K_0$ so that
\begin{equation}
    \label{eq:cond_K}
1+\frac{K}{K_0}<\left( \frac{K}{2D_1^2}\right)^{\alpha/(\alpha+1)}.
\end{equation}
Consider an interval $I_{i,n}$ with $(1-\delta)m_n\leq i\leq
m_n-K_0$. It follows from \eqref{e:Iin.asymp} and the choice of  {$i$} that
any point in $I_{i,n}$ is closer to any point in $I_{i+1,n}$ than to
any point in an interval $I_{j,n}$ with $j<i-K$. To see this  {it} suffices  to show that 
\begin{equation}
    \label{eq:neighbor_int}
    |I_{i,n}|+ {|I_{i+1,n}| < \sum_{k=1}^K|I_{i-k,n}|} 
\end{equation}
Using \eqref{e:Iin.asymp} and $i\leq m_n-K_0$, the left hand side of \eqref{eq:neighbor_int} can be upper bounded  {by}
\begin{align}
    \label{eq:neighbor_int1}
  \nonumber |I_{i,n}|+ {|I_{i+1,n}|} &\leq   D_1{  l_n}
  \left(\frac{K_0}{m_n}\right)^{-(\alpha+1)/\alpha}+D_1{  l_n}
  \left(\frac{K_0+1}{m_n}\right)^{-(\alpha+1)/\alpha}\\
  & < 2D_1{  l_n}
  \left(\frac{K_0}{m_n}\right)^{-(\alpha+1)/\alpha}.
\end{align}
 {For $i\geq (1-\delta)m_n$ and using \eqref{e:Iin.asymp} once again}, the right hand side of \eqref{eq:neighbor_int} can be lower bounded  {by}
\begin{align}
    \label{eq:neighbor_int2}
  \nonumber \sum_{k=1}^K {|I_{i-k,n}|} & \geq D_1^{-1}{  l_n}\sum_{k=1}^K\left(\frac{m_n-(i-k)}{m_n}\right)^{-(\alpha+1)/\alpha}  \\
  \nonumber & \geq D_1^{-1}{  l_n}K\left(\frac{m_n-i+K}{m_n}\right)^{-(\alpha+1)/\alpha}\\ 
  \nonumber &  \geq D_1^{-1}{  l_n}K\left(\frac{\delta m_n+K}{m_n}\right)^{-(\alpha+1)/\alpha} \\
   &  > D_1^{-1}{  l_n}K\left(1+\frac{K}{K_0}\right)^{-(\alpha+1)/\alpha}
\end{align}
It follows from \eqref{eq:neighbor_int1} and \eqref{eq:neighbor_int2} that a suffcient condition for establishing \eqref{eq:neighbor_int} is
\begin{equation*}
   2D_1
  \left(\frac{K_0}{m_n}\right)^{-(\alpha+1)/\alpha}  \leq D_1^{-1}K\left(1+\frac{K}{K_0}\right)^{-(\alpha+1)/\alpha}.
\end{equation*}
The last condition {  implies \eqref{eq:cond_K} for large $n$}. 
We conclude that, on
the event $\Omega_n^{(1)}$, in a $k_n$-NN graph with
\begin{equation} \label{e:kn.b1}
  k_n> 3(K+1)\tau \log N_n^{(1)}, 
\end{equation}
then all points $\bigl(V^{(j)}, \,
j=1,\ldots,   N_n^{(1)}\bigr)$ within  $I_{i,n}$ in the range
$(1-\delta)m_n\leq i\leq m_n-K_0$ 
will be connected
both to each other and to  such a point in each $I_{i-1,n}$ and
$I_{i+1,n}$. 
The next observation to make is that, as long as $\delta$ is small
enough, the sequence $ \bigl(F_{T_n}^{-1}( 1-\delta)\bigr)$ is bounded
from above. Therefore, by Lemma \ref{l:densityT} $(ii)$, uniformly in large
enough $n$,  the
density $f_{T_n}$ is bounded from below by, say, $a>0$ on the interval
$\bigl(0, F_{T_n}^{-1}( 1-\delta)\bigr)$. Therefore, for all large
enough $n$, 
\begin{equation} \label{e:not.long}
|I_{i,n}|\,\leq \,l_n/a, \ \  1\leq i\leq (1-\delta)m_n.
\end{equation} 
To see this, note that
$$\frac{\partial}{\partial t}F_{T_n}^{-1}(t)=\frac{1}{f_{T_n}(t)}\leq \frac{1}{a},\quad \forall t \in\mathbb{R}$$
and hence by the fundamental theorem of calculus
$$ |I_{i,n}|=F_{T_n}^{-1}\left(\frac{i}{m_n}\right)-F_{T_n}^{-1}\left(\frac{i-1}{m_n}\right)\leq \frac{1}{am_n}=\frac{l_n}{a} $$

It follows from \eqref{e:not.short} and  \eqref{e:not.long} that if
$K>\frac{2 {G}}{a}$ then any point in $I_{i,n}$ is closer to any point in
$I_{i-1,n}$ and in $I_{i+1,n}$ than to
any point in an interval $I_{j,n}$ with $j<i-K$ or with
$j>i+K$. Therefore,  on
the event $\Omega_n^{(1)}$, in a $k_n$-NN graph satisfying
\eqref{e:kn.b1},  all points $\bigl(T_{nj}, \,
j=1,\ldots,   N_n^{(1)}\bigr)$ within  $I_{i,n}$ in the range
$1\leq i \leq (1-\delta)m_n$ 
will be connected both to each other and to  such a point in each
$I_{i-1,n}$ and $I_{i+1,n}$. 
Indeed, to show this it suffices  {to} show again that \eqref{eq:neighbor_int} holds true in the range $1\leq i\leq (1-\delta)m_n$. It is easy to see that \eqref{e:not.short}, \eqref{e:not.long} and $K>\frac{2 {G}}{a}$ entail
\begin{equation*}
    |I_{i,n}|+ {|I_{i+1,n}|}\leq \frac{2l_n}{a} < \frac{K l_n}{ {G}}\leq \sum_{k=1}^K {|I_{i-k,n}|.} 
\end{equation*}

Finally, it is obvious that if $K>K_0$,
then on the same event   $\Omega_n^{(1)}$, in a $k_n$-NN graph satisfying
\eqref{e:kn.b1},  all points $\bigl(T_{nj}, \,
j=1,\ldots,   N_n^{(1)}\bigr)$ within  $I_{i,n}$ in the range
$m_n-K_0<i\leq m_n$ 
will be connected both to each other and to  a such a point in 
each $I_{i-1,n}$ and $I_{i+1,n}$.  

Summarizing the above discussion we conclude that  on
the event $\Omega_n^{(1)}$, in a $k_n$-NN graph satisfying
\eqref{e:kn.b1} with $K$ large enough, all points $\bigl(T_{nj}, \,
j=1,\ldots,   N_n^{(1)}\bigr)$ within  $I_{i,n}$ in the entire range
$1\leq i\leq m_n$ 
will be connected both to each other and to  a such a point in 
each $I_{i-1,n}$ and $I_{i+1,n}$.  In particular, the $k_n$-NN graph
will be connected.

We now translate this discussion to the random vectors $ \BM^{(i)}, \,
i=1,\ldots,   N_n^{(1)}$. We define intervals
along  vector $\boldb$ by
\begin{equation*} \label{e:J.in}
  J_{i,n}=I_{i,n}\boldb, \ =1,\ldots,   N_n^{(1)}\,.
\end{equation*}
Then, outside of the event
$\Omega_n^{(1)}$,  each one of these intervals contains at least
one of the points $\bigl(\BM^{(i)}, \,
i=1,\ldots,   N_n^{(1)}\bigr)$ and none of the intervals contains more
than $3\tau \log  N_n^{(1)}$ of these points. By \eqref{e:not.short}
the lengths of these intervals satisfy for some $ {G}_1>0$, 
\begin{equation*} \label{e:J.not.short}
|J_{i,n}|\geq l_n/ {G}_{1}, \ \ i=1,\ldots, m_n\,.
\end{equation*}
We finally note that by \eqref{e:In.size}, with probability tending to one $N_n\sim C nu_n^{-\alpha}$,  and therefore $ {G}>0$ and $n$ large enough ensure that \eqref{e:kn.b1} holds provided $k_n >  {G}\log n$. This concludes the proof.  $\Box$

\subsection*{Proof of Theorem \ref{pr:d2.same}} Lemma \ref{l:intervals}  gives us the connectivity of the extremal
$k_n$-NN graph for $k_n$ satisfying
\eqref{e:kn.b1} with $K$ large enough. The next step is to understand by how much the points $\bigl(\bM^{(i)}, \,
i=1,\ldots,   N_n^{(1)}\bigr)$  are shifted by adding to them $\bigl(\BD^{(i)}, \,
i=1,\ldots,   N_n^{(1)}\bigr)$ in \eqref{e:main.term}. Denote $\Omega_n^{(2)}= {B}_n^c$ as defined in Lemma
\ref{l:only.one}. Then $\bbP\bigl( \Omega_n^{(2)}\bigr)\to 0$ as
$n\to\infty$ and it is elementary to check that outside of $\Omega_n^{(2)}$ we have 
$\|\BD^{(i)}\|\leq ch_n^2/u_n$ for all $i=1,\ldots,   N_n^{(1)}$.
Recall
that by the choice of $h_n$ we have
\begin{equation*} \label{e:small.shift}
  h_n^2/u_n = o(l_n) \ \ \text{as $n\to\infty$.}
\end{equation*}

If we define new sets by 
$$
\tilde J_{i,n} = \bigl\{ \BM^{(j)}+ \BD^{(j)}:\, \BM^{(j)}\in 
  J_{i,n} \bigr\}, \ i=1,\ldots, m_n,
  $$
 then it follows immediately that  for large $n$, outside of the event 
$\Omega_n^{(2)}$, the new sets have the property described by Lemma \ref{l:intervals}, perhaps with a larger $K_0$. We already know that this
means that for large $n$, outside of $\Omega_n^{(1)}\cup
\Omega_n^{(2)}$,  the extremal
$k_n$-NN graph with $k_n$ satisfying
\eqref{e:kn.b1} with $K$ large enough, is connected. $\Box$

\bibliographystyle{plainnat}
\bibliography{bibfile.bib}

\begin{thebibliography}{31}
\providecommand{\natexlab}[1]{#1}
\providecommand{\url}[1]{\texttt{#1}}
\expandafter\ifx\csname urlstyle\endcsname\relax
  \providecommand{\doi}[1]{doi: #1}\else
  \providecommand{\doi}{doi: \begingroup \urlstyle{rm}\Url}\fi

\bibitem[Basrak et~al.(2002)Basrak, Davis, and
  Mikosch]{basrak2002characterization}
Bojan Basrak, Richard~A Davis, and Thomas Mikosch.
\newblock A characterization of multivariate regular variation.
\newblock \emph{The Annals of Applied Probability}, 12\penalty0 (3):\penalty0
  908--920, 2002.

\bibitem[Belkin and Niyogi(2003)]{belkinandniyogi2003}
Mikhail Belkin and Partha Niyogi.
\newblock Laplacian eigenmaps for dimensionality reduction and data
  representation.
\newblock \emph{Neural computation}, 15\penalty0 (6):\penalty0 1373--1396,
  2003.

\bibitem[Breiman(1965)]{breiman1965some}
Leonard Breiman.
\newblock On some limit theorems similar to the arc-sin law.
\newblock \emph{Theory of Probability \& Its Applications}, 10\penalty0
  (2):\penalty0 323--331, 1965.

\bibitem[Chautru(2015)]{chautru2015}
Emilie Chautru.
\newblock Dimension reduction in multivariate extreme value analysis.
\newblock \emph{Electronic Journal of Statistics}, 9\penalty0 (1):\penalty0
  383--418, 2015.

\bibitem[Cl{\'e}men{\c{c}}on et~al.(2021)Cl{\'e}men{\c{c}}on, Jalalzai,
  Sabourin, and Segers]{clemencconetal2021}
St{\'e}phan Cl{\'e}men{\c{c}}on, Hamid Jalalzai, Anne Sabourin, and Johan
  Segers.
\newblock Concentration bounds for the empirical angular measure with
  statistical learning applications.
\newblock \emph{arXiv preprint arXiv:2104.03966}, 2021.

\bibitem[Cooley and Thibaud(2019)]{cooleyandthibaud2019}
Daniel Cooley and Emeric Thibaud.
\newblock Decompositions of dependence for high-dimensional extremes.
\newblock \emph{Biometrika}, 106\penalty0 (3):\penalty0 587--604, 2019.

\bibitem[Davis and Resnick(1989)]{davisandresnick1989}
Richard~A Davis and Sidney~I Resnick.
\newblock Basic properties and prediction of max-arma processes.
\newblock \emph{Advances in Applied Probability}, 21\penalty0 (4):\penalty0
  781--803, 1989.

\bibitem[Davison and Huser(2015)]{davisonandhuser2015}
Anthony~C Davison and Rapha{\"e}l Huser.
\newblock Statistics of extremes.
\newblock \emph{Annual Review of Statistics and its Application}, 2:\penalty0
  203--235, 2015.

\bibitem[Dhillon and Modha(2001)]{dhillon&modha2001}
Inderjit~S Dhillon and Dharmendra~S Modha.
\newblock Concept decompositions for large sparse text data using clustering.
\newblock \emph{Machine Learning}, 42\penalty0 (1):\penalty0 143--175, 2001.

\bibitem[Drees and Sabourin(2021)]{dreesandsabourin2021}
Holger Drees and Anne Sabourin.
\newblock Principal component analysis for multivariate extremes.
\newblock \emph{Electronic Journal of Statistics}, 15\penalty0 (1):\penalty0
  908--943, 2021.

\bibitem[Embrechts and Goldie(1980)]{embrechts1980closure}
Paul Embrechts and Charles~M Goldie.
\newblock On closure and factorization properties of subexponential and related
  distributions.
\newblock \emph{Journal of the Australian Mathematical Society}, 29\penalty0
  (2):\penalty0 243--256, 1980.

\bibitem[Engelke and Ivanovs(2021)]{engelkeandivanovs2021}
Sebastian Engelke and Jevgenijs Ivanovs.
\newblock Sparse structures for multivariate extremes.
\newblock \emph{Annual Review of Statistics and Its Application}, 8:\penalty0
  241--270, 2021.

\bibitem[Gissibl and Kl{\"u}ppelberg(2018)]{gissiblandkluppelberg2018}
Nadine Gissibl and Claudia Kl{\"u}ppelberg.
\newblock Max-linear models on directed acyclic graphs.
\newblock \emph{Bernoulli}, 24\penalty0 (4A):\penalty0 2693--2720, 2018.

\bibitem[Goix et~al.(2015)Goix, Sabourin, and Cl{\'e}men]{goixetal2015}
Nicolas Goix, Anne Sabourin, and St{\'e}phan Cl{\'e}men.
\newblock Learning the dependence structure of rare events: a non-asymptotic
  study.
\newblock In \emph{Conference on Learning Theory}, pages 843--860. PMLR, 2015.

\bibitem[Goix et~al.(2017)Goix, Sabourin, and
  Cl{\'e}men{\c{c}}on]{goixetal2017}
Nicolas Goix, Anne Sabourin, and Stephan Cl{\'e}men{\c{c}}on.
\newblock Sparse representation of multivariate extremes with applications to
  anomaly detection.
\newblock \emph{Journal of Multivariate Analysis}, 161:\penalty0 12--31, 2017.

\bibitem[Hall et~al.(2002)Hall, Peng, and Yao]{halletal2002}
Peter Hall, Liang Peng, and Qiwei Yao.
\newblock Moving-maximum models for extrema of time series.
\newblock \emph{Journal of statistical planning and inference}, 103\penalty0
  (1-2):\penalty0 51--63, 2002.

\bibitem[Heffernan and Tawn(2004)]{heffernanandtawn2004}
Janet~E Heffernan and Jonathan~A Tawn.
\newblock A conditional approach for multivariate extreme values (with
  discussion).
\newblock \emph{Journal of the Royal Statistical Society: Series B},
  66\penalty0 (3):\penalty0 497--546, 2004.

\bibitem[Hendrickson and Leland(1995)]{hendrickson1995improved}
Bruce Hendrickson and Robert Leland.
\newblock An improved spectral graph partitioning algorithm for mapping
  parallel computations.
\newblock \emph{SIAM Journal on Scientific Computing}, 16\penalty0
  (2):\penalty0 452--469, 1995.

\bibitem[Jan{\ss}en and Wan(2020)]{janssenandwan2020}
Anja Jan{\ss}en and Phyllis Wan.
\newblock $ k $-means clustering of extremes.
\newblock \emph{Electronic Journal of Statistics}, 14\penalty0 (1):\penalty0
  1211--1233, 2020.

\bibitem[Kl{\"u}ppelberg and Lauritzen(2019)]{kluppelbergandlauritzen2019}
Claudia Kl{\"u}ppelberg and Steffen Lauritzen.
\newblock Bayesian networks for max-linear models.
\newblock In \emph{Network Science}, pages 79--97. Springer, 2019.

\bibitem[Lei and Rinaldo(2015)]{leiandrinaldo2015}
Jing Lei and Alessandro Rinaldo.
\newblock Consistency of spectral clustering in stochastic block models.
\newblock \emph{The Annals of Statistics}, 43\penalty0 (1):\penalty0 215--237,
  2015.

\bibitem[Meyer and Wintenberger(2019)]{meyerandwintenberger2019}
Nicolas Meyer and Olivier Wintenberger.
\newblock Sparse regular variation.
\newblock \emph{arXiv preprint arXiv:1907.00686}, 2019.

\bibitem[Ng et~al.(2002)Ng, Jordan, and Weiss]{ngetal2002}
Andrew~Y Ng, Michael~I Jordan, and Yair Weiss.
\newblock On spectral clustering: Analysis and an algorithm.
\newblock In \emph{Advances in Neural Information Processing Systems}, pages
  849--856, 2002.

\bibitem[Resnick(2007)]{resnick2007}
Sidney~I Resnick.
\newblock \emph{Heavy-tail phenomena: probabilistic and statistical modeling}.
\newblock Springer Science \& Business Media, 2007.

\bibitem[Resnick(2018)]{resnick2008}
Sidney~I Resnick.
\newblock \emph{Extreme values, regular variation and point processes}.
\newblock Springer, New York, 2018.

\bibitem[Rohe et~al.(2011)Rohe, Chatterjee, and Yu]{roheetal2011}
Karl Rohe, Sourav Chatterjee, and Bin Yu.
\newblock Spectral clustering and the high-dimensional stochastic blockmodel.
\newblock \emph{The Annals of Statistics}, 39\penalty0 (4):\penalty0
  1878--1915, 2011.

\bibitem[Shi and Malik(2000)]{shi2000normalized}
Jianbo Shi and Jitendra Malik.
\newblock Normalized cuts and image segmentation.
\newblock \emph{IEEE Transactions on Pattern Analysis and Machine
  Intelligence}, 22\penalty0 (8):\penalty0 888--905, 2000.

\bibitem[Simpson et~al.(2020)Simpson, Wadsworth, and Tawn]{simpsonetal2020}
Emma~S Simpson, Jennifer~L Wadsworth, and Jonathan~A Tawn.
\newblock Determining the dependence structure of multivariate extremes.
\newblock \emph{Biometrika}, 107\penalty0 (3):\penalty0 513--532, 2020.

\bibitem[Van~Driessche and Roose(1995)]{vandriesscheandroose1995}
Rafael Van~Driessche and Dirk Roose.
\newblock An improved spectral bisection algorithm and its application to
  dynamic load balancing.
\newblock \emph{Parallel Computing}, 21\penalty0 (1):\penalty0 29--48, 1995.

\bibitem[Von~Luxburg(2007)]{vonluxburg2007}
Ulrike Von~Luxburg.
\newblock A tutorial on spectral clustering.
\newblock \emph{Statistics and computing}, 17\penalty0 (4):\penalty0 395--416,
  2007.

\bibitem[Zhou and Amini(2019)]{zhouandamini2019}
Zhixin Zhou and Arash~A Amini.
\newblock Analysis of spectral clustering algorithms for community detection:
  the general bipartite setting.
\newblock \emph{The Journal of Machine Learning Research}, 20\penalty0
  (1):\penalty0 1774--1820, 2019.

\end{thebibliography}

\end{document}